\documentclass[11pt]{article}

\usepackage{times}



\newcommand{\vs}{}
\newcommand{\cnt}{k}
\newcommand{\pr}{{\rm prox}}
\newcommand{\eg}{\emph{e.g.}}

\renewcommand\epsilon\varepsilon
\newcommand{\smthpara}{\kappa} 
\newcommand{\smthparacvx}{\kappa_{\text{\rm cvx}}} 
\newcommand{\smthparainit}{\kappa_0} 
\newcommand{\env}{f_{\smthpara}}
\newcommand{\accx}{\tilde{x}} 
\newcommand{\proxx}{\bar{x}} 
                         
\newcommand{\weakcnx}{\rho} 
\newcommand{\argmin}{\operatornamewithlimits{argmin}} 

\newcommand{\ip}[1] {\langle #1 \rangle } 
\newcommand{\norm}[1] {\left \| #1 \right \|} 
\newcommand{\dist}{{\rm dist}} 
\newcommand{\R}{{\mathbb R}} 
\newcommand{\mtd}{{\mathcal M}} 
\newcommand{\proxi}{\text{prox}} 
\newcommand{\oR}{\overline \R} 


\newcommand{{\newalgosp}}{Basic 4WD-Catalyst~}
\newcommand{{\newalgo}}{Basic 4WD-Catalyst}
\newcommand{{\autonewalgosp}}{4WD-Catalyst~}
\newcommand{{\autonewalgo}}{4WD-Catalyst}
\newcommand{\linesearch}{{Auto-adapt}}

\newcommand{\mylabel}[2]{#2\def\@currentlabel{#2}\label{#1}}

\def\defin{\triangleq}
\usepackage{color}
\usepackage{graphicx} 
\usepackage{subfigure} 
\usepackage{enumitem}

\usepackage{amsmath}
\usepackage{amsthm}
\usepackage{amssymb}
\usepackage{array}
\usepackage{multirow}
\usepackage{bm}

\usepackage{tablefootnote}

\newtheorem{theorem}{Theorem}[section]
\newtheorem{proposition}[theorem]{Proposition}
\newtheorem{lemma}[theorem]{Lemma}
\newtheorem{remark}[theorem]{Remark}
\newtheorem{coro}[theorem]{Corollary}
\newtheorem{defn}[theorem]{Definition}

\usepackage[sort, numbers]{natbib}

\usepackage[letterpaper, margin=1.0in]{geometry}

\usepackage{algorithm}
\usepackage{algorithmic}
\usepackage{amsfonts}
\usepackage{hyperref}

\usepackage{tikz}
\usetikzlibrary{shapes}
\usetikzlibrary{decorations.markings}
\usepackage{pgfplots}


\newcommand{\TheTitle}{{Catalyst Acceleration \\ for Gradient-Based Non-Convex
Optimization}} 

\title{{\TheTitle}
}
\small{
\author{
   Courtney Paquette \\
   University of Waterloo \\
   \texttt{c2paquette@uwaterloo.ca} \\
   \and
   Hongzhou Lin \\
   MIT \\
   \texttt{hongzhou@mit.edu} \\
   \and
   Dmitriy Drusvyatskiy \\
   University of Washington \\
   \texttt{ddrusv@uw.edu} \\
   \and
   Julien Mairal \\ 
Inria\thanks{Univ. Grenoble Alpes, Inria, CNRS, Grenoble INP, LJK, 38000 Grenoble, France} \\
   \texttt{julien.mairal@inria.fr} \\
   \and
   Zaid Harchaoui \\
   University of Washington \\
   \texttt{zaid@uw.edu} \\
}
}

\begin{document} 
\maketitle

\begin{abstract}
We introduce a generic scheme to solve nonconvex optimization problems using
gradient-based algorithms originally designed for minimizing convex functions.
Even though these methods may originally require convexity to operate, the
proposed approach allows one to use them on weakly convex objectives, which covers
a large class of non-convex functions typically appearing in machine learning
and signal processing.
In general, the scheme is guaranteed to produce a stationary point with 
a worst-case efficiency typical of first-order methods, and when the objective
turns out to be convex, it automatically accelerates in the sense of Nesterov
and achieves near-optimal convergence rate in function values.
These properties are achieved without assuming any knowledge about the
convexity of the objective, by automatically adapting to the unknown weak
convexity constant.  We conclude the paper by showing promising experimental
results obtained by applying our approach to incremental algorithms such as
SVRG and SAGA for sparse matrix factorization and for learning neural networks.
\end{abstract}

\section{Introduction}
We consider optimization problems of the form
\begin{equation}
\label{eq:main-obj}
   \min_{x \in \R^p} \: \left \{ f(x) \defin \frac{1}{n}\sum^n_{i=1}
     f_i(x) +\psi(x) \right \} \,.
\end{equation}
Here each function $f_i\colon\R^p\to \R$ is smooth, the regularization
$\psi\colon\R^p\to \oR$ may be nonsmooth, and we consider the extended
real number system $\oR := \R \cup \{\infty\}$. By considering
extended real-valued functions, this
composite setting also encompasses constrained minimization by letting
$\psi$ be the indicator function of the constraints on~$x$. 
Minimization of regularized empirical risk objectives of form~(\ref{eq:main-obj}) is central in machine learning.
Whereas a significant amount of work has been devoted to this composite setting
for convex problems, leading in particular to fast incremental
algorithms~\citep[see, \eg,][]{saga,conjugategradient,miso,sag,woodworth:srebro:2016,proxsvrg},
the question of minimizing efficiently~(\ref{eq:main-obj}) when the
functions~$f_i$ and~$\psi$ may be nonconvex is still largely open today.

Yet, nonconvex problems in machine learning are of high interest.  For
instance, the variable $x$ may represent the parameters of a neural network,
where each term $f_i(x)$ measures the fit between~$x$ and a data point indexed
by~$i$, or~(\ref{eq:main-obj}) may correspond to a nonconvex matrix
factorization problem (see Section~\ref{sec:exp}).
Besides, even when the data-fitting functions~$f_i$ are convex, it is also typical
to consider nonconvex regularization functions~$\psi$, for example for
feature selection in signal processing~\citep{htw:2015}. In this work,
we address two questions from nonconvex optimization:
\begin{enumerate}
\item How to apply a method for convex optimization to a nonconvex
  problem?
\item How to design an algorithm which does not need to know whether
  the objective function is convex while obtaining the optimal
  convergence guarantee if the function is convex?
\end{enumerate}

Several works attempted to transfer ideas from 
the convex world to the nonconvex one, see, \eg, \cite{GL1,GL2}.
Our paper has a similar goal and studies the extension of Nesterov's
acceleration for convex problems~\citep{nesterov1983} to nonconvex
composite ones. For $C^1$-smooth and nonconvex problems, gradient
descent is optimal among first-order methods in terms of information
based complexity to find an $\varepsilon$-stationary point \cite[Theorem 2, Sec. 5]{GD_optimal}. Without additional assumptions, worst case complexity for
first-order methods can not achieve better than
$\mathcal{O}(\varepsilon^{-2})$ oracle queries \cite{Cartis2010,Cartis2014}.
Under a stronger assumption that the objective function is $C^2$-smooth, 
state-of-the-art methods~\citep[\eg,][]{CDHS} 
achieve a marginal gain with complexity $O(\varepsilon^{-7/4}\log(1/\varepsilon))$, and do not appear to generalize to composite 
or finite-sum settings.
For this reason, our work fits within a broader stream of recent research on
methods that \emph{do not perform worse than gradient descent in the nonconvex case}
(in terms of worst-case complexity), while \emph{automatically accelerating for
minimizing convex functions}. The hope when applying such methods to nonconvex problems
is to see acceleration in practice, by heuristically exploiting convexity that is ``hidden''
in the objective (for instance, local convexity near the optimum, or convexity along the trajectory of iterates).

The main contribution of this paper is a \emph{generic} meta-algorithm, dubbed \autonewalgo, which is able to use a \emph{gradient-based}
optimization method~$\mtd$, originally designed for convex problems, and turn it
into an accelerated scheme that also applies to nonconvex objective functions.
The proposed \autonewalgosp can be seen as a $\bf{4}$-{\textbf{W}}heel-{\textbf{D}}rive extension of Catalyst~\cite{catalyst,catalyst_new} to all optimization ``terrains''
(convex and nonconvex). 
Specifically, without knowing whether the objective function is convex or not, our algorithm takes a method~$\mtd$ designed for convex optimization problems with the same structure as~(\ref{eq:main-obj}), 
\eg, SAGA~\cite{saga}, SVRG~\cite{proxsvrg}, and apply $\mtd$ to a sequence of sub-problems such that it asymptotically provides a stationary point of the nonconvex objective.
Overall, the number of iterations of~$\mtd$ to obtain a gradient norm smaller than $\varepsilon$ is $\tilde{O}(\varepsilon^{-2})$ in the worst case, while automatically reducing to
$\tilde{O}(\varepsilon^{-2/3})$ if the function is convex.\footnote{In this section, the notation $\tilde{O}$ only displays the polynomial dependency with respect to $\varepsilon$ for the clarity of exposition.}

\vspace*{-0.2cm}
\paragraph{Related work.}
Inspired by Nesterov's acceleration method for convex optimization~\citep[][]{nesterov}, 
the first accelerated method performing universally well for nonconvex and convex problems was introduced in~\cite{GL1}. 
Specifically, this work addresses composite
problems such as~(\ref{eq:main-obj}) with $n=1$, 
and, provided the iterates are bounded, it performs no worse than gradient
descent on nonconvex instances with complexity $O(\epsilon^{-2})$ on the gradient norm.
When the problem is convex, it accelerates with complexity
$O(\epsilon^{-2/3})$.  Extensions to accelerated Gauss-Newton type methods were also recently developed in~\cite{accel_prox_comp}.
In a follow-up work, the authors of~\cite{GL2} proposed a new scheme that 
monotonically interlaces proximal gradient descent steps and
Nesterov's extrapolation; thereby achieving similar guarantees as~\cite{GL1}
but without the need to assume the iterates to be bounded.
Extensions when the gradient of $\psi$ is only H\"{o}lder continuous
can also be devised. Whether accelerated methods are superior to
gradient descent remains an open question in the nonconvex setting; their performance escaping
saddle points faster than gradient descent has been observed \cite{AGD_saddle_points_Jordan,AG_near_crit_pts}. 

In~\cite{NIPS2015_5728}, a similar strategy is proposed, focusing instead on convergence guarantees 
under the so-called Kurdyka-{\L}ojasiewicz inequality---a property corresponding to polynomial-like growth of the function, as shown by \cite{error_KL}.
Our scheme is in the same spirit as these previous papers, since it
monotonically interlaces proximal-point steps (instead of proximal-gradient as in~\cite{GL2}) and
extrapolation/acceleration steps.  A fundamental difference is that our method is generic and accommodates inexact computations, since we
allow the subproblems to be approximately solved by any method we wish to accelerate.

By considering $C^2$-smooth nonconvex objective functions~$f$ with Lipschitz continuous gradient $\nabla f$ and Hessian $\nabla^2 f$, the authors of~\cite{CDHS} 
propose an algorithm with complexity
$O(\epsilon^{-7/4}\log(1/\epsilon))$, based on iteratively solving convex subproblems closely related to the original
problem. It is not clear if the complexity of their algorithm improves in the convex setting. 
Note also that the algorithm proposed in~\cite{CDHS} is inherently for $C^2$-smooth minimization and requires exact gradient evaluations.
This implies that the scheme does not allow incorporating nonsmooth regularizers and can not exploit finite sum structure. 

In \cite{reddi2016proximal}, stochastic methods for
minimizing \eqref{eq:main-obj} using variants of SVRG \cite{JT_SVRG}
and SAGA \cite{saga}. Their scheme works for both convex
and nonconvex settings and achieves convergence
guarantees of $O(Ln/\varepsilon)$ (convex) and $O(n^{2/3}
L/\varepsilon^{2})$ (nonconvex). Although for nonconvex problems 
our scheme guarantees a rate of $\tilde{O} \left (
  \frac{nL}{\varepsilon^2} \right )$, it enjoys the optimal
accelerated rate in the convex setting (See
Table~\ref{ncvx_catalyst:tbl: exist_alg_conv_rates_main}). The
empirical results of \cite{reddi2016proximal} used a step size of
$1/L$, but their theoretical analysis without mini-batching required a much smaller step-size,
$1/(n^{2/3}L)$, whereas our analysis is able to use the larger $1/L$ stepsize. 

\begin{table}[hbtp]\label{ncvx_catalyst:tbl: exist_alg_conv_rates_main}
   \renewcommand{\arraystretch}{2.2}
   \vspace*{-0.1cm}
\centering
\setlength\tabcolsep{2pt}
   \begin{tabular}{|p{2.2cm}| c | c  | c |}
	\hline
& \begin{minipage}{0.1 \textwidth} \begin{center}
    Theoretical \\ stepsize \end{center} \end{minipage} & {Nonconvex}  & {Convex}\\
\hline
SVRG \cite{proxsvrg} & $\displaystyle O\left (\frac{1}{L} \right )$ & not avail.  & $\displaystyle O \left (n \frac{L}{\varepsilon} \right )$ \\
\hline
\begin{minipage}{0.2\textwidth} ncvx-SVRG \\ 
     \cite{allen2016variance,reddi2016stochastic,reddi2016proximal}\end{minipage}&$\displaystyle O\left (\frac{1}{n^{2/3} L} \right
       )$& $\displaystyle O\left (  \frac{n^{2/3} L}{\varepsilon^2} \right )$ & $\displaystyle O\left ( \sqrt{n} \frac{ L}{\varepsilon} \right )$  \\
\hline
\begin{minipage}{0.2\textwidth} 
4WD-Catalyst \\\quad-SVRG
\end{minipage}
&$\displaystyle O\left (\frac{1}{L} \right )$ &$\displaystyle
                                                \tilde{O} \left (
                                                \frac{n
                                                L}{\varepsilon^2}
                                                \right )$ &
                                                            $\displaystyle
                                                            \tilde{O}
                                                            \left (
                                                            \sqrt{
                                                            \frac{n
                                                            L}{\varepsilon}
                                                            }\right )$ \\
\hline
\end{tabular}
\small
\caption{Comparison of rates of convergence when applying
  \autonewalgo~ to SVRG. In the convex case, we present the
  complexity in terms of number of iterations to obtain a point $x$ satisfying
  $f(x) - f^* < \varepsilon$. In the nonconvex case, we consider instead the guarantee $\text{dist}(0, \partial
 f(x)) <\varepsilon$. Note that the theoretical stepsize of ncvx-SVRG
 is much smaller than that of our algorithm and of the original SVRG. In
 practice, the choice of a small stepsize significantly slows down the
 performance (see Section~\ref{sec:exp}), and ncvx-SVRG is often
 heuristically used with a larger stepsize in practice, which is not
 allowed by theory, see~\cite{reddi2016proximal}. A mini-batch version
 of SVRG is also proposed there, allowing large stepsizes of $O(1/L)$,
 but without changing the global complexity. A similar table for
 SAGA~\cite{saga}, gradient descent, and randomized coordinate descent
 is provided in Table~\ref{ncvx_catalyst:tbl: exist} of Section~\ref{sec:existing_alg}. }
   \vspace*{0.1cm}
\end{table}

A stochastic scheme for minimizing \eqref{eq:main-obj} under
the nonconvex but \emph{smooth} setting were recently considered in
\cite{SCSG_noncvx}. The method can be seen as a nonconvex variant of the
stochastically controlled stochastic gradient (SCSG) methods \cite{SCSG}. 
If the target accuracy is
small, then the method performs no worse than nonconvex SVRG
\cite{reddi2016proximal}. If the target accuracy is large, the
method achieves a rate better than SGD. The proposed scheme does not
incorporate nonsmooth regularizers and it is unclear whether numerically
the scheme performs as well as SVRG. 

Finally, a stochastic method related to SVRG~\cite{JT_SVRG} for
minimizing large sums while automatically adapting to the weak
convexity constant of the objective function is proposed in~\cite{natasha}. When the weak convexity constant is small (\emph{i.e.}, the function is
nearly convex), the proposed method enjoys an improved efficiency estimate.
This algorithm, however, does not automatically accelerate for convex problems,
in the sense that the overall rate is slower than $O(\epsilon^{-3/2})$ in terms
of target accuracy $\epsilon$ on the gradient norm. 

\paragraph{Organization of the paper.}
Section~\ref{sec:weak_conv} presents mathematical tools for non-convex
and non-smooth analysis, which are used throughout the paper. We
provide a discussion of related works for solving the nonconvex
and nonsmooth problem \eqref{eq:main-obj} in Section~\ref{sec:related_works_wc}. 
In Sections~\ref{sec:algo} and~\ref{sec:autoalgo}, we introduce the
main algorithm and important extensions,
respectively. Section~\ref{sec:existing_alg} presents global
convergence guarantees of the scheme and convergence guarantees when
the algorithm wraps specific algorithms such as SAGA, SVRG, and
randomized coordinate descent. Finally, we present experimental results on matrix factorization and training
of neural networks in Section~\ref{sec:exp}.
\section{Tools for nonconvex 
  optimization}\label{sec:weak_conv} 
 In this paper, we focus on a broad class of nonconvex functions known as \emph{weakly convex} functions, which covers most of the cases of interest in machine learning and signal processing. 

 \subsection{Weakly-convex functions}
Weakly convex functions have appeared in a wide variety of contexts, and under
different names. Some notable examples are globally
lower-$C^2$ \cite{rock_subsmooth}, prox-regular \cite{prox_reg_var_anal},
proximally smooth functions \cite{prox_smooth_equiv_rock}, and those functions
whose epigraph has positive reach \cite{pos_reach}. 
 We recall here basic definitions and classical results.
\begin{defn}[Weak convexity]
	{\rm 
	A function $f\colon\R^p\to \oR$ is {\em  $\weakcnx-$weakly convex} if for
	any points~$x,y$ in~$\R^p$ and for any $\lambda$ in $[0,1]$, the approximate secant inequality holds:}
\begin{equation}	
\begin{aligned} \label{eq: weak_cnx_1}
	f(\lambda x + (1-\lambda) y) \le \lambda f(x) + (1-\lambda) f(y) 
	+\tfrac{\weakcnx \lambda (1-\lambda)}{2} \norm{x-y}^2. 
	\end{aligned}
\end{equation}
\end{defn}
\begin{remark}
	When $\weakcnx =0$, the above definition reduces to the classical definition of convex functions.
\end{remark}

\begin{figure}[h!] \centering{ \begin{minipage}{0.45\textwidth}
\begin{tikzpicture}
    \begin{axis}[
        domain=-25:25,
        xmin=-20, xmax=20,
        ymin=-30, ymax=150,
        samples=1000,
        axis y line=middle,
        axis x line=middle,
    ]
        \addplot+[mark=none, ultra thick]
        {-1*x*x-x-0.25+10*abs(x)+10*abs(x-8)+10*abs(x+8)-160}; 
    \end{axis}
\end{tikzpicture}
\end{minipage} \qquad \begin{minipage}{0.45\textwidth}
\begin{tikzpicture}
    \begin{axis}[
        domain=-25:25,
        xmin=-20, xmax=20,
        ymin=-30, ymax=150,
        samples=1000,
        axis y line=middle,
        axis x line=middle,
    ]
        \addplot+[mark=none, ultra thick]
        {10*abs(x)+10*abs(x-8)+10*abs(x+8)-160}; 
    \end{axis}
\end{tikzpicture}
\end{minipage}
\caption{Example of a weakly convex function. By adding an appropriate quadratic to the
  weakly convex function (left), we get the convex function on the
  right. }
}
\end{figure}
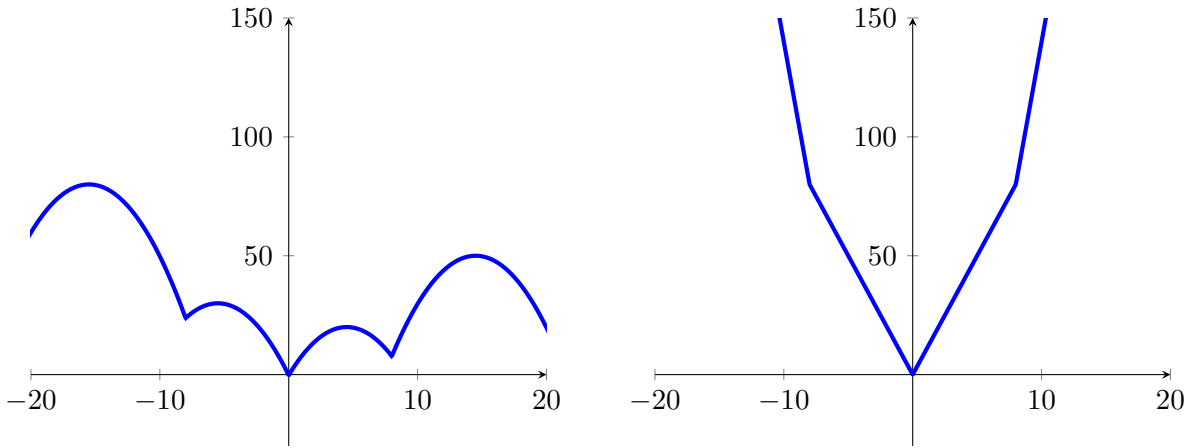

\begin{proposition} \label{prop: weak_cnx_char}
	A function $f$ is $\weakcnx-$weakly convex if and only if the function $f_\weakcnx$ is convex, where
	$$ f_\weakcnx(x) \defin f(x) + \tfrac{\weakcnx}{2} \norm{x}^2.$$
\end{proposition}

\begin{proof} A simple computation shows
\begin{equation}
\begin{aligned} \label{eq: weak_cnx_exp}
\tfrac{\weakcnx \lambda(1-\lambda)}{2} \|x-y\|^2 &=
                                                   \tfrac{\weakcnx}{2}
                                                   \lambda(1-\lambda)
                                                   \big ( \|x\|^2 +
                                                   \|y\|^2 - 2\ip{x,y}
                                                   \big )\\
&= \tfrac{\weakcnx \lambda}{2} \|x\|^2 + \tfrac{\weakcnx
  (1-\lambda)}{2} \|y\|^2 - \tfrac{\weakcnx \lambda^2}{2} \|x\|^2 -
  \tfrac{\weakcnx (1-\lambda)^2}{2} \|y\|^2 - \weakcnx \lambda
  (1-\lambda) \ip{x,y}\\
&= \tfrac{\weakcnx \lambda}{2} \|x\|^2 + \tfrac{\weakcnx
  (1-\lambda)}{2} \|y\|^2 - \tfrac{\weakcnx}{2} \|\lambda x +
  (1-\lambda) y\|^2.
\end{aligned}
\end{equation}
Suppose $f_{\weakcnx}(x)$ is convex. Then for any
  $\lambda \in [0,1]$ and $x,y \in \R^p$, we have
\begin{align*}
f_{\weakcnx}(\lambda x + (1-\lambda) y) &= f(\lambda x + (1-\lambda)y)
  + \tfrac{\weakcnx}{2} \| \lambda x + (1-\lambda) y \|^2\\
&\le \lambda f(x) + (1-\lambda
  ) f(y) + \tfrac{\weakcnx \lambda}{2} \| x\|^2 + \tfrac{\weakcnx  (1-\lambda)}{2} \|y\|^2.
\end{align*}
In order to prove the result, it suffices to show that 
\[ \tfrac{\weakcnx \lambda}{2} \|x\|^2 + \tfrac{\weakcnx
    (1-\lambda)}{2} \|y\|^2 -\tfrac{\weakcnx}{2} \|\lambda x +
  (1-\lambda) y\|^2 \le \tfrac{\weakcnx \lambda(1-\lambda)}{2}
  \|x-y\|^2.\]
This follows by rearranging the terms in Equation~\eqref{eq: weak_cnx_exp}. Next, we suppose
$f$ is $\rho$-weakly convex; hence Equation~\eqref{eq: weak_cnx_1}
holds. We observe that $\tfrac{\weakcnx \lambda (1-\lambda)}{2}
\|x-y\|^2$ can be rewritten as Equation~\eqref{eq: weak_cnx_exp}. As a
result, we conclude 
\begin{align*}
	f(\lambda x + (1-\lambda) y) \le \lambda f(x) + (1-\lambda) f(y) 
	+  \tfrac{\weakcnx \lambda}{2} \|x\|^2 + \tfrac{\weakcnx
  (1-\lambda)}{2} \|y\|^2 - \tfrac{\weakcnx}{2} \|\lambda x +
  (1-\lambda) y\|^2.
\end{align*}
Rearranging the terms, we get the desired result. 
\end{proof}

\begin{coro}
	If $f$ is twice differentiable, then $f$ is $\weakcnx$-weakly convex if and only if
$\nabla^2 f(x)\succeq -\weakcnx I$ for all $x$.
\end{coro}

\begin{proof} This follows from the observations that a twice
  differentiable function is convex if and only if $\nabla^2 f(x)
  \succeq 0$ and Proposition~\ref{prop: weak_cnx_char}.
\end{proof}

Intuitively, a function is weakly convex when it is ``nearly convex'' up to a quadratic function. This represents a complementary notion to strong convexity. 

\begin{proposition} \label{prop: weak_convex_Lip}
	If a function $f$ is differentiable and its gradient is Lipschitz continuous with Lipschitz parameter $L$, then $f$ is $L$-weakly convex.
\end{proposition}

\begin{proof} Since $f$ is differentiable and its gradient is
  $L$-Lipschitz, we observe for all $x,y \in \R^p$
\begin{align*}
f(y) &\ge f(x) + \ip{\nabla f(x), y-x} - \tfrac{L}{2} \norm{x-y}^2\\
&= f(x) + \ip{\nabla f(x), y-x} - \tfrac{L}{2} \norm{x}^2 + L \ip{x,y}
  - \tfrac{L}{2} \norm{y}^2 + L \norm{x}^2 - L \norm{x}^2\\
&= f(x) + \tfrac{L}{2} \norm{x}^2 + \ip{\nabla f(x) + Lx, y-x} -
  \tfrac{L}{2} \norm{y}^2.
\end{align*}
By rearranging the terms, we deduce
\[f_L(y) \ge f_L(x) + \ip{\nabla f_L(x), y-x}, \quad \text{for all
    $x, y \in \R^p$.}\]
Hence, we see the function $f_{\weakcnx}(x)$ is convex for $\weakcnx =
L$ and the result follows by
applying Proposition~\ref{prop: weak_cnx_char}.
\end{proof}

We remark that for most of the interesting machine learning problems, the smooth part of the objective function admits Lipchitz gradients, meaning that the function is in fact weakly convex. 

\subsection{Subdifferential}\label{sec:subgradient}
Convergence results for nonsmooth optimization typically rely on the concept of subdifferential. However, the generalization of the subdifferential to nonconvex nonsmooth function is not unique~\cite{borwein:lewis:2006}. With the weak convexity in hand, all these constructions coincide, and therefore we slightly abuse standard notation as set out in Rockafellar and Wets \citep{rock_wets}.  

\begin{defn}[Subdifferential]{\rm
		Consider a function $f\colon\R^p\to\oR$ and a point $x$ with $f(x)$ finite. The {\em subdifferential} of $f$ at $x$ is the set}
		\begin{equation*}
			\partial f(x)\!:=\!\{\xi \in \R^p: f(y)\!\geq \!f(
                x)+\xi^T(y-x) 
                 + o(\|y- x\|), \,\, \forall y \in \R^p \}.
		\end{equation*}
\end{defn}

Thus, a vector $\xi$ lies in $\partial f( x)$ whenever the linear
function $y\mapsto f( x)+\xi^T(y- x)$ is a lower-model of $f$, up to
first-order around $ x$. In particular, the subdifferential~$\partial
f(x)$ of a differentiable function~$f$ is the singleton~$\{\nabla
f(x)\}$; while for a convex function $f$ it coincides with the
subdifferential in the sense of convex analysis, see \cite[Exercise 8.8]{rock_wets}. Moreover, the following sum rule, 
$$\partial (f+g)(x)=\partial f(x)+\nabla g(x),$$ 
holds for any differentiable function $g$. 

In non-convex optimization, standard complexity bounds are derived to guarantee
\begin{equation*}
\text{\rm
  dist} \big (0, \partial f(x) \big ) \leq \varepsilon \; .
  \end{equation*}
Recall when $\varepsilon =0$, we are at a stationary point and
satisfy first-order optimality conditions. For functions that
are nonconvex, first-order methods search for points with small subgradients,
which does not necessarily imply small function values, in contrast to convex functions where the two criteria are much closer related. In our convergence analysis, we will use the following differential characterization of $\weakcnx$-weakly convex functions, which generalize classical properties of convex functions. 
\begin{theorem}[Differential characterization of $\weakcnx$-weakly
  convex functions] {\hfill \\ }
For any lower-semicontinuous function $f\colon\R^p\to\oR$, the following properties are equivalent:
\begin{enumerate}
	\item $f$ is $\weakcnx$-weakly convex.
	\item\label{it:subgrad} {\bf (subgradient inequality)} The inequality 	
	\begin{equation*} 
	f(y) \ge f(x) + v^T(y-x) - \frac{\weakcnx}{2}
	\norm{y-x}^2
	\end{equation*}
	holds for all $x,y \in \R^p$ and $v\in \partial f(x)$.
	\item\label{it:hypo_mon} {\bf (hypo-monotonicity)} The inequality $$(v-w)^T(x-y)\geq -\weakcnx\|x-y\|^2$$
	holds for all $x,y\in \R^p$ and $v\in \partial f(x)$, $w\in \partial f(y)$. 
\end{enumerate} 
\end{theorem}

\section{Related work on weakly convex functions} \label{sec:related_works_wc}
For many machine learning problems, the objective functions includes a
smooth component which is often assumed to have an $L$-Lipschitz
gradient. The precise relationship between the weak-convexity constant
$\weakcnx$ and the Lipschitz constant $L$ is given in
Proposition~\ref{prop: weak_convex_Lip}:
\begin{center}
\emph{If $f$ is differentiable and $\nabla f(x)$ is $L$-Lipschitz,
  then $f$ is $\weakcnx$-weakly convex
  for some $\weakcnx \le L$.}
\end{center}
Many functions with $L$-Lipschitz gradients
have weak-convexity constants $\weakcnx$ which are 
  smaller than $L$. Our goal is to develop a
method that exploits this property of the weak convexity constant for
nonconvex functions while obtaining optimal convergence rates for
convex problems.  
Up until now, nearly all the research for methods to solve the large finite sum
problem \eqref{eq:main-obj} have assumed $\weakcnx = 0$ (i.e. convex) or
$\weakcnx = L$. We provide a short, selective list of convergence guarantees for a few popular approaches.

\begin{itemize}
\item When $\weakcnx = 0$, \emph{Accelerated SVRG} \cite{Acc_SVRG_1,katyusha,Acc_SVRG_2} finds a point satisfying $f(x)-f^* \le
  \varepsilon$ after $\tilde{O} \left (
    \sqrt{nL/\varepsilon} \right )$ gradient computations.
\item When $\weakcnx = L$, \emph{SVRG} \cite{allen2016variance,reddi2016stochastic,reddi2016proximal} finds a point satisfying
  $\mathbb{E}(\text{dist}(0, \partial f(x) )) \le \varepsilon$ in
  $O(n^{2/3} L/\varepsilon^2)$ gradient computations.
\item When $\weakcnx = L$, \emph{Full Gradient Descent (FG)} finds a point satisfying
  $\norm{\nabla f(x)} \le \varepsilon$ after $O(nL/\varepsilon^2)$
  number of gradient computations.
\item When $\weakcnx = L$, \emph{Stochastic Gradient Descent (SGD)} finds a point
  satisfying $\mathbb{E} ( \norm{\nabla f(x)}) \le \varepsilon$ after
  $O(L/\varepsilon^2 + L C/\varepsilon^{4})$ number of gradient
  computations where $C$ is the variance of the stochastic
  gradient. This is under the assumption that $\varepsilon$ is small. 
\item When $\weakcnx = 0$, \emph{AdaGrad} \cite{adaGrad} uses regret guarantees in an online convex optimization setting. We
  are not aware of guarantees for convex optimization with finite-sum
  structure nor for non-convex optimization with finite-sum
  structure. 
\end{itemize}
To the best of our knowledge when $0 < \weakcnx \ll L$, it is unclear
whether FG, SGD, and SVRG~\cite{JT_SVRG} can take advantage of the weak
convexity constant. For notational convenience, we state all the
convergence results based on $\mathbb{E}(\text{dist}(0,\partial
f(x))) \le \varepsilon$. 

\subsection{Behavior of finite-sum optimization methods when the objective is nonconvex.} Stochastic
methods based on variance-reduced stochastic gradients have recently
been applied to nonconvex problems. The authors of \cite{reddi2016proximal} propose for instance stochastic methods for
minimizing \eqref{eq:main-obj} using variants of SVRG \cite{JT_SVRG}
and SAGA \cite{saga} under the assumption that $\weakcnx = L$. Their scheme works for both convex
and nonconvex settings and achieves convergence
guarantees of $O(Ln/\varepsilon)$ (convex) and $O(n^{2/3}
L/\varepsilon^{2})$ (nonconvex) and includes a minibatch variant. 

A stochastic scheme for minimizing large finite sum structure under
the nonconvex but \emph{smooth} setting were recently considered in
\cite{SCSG_noncvx}. In particular, they examine the problem setting
where the $f_i$ are differentiable, the function $\psi \equiv 0$, and
$\weakcnx = L$. Their observation was for low target accuracies
$\varepsilon$ (i.e., when $\varepsilon$ is \emph{not} $\varepsilon \ll n^{-1/2}$), SGD has similar or even better theoretical complexity
than FG and existing variance reduction methods. Hence, they developed
an algorithm that for low accuracy behaves better than SGD and for
high accuracy no worse than nonconvex SVRG
\cite{reddi2016proximal}. The method is a nonconvex variant of the
stochastically controlled stochastic gradient (SCSG) methods
\cite{SCSG}, attaining a convergence rate of $\tilde{O} \left (\min\{\varepsilon^{-10/3},
  n^{2/3} \varepsilon^{-2} \right )$ in gradient computations. 

Both the methods above assumed $\weakcnx =L$; however recently in \cite{natasha}, a stochastic method that
automatically adapts to the weak convexity constant of the objective
function was proposed. The method is related to SVRG~\cite{JT_SVRG}
and includes variants that use minibatching. The proposed stochastic
method finds a point~$x$ satisfying
$\mathbb{E}(\text{dist}(0, \partial f(x))) \le \varepsilon$ in $\tilde{O} \left ( \min \{
  n^{3/4}\sqrt{L\weakcnx}/\varepsilon^2, n^{2/3}(L^2
  \weakcnx)^{1/3}/\varepsilon^2  \} \right )$ stochastic gradient
computations $\nabla f_i(x)$. The author showed a dichotomy for the
weak convexity constant
$\weakcnx$: if $\weakcnx$ is
small, i.e. $\weakcnx < 
L/\sqrt{n}$, then the first term in the convergence guarantee
is smaller and if the $\weakcnx$ is large ($\weakcnx > L/\sqrt{n}$), the second term is smaller.  Up to logarithmic factors and
$\weakcnx = L$, it matches the best known rate established by
nonconvex SVRG~\cite{reddi2016proximal}. 

\subsection{Behavior of finite-sum optimization methods when the objective is convex.}\label{sec:behavior_cvx} 
For the stochastic methods previously considered in the nonconvex
setting, we note their \textit{convex} rates. In~\cite{reddi2016proximal}, for 
convex objectives, the methods attain convergence guaranties of
$O(nL/\varepsilon)$. In both \cite{natasha,SCSG_noncvx}, they
only focus on nonconvex problems, as such their convex rates are the
same as the nonconvex setting, that is, $O(\varepsilon^{-2})$. 
When the objective is assumed to be convex, methods often achieve
faster rates of convergence. Accelerated gradient methods designed by Nesterov \cite{nesterov} are known
to require $O \left ( nL^{2/3}/\varepsilon^{2/3} \right )$ number of
gradient computations to obtain near stationary point, $\norm{\nabla
  f(x)} \le \varepsilon$, but only $O(n
\sqrt{nL/\varepsilon})$ number of gradient computations to obtain a
near optimal point, $f(x)-f^* \le \varepsilon$ \cite{nest_optima}. This gap between the
two guarantees was resolved in \cite{nest_optima}. By adding a
regularization term and an \emph{additional known bound on
  $\norm{x_0-x^*}^2$}, one can improve the gradient complexity to $O(n
\sqrt{L/\varepsilon})$. Without such a bound on the distance to the
optimal solution set, it is unclear if one can improve the convergence
rate. We will assume throughout this paper that we do not know a bound on
$\norm{x^*-x_0}^2$ for \eqref{eq:main-obj}. 

The authors of \cite{SCSG_noncvx} based their work off a class of
algorithms called stochastically
controlled stochastic gradient (SCSG) methods \cite{SCSG}. In these methods, the 
functions $f_i$ are \emph{smooth} and \emph{convex}. The SCSG method satisfies: when the target accuracy is low, the method has the
same $O(\varepsilon^{-2})$ rate as SGD but with a small data-dependent
constant factor and when the target accuracy is high, the method has
the same rate as the best non-accelerated methods,
$O(nL/\varepsilon)$. 

\subsection{Our results} \label{sec:our_results} In this paper, we
design \emph{a generic method that performs no worse than gradient descent in the nonconvex case,
  while automatically accelerating for minimizing convex functions.}
In particular, we devise a \emph{single} algorithm which
adapts to the weak convexity constant if the objective is nonconvex,
while also obtaining the accelerated rate of
$O(\varepsilon^{-2/3})$ when the objective is convex. The hope is
that by applying such methods to nonconvex problems we 
see acceleration in practice by heuristically exploiting convexity
that is ``hidden'' in the objective function. Moreover, our
algorithm applies to incremental methods SVRG/SAGA \emph{and} randomized
coordinate descent. Designing such an acceleration scheme for possibly
nonconvex optimization problems is challenging. Whether convergence
guarantees for optimization
algorithms accelerated naively with classical Nesterov or momentum
acceleration match gradient descent on nonconvex problems remains an
open problem; yet in the vicinity of saddle points, accelerated
gradient methods escape faster than
gradient descent \cite{AGD_saddle_points_Jordan,AG_near_crit_pts}. Our scheme capitalizes on this
valuable observation.

First, we consider the
situation where the weak convexity constant $\weakcnx$ is known. By interlacing
incremental methods such as SVRG and SAGA, our proposed algorithm,
\newalgo -SVRG/SAGA, where~$\weakcnx$ is known, finds an $\varepsilon$-approximate stationarity point of
$f(x)$ in gradient complexity
\[\text{if $f$ is nonconvex,} \, \, \, \, \tilde{O} \left
    ( \frac{n\weakcnx}{\varepsilon^2} \right ) \qquad \text{and}
  \qquad \text{if $f$ is convex,} \, \, \, \, \displaystyle \tilde{O} \left
    ( \frac{n^{2/3} L^{2/3}}{ \varepsilon^{2/3}} \right ).\]
Moreover if the objective is convex, \newalgo -SVRG/SAGA, finds a point
satisfying $\mathbb{E}(f(x)) - f^* < \varepsilon$ in at most $\displaystyle
\tilde{O} \left (\sqrt{nL / \varepsilon} \right )$. Despite a worse
dependence on $n$ than \cite{natasha, SCSG_noncvx,reddi2016proximal}, our scheme, like that of~\cite{natasha},
highlights the dependence on $\weakcnx$, which one does not see from
the convergence guarantees of FG
or its proximal variant \cite{Cartis2014}. Moreover, \newalgo -SVRG/SAGA obtains
convergence guarantee in the convex setting rivaling accelerated SVRG
(convex) methods \cite{Acc_SVRG_1,katyusha,Acc_SVRG_2}. 

It is common in machine learning problems for the weak convexity
constant to be unknown. Previous work in the area, namely
\cite{natasha}, required the parameter $\weakcnx$ to be specified (see Line 8 in
Algorithms 1 and 2 of \cite{natasha}). Our second method, \autonewalgo
-SVRG/SAGA, \emph{incorporates
a procedure that eliminates the need to specify the weak convexity
constant $\weakcnx$}. The resulting method, \autonewalgo -SVRG/SAGA,  finds an $\varepsilon$-approximate stationarity point of
$f(x)$ in gradient complexity
\[\text{if $f$ is nonconvex,} \, \, \, \, \tilde{O} \left
    ( \frac{n L}{\varepsilon^2} \right ) \qquad \text{and}
  \qquad \text{if $f$ is convex,} \, \, \, \, \displaystyle \tilde{O} \left
    ( \frac{n^{2/3} L^{2/3}}{ \varepsilon^{2/3}} \right ).\]
The scheme, \autonewalgo -SVRG/SAGA, also finds such a
solution, in the convex regime, in $\displaystyle \tilde{O} \left
  (\sqrt{nL / \varepsilon} \right )$. 

We also apply \newalgosp to randomized coordinate descent, denoted
\autonewalgo -Rand. CD. Here we assume the objective function $f$ is
smooth and its gradient satisfies $|\nabla f(x+te_i)_i - (\nabla f(x))_i| <
L_i|t|$. We denote $L_{\max}$ the max. of the coordinate Lipschitz
       constants for $\nabla f(x)$ and $p$ is the dimension of
       the domain of $f$. We show that \autonewalgo -Rand. CD attains an $\varepsilon$-near optimal
solution, in the convex regime, in $\displaystyle \tilde{O} \left
  (p \sqrt{L_{\max} / \varepsilon} \right )$. This agrees with the
results for accelerated randomized CD \cite{Wright_CD}. 
\section{The \newalgosp algorithm for non-convex
  optimization}\label{sec:algo}

We now present a generic scheme (Algorithm~\ref{alg: uniform_cat_nols}) for applying a convex optimization method to minimize 
\begin{equation}
\min_{x\in \R^p}~ f(x),
\end{equation}
 where $f$ is only $\weakcnx$-weakly convex and $f$ is lower bounded. Our goal is to develop a unified framework that automatically accelerates in convex settings. Consequently, the scheme must be agnostic to the constant $\rho$.

\subsection{\newalgosp: a meta algorithm}
At the center of our meta algorithm (Algorithm~\ref{alg: uniform_cat_nols}) are two sequences of subproblems obtained by
adding simple quadratics to $f$. The proposed approach extends the Catalyst acceleration of~\cite{catalyst,catalyst_new}
and comes with a simplified convergence analysis. We next describe in detail each step of the scheme.

\vs
\paragraph{Two-step subproblems.}
The proposed acceleration scheme builds two main sequences of iterates 
$(\proxx_\cnt)_{\cnt}$ and $(\accx_\cnt)_{\cnt}$,
obtained from approximately solving two subproblems. These subproblems 
are simple quadratic perturbations of the original problem $f$ having the form:
$$ \min_{x} \left \{ f_ {\smthpara} (x;y) : = f(x) + \frac{\kappa}{2} \norm{x-y}^2  \right \}.$$
Here, $\kappa$ is a regularization parameter and $y$ is called the {\em prox-center}. By adding the quadratic, 
we make the problem more ``convex'':  when $f$ is non convex, with a large enough $\smthpara$, the 
subproblem will be convex; when~$f$ is convex, we improve the conditioning of the problem.

At the $\cnt$-th iteration, given a previous iterate $x_{\cnt-1}$ and the extrapolation term $v_{\cnt-1}$, we 
construct the two following subproblems.
\begin{enumerate}
\item {\bf Proximal point step.}
We first perform an inexact proximal point step with prox-center $x_{\cnt-1}$: 
\begin{align*}
\proxx_\cnt \approx \argmin_x \env(x; x_{\cnt-1}) &\quad \text{[Proximal-point step]}
\end{align*}

\item 
 {\bf Accelerated proximal point step.}
Then we build the next prox-center $y_{\cnt}$ as the convex combination
\begin{equation}
   y_\cnt = \alpha_\cnt v_{\cnt-1} + (1-\alpha_\cnt) x_{\cnt-1}.  \label{eq:yk}
\end{equation}
Next, we use $y_{\cnt}$ as a prox-center and update the next extrapolation term: 
\begin{align} 
\accx_\cnt &\approx \argmin_x \env(x; y_\cnt) &
                                                \!\!\!\!\!\!\!\!\!\!\!\!\!\!\text{[Accelerated
                                                proximal-point step]} \nonumber \\
   v_\cnt &= x_{\cnt-1} + \tfrac{1}{\alpha_\cnt} (\accx_\cnt-x_{\cnt-1}) &\quad \text{[Extrapolation]}   \label{eq:vk}
\end{align}
where $\alpha_{\cnt+1} \in (0,1)$ is a sequence of coefficients satisfying $\small{(1-\alpha_{\cnt+1})/\alpha^2_{\cnt+1} = 1/{\alpha_{\cnt}}^2}$. 
Essentially, the sequences $(\alpha_{\cnt})_{\cnt}, (y_{\cnt})_{\cnt}, (v_{\cnt})_{\cnt}$ are built upon the extrapolation principles of Nesterov~\cite{nesterov}.

\end{enumerate}

\vs
\paragraph{Picking the best.}
At the end of iteration $k$, we have at hand two iterates, resp. $\proxx_\cnt$ and $\accx_\cnt$. Following~\cite{GL1}, we 
simply choose the best of the two in terms of their objective values, that is we choose $x_\cnt$ such that
\begin{equation*}
f(x_\cnt) \leq \min \, \{ f(\proxx_\cnt), f(\accx_\cnt) \} \; .
\end{equation*}
The proposed scheme blends the two steps in a synergistic way,
allowing us to recover the near-optimal rates of convergence 
in both worlds: convex and non-convex. 
Intuitively, when $\proxx_{\cnt}$ is chosen, it means that Nesterov's
extrapolation step ``fails'' to accelerate convergence.

\begin{algorithm}[!t]
	\caption{\newalgo}
	\label{alg: uniform_cat_nols}
	\begin{algorithmic}[1]
		\INPUT  Fix a point $x_0 \in \text{dom}
		\, f$, real numbers $\smthpara > 0$,
		and an optimization method $\mathcal{M}$.\\
		\textbf{initialization:} $\alpha_1 \equiv 1$, $v_0 \equiv x_0$.\\
		\textbf{repeat} for $\cnt =1, 2, \dots$
\begin{enumerate}
	 \item Choose $\proxx_\cnt$ using $\mathcal{M}$ such that
		\begin{equation}\label{eqn:prox_1}
		\proxx_\cnt \approx \argmin_x \env(x; x_{\cnt-1})
		\end{equation}
where $\text{\rm dist} \big (0, \partial
                        \env(\proxx_\cnt; x_{\cnt-1}) \big )<
			\smthpara \norm{\proxx_\cnt-x_{\cnt-1}}$ and
                        $\env(\proxx_\cnt; x_{\cnt-1}) \le
                        \env(x_{\cnt-1}; x_{\cnt-1})$.
			
				 \item Set 	
				\begin{equation}\label{eqn:accel_1}y_\cnt = \alpha_\cnt v_{\cnt-1} + (1-\alpha_\cnt) x_{\cnt-1}.		
		\end{equation}	
		\item Choose $\accx_\cnt$ using $\mathcal{M}$ such that
		\begin{equation}\label{eqn:accel_2}
		 \accx_\cnt \approx \argmin_x f_{\smthpara}(x; y_\cnt) \qquad\text{ where \qquad
			$\text{\rm dist}\big (0, \partial
                        f_{\smthpara}(\accx_\cnt; y_\cnt) \big ) <
                       \frac{\smthpara}{\cnt+1}
                       \norm{\accx_{\cnt}-y_{\cnt}} $ } .
		\end{equation}
		
		 \item  Set 
		\begin{equation}\label{eqn:v_try} v_\cnt = x_{\cnt-1} + \frac{1}{\alpha_\cnt} (\accx_\cnt-x_{\cnt-1}).
		\end{equation}
		
		\item  Pick $\alpha_{\cnt+1} \in (0,1)$ satisfying
		\begin{equation}\label{eqn:accel_end} \frac{1-\alpha_{\cnt+1}}{\alpha^2_{\cnt+1}} = \frac{1}{\alpha_{\cnt}^2}.
		\end{equation} 
		 \item Choose $x_\cnt$ to be any point satisfying
		\begin{equation}\label{eqn:min_better}
		f(x_\cnt) \leq \min \, \{ f(\proxx_\cnt), f(\accx_\cnt) \} .
		\end{equation}
\end{enumerate}
\textbf{until} the stopping criterion $\text{\rm dist}\big(0, \partial
f(\proxx_\cnt) \big ) < \varepsilon$
	\end{algorithmic}
\end{algorithm}

\paragraph{Stopping criterion for the subproblems.} 
In order to derive complexity bounds, it is important to properly define the stopping criterion for the proximal subproblems. 
When the subproblem is convex, a functional gap like
$f_\smthpara (z; x) - \inf_z f_\smthpara(z; x)$ may be
used as a control of the inexactness, as in \cite{catalyst,catalyst_new}. Without
convexity, this criterion cannot be used since such quantities can
not be easily bounded. In particular, first order methods seek points whose
subgradient is small. Since small subgradients do
not necessarily imply small function values in a non-convex setting, first order methods
only test for near-stationarity is small subgradients. In
contrast, in the convex setting, small subgradients imply small function
values; thus a first order method in the convex setting can ``test''
for small function values. Hence, we cannot use a direct application of
Catalyst \cite{catalyst,catalyst_new} which uses the functional gap as a stopping criteria. Because we are working in the nonconvex
setting, we include a stationarity stopping criteria. 

We propose to use jointly the following two types of stopping criteria: 
\begin{enumerate}
\item Descent condition: $f_\smthpara(z;y) \leq f_\smthpara(y;y)$;
\item Adaptive stationary condition:  $\text{\rm dist} \big (0, \partial
                        \env(z; y) \big )< \smthpara \norm{z-y}$.
\end{enumerate}
Without the descent condition, the stationarity condition is
insufficient for defining a good stopping criterion
because of the existence of local maxima in nonconvex
problems. In the nonconvex setting, local maxima and local minima
satisfy this stationarity condition. The descent condition ensures the
iterates generated by the algorithm always decrease the value of the
objective function $f$; thus ensuring we move away from local maxima. The second criterion, adaptive stationary
condition, provides a flexible relative tolerance on termination of algorithm used for solving the subproblems; a detailed analysis is forthcoming. 

In \newalgo, we use both the stationary condition and the
descent condition as a stopping criteria to produce the point
$\proxx$: 
\begin{equation}
\text{\rm dist} \big (0, \partial
                        \env(\proxx_\cnt; x_{\cnt-1}) \big )<
			\smthpara \norm{\proxx_\cnt-x_{\cnt-1}} \text{
                          and }
                        \env(\proxx_\cnt; x_{\cnt-1}) \le
                        \env(x_{\cnt-1}; x_{\cnt-1}). \label{eq:prox_stop_criteria}
\end{equation}
For the point $\accx$, our ``acceleration'' point, we use a
modified stationary condition:
\begin{equation}
\text{\rm dist}\big (0, \partial
                        f_{\smthpara}(\accx_\cnt; y_\cnt) \big ) <
                       \frac{\smthpara}{\cnt+1}
                       \norm{\accx_{\cnt}-y_{\cnt}}.\label{eq:accx_stop_criteria}
\end{equation}
The $\cnt+1$ factor guarantees \newalgosp accelerates for the
convex setting. To be precise, Equation~\eqref{eq:important_sum} in
the proofs of Theorem \ref{theo:outerloop-ncvx-basic} and
  Theorem~\ref{theo:outerloop-cvx-basic} uses the factor $\cnt +1$ to ensure convergence. Note, we do not need the descent condition for
$\accx$, as the functional decrease in $\proxx$ is enough to ensure
the sequence $\{f(x_\cnt)\}_{\cnt \ge 1}$ is monotonically
decreasing.  

\vs
\subsection{Convergence analysis.}
We present here the theoretical properties of Algorithm~\ref{alg: uniform_cat_nols}. In this first stage, 
we do not take into account the complexity of solving the subproblems
\eqref{eqn:prox_1} and \eqref{eqn:accel_2}. For the next two theorems, we assume that  the
 stopping criteria for the proximal subproblems are satisfied at each iteration of Algorithm~\ref{alg: uniform_cat_nols}.

\begin{theorem}[Outer-loop complexity for \newalgo; non-convex
case] \label{theo:outerloop-ncvx-basic} 
Suppose that the function $f$ is lower bounded. For any $\kappa >0$ and $N \ge 1$, the iterates generated by
 Algorithm~\ref{alg: uniform_cat_nols}  
satisfy
\begin{align*}
\min_{j = 1, \hdots, N}  ~\text{\rm dist}^2 \big (0, \partial
  f(\proxx_j) \big ) \le \frac{8 \smthpara}{N} (f(x_0)-f^*).
\end{align*}
\end{theorem}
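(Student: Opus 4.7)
The proof is essentially a telescoping argument and requires only the two stopping-criterion guarantees for $\proxx_k$, combined with the ``pick-the-best'' rule. The plan is to bound $\dist^2(0,\partial f(\proxx_k))$ by a per-iteration functional decrease, then telescope over $k=1,\ldots,N$.

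\textbf{Step 1: Convert the subdifferential of $\env(\cdot;x_{k-1})$ into a subdifferential of $f$.} By definition $\env(x;y) = f(x) + \frac{\kappa}{2}\|x-y\|^2$, and the sum rule for subdifferentials (valid because the quadratic is smooth) gives $\partial \env(\proxx_k;x_{k-1}) = \partial f(\proxx_k) + \kappa(\proxx_k - x_{k-1})$. The adaptive stationarity condition in \eqref{eq:prox_stop_criteria} yields some $v \in \partial \env(\proxx_k;x_{k-1})$ with $\|v\| < \kappa\|\proxx_k - x_{k-1}\|$; writing $v = w + \kappa(\proxx_k - x_{k-1})$ with $w \in \partial f(\proxx_k)$, the triangle inequality gives
\begin{equation*}
\dist(0,\partial f(\proxx_k)) \le \|w\| \le \|v\| + \kappa\|\proxx_k - x_{k-1}\| < 2\kappa \|\proxx_k - x_{k-1}\|.
\end{equation*}

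\textbf{Step 2: Convert $\|\proxx_k - x_{k-1}\|^2$ into a functional decrease.} The descent condition in \eqref{eq:prox_stop_criteria} states $\env(\proxx_k;x_{k-1}) \le \env(x_{k-1};x_{k-1}) = f(x_{k-1})$, which unpacks to
\begin{equation*}
\frac{\kappa}{2}\|\proxx_k - x_{k-1}\|^2 \le f(x_{k-1}) - f(\proxx_k).
\end{equation*}
Combined with Step~1, squaring yields $\dist^2(0,\partial f(\proxx_k)) \le 4\kappa^2 \|\proxx_k - x_{k-1}\|^2 \le 8\kappa(f(x_{k-1}) - f(\proxx_k))$.

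\textbf{Step 3: Telescope using the ``pick-the-best'' rule.} By \eqref{eqn:min_better}, $f(x_k) \le f(\proxx_k)$, so $f(x_{k-1}) - f(\proxx_k) \le f(x_{k-1}) - f(x_k)$. Also, Step~2 applied at iteration $k$ shows $f(\proxx_k) \le f(x_{k-1})$, hence $f(x_k) \le f(x_{k-1})$ and the sequence $(f(x_k))_k$ is monotonically decreasing. Summing the bound from Step~2 over $k=1,\ldots,N$ telescopes:
\begin{equation*}
\sum_{k=1}^{N} \dist^2(0,\partial f(\proxx_k)) \le 8\kappa \sum_{k=1}^{N} (f(x_{k-1}) - f(x_k)) = 8\kappa(f(x_0) - f(x_N)) \le 8\kappa(f(x_0) - f^*).
\end{equation*}
Lower-bounding the left side by $N \cdot \min_{j\le N} \dist^2(0,\partial f(\proxx_j))$ gives the theorem.

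There is no real obstacle here; the only delicate point is Step~1, where one must be careful that the inexact stationarity is stated for the \emph{perturbed} subproblem $\env(\cdot; x_{k-1})$, not for $f$ itself, so the factor of $2$ (rather than $1$) appears in the final bound, and this factor of $2$ squared is exactly what produces the constant $8$ in the theorem.
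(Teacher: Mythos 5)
Your proof is correct and follows essentially the same path as the paper's: Step~1 reproduces the paper's Lemma~\ref{lem: prox_grad_near_station} inline (the sum rule plus triangle inequality giving the factor of $2$), Step~2 is the paper's inequality \eqref{eq: new_idea_10}, and Step~3 is the same telescoping/min-vs-average argument.
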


It is important to notice that this convergence result is valid for any $\smthpara$ and does not 
require it to be larger than the weak convexity parameter. As long as the stopping criteria for the proximal subproblems are 
satisfied, the quantities $\dist(0,\partial f(\bar{x_j}))$ tend to zero. The proof is inspired 
by that of inexact proximal algorithms~\cite{bertsekas:2015,guler:1991,catalyst} and appears in Appendix~\ref{sec:first}.

If the function $f$ turns out to be convex, the scheme achieves a faster convergence rate
both in function values and in stationarity:

\begin{theorem}[Outer-loop complexity, convex case] \label{theo:outerloop-cvx-basic} 
If the function $f$ is convex, then for any $\kappa>0$ and $N\geq 1$, the iterates generated by Algorithm~\ref{alg: uniform_cat_nols}
satisfy
\begin{equation} f(x_N)-f(x^*) \le \frac{4 \smthpara}{(N+1)^2}
\norm{x^*-x_0}^2,
\end{equation}
and 
\begin{align*}
\min_{j =1, \hdots, 2N}~ \text{\rm dist}^2 \big (0, \partial f(\proxx_j)
  \big ) \le
  \frac{32\smthpara^2}{N(N+1)^2} \norm{x^*-x_0}^2,
\end{align*}
where $x^*$ is any minimizer of the function $f$. 
\end{theorem}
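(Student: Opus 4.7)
The functional accuracy bound is the core of the argument, and it requires a Nesterov-style estimate-sequence analysis adapted to handle the inexactness of the accelerated subproblem. The plan is to work with the potential
\[
S_k \;:=\; \frac{1}{\alpha_k^2}\bigl(f(x_k)-f^*\bigr) \;+\; \frac{\kappa}{2}\|v_k-x^*\|^2,
\]
and to show $S_N$ is controlled by $\tfrac{\kappa}{2}\|x_0-x^*\|^2$ up to a universal constant. The derivation begins by invoking the stopping criterion \eqref{eq:accx_stop_criteria}: there exists $\xi_k\in\partial f_\kappa(\tilde{x}_k;y_k)$ with $\|\xi_k\|\le \tfrac{\kappa}{k+1}\|\tilde{x}_k-y_k\|$. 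Since $f$ is convex, $f_\kappa(\cdot;y_k)$ is $\kappa$-strongly convex, so writing down the strong-convexity lower bound at $x=x^*$ and $x=x_{k-1}$ and taking the convex combination with weights $\alpha_k$ and $1-\alpha_k$ yields a single inequality. The crucial algebraic identities are $y_k=\alpha_k v_{k-1}+(1-\alpha_k)x_{k-1}$, $\tilde{x}_k=\alpha_k v_k+(1-\alpha_k)x_{k-1}$, and consequently $\tilde{x}_k-y_k=\alpha_k(v_k-v_{k-1})$. Together with the ``three-point'' identity $\alpha\|x^*-y\|^2+(1-\alpha)\|x_{k-1}-y\|^2=\alpha(1-\alpha)\|x^*-x_{k-1}\|^2+\|\alpha x^* +(1-\alpha)x_{k-1}-y\|^2$, these collapse the convex combination into a clean recurrence.

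After dividing by $\alpha_k^2$ and applying the recursion $(1-\alpha_k)/\alpha_k^2=1/\alpha_{k-1}^2$, and bounding $f(x_k)\le f(\tilde{x}_k)$ via \eqref{eqn:min_better}, one arrives at
\[
S_k \;\le\; S_{k-1} \;-\; \tfrac{\kappa}{2}\|v_k-v_{k-1}\|^2 \;-\; \tfrac{1}{\alpha_k}\langle\xi_k,\,x^*-v_k\rangle.
\]
The inner product is the inexactness term. Using Cauchy–Schwarz together with $\|\xi_k\|\le\tfrac{\kappa}{k+1}\|\tilde{x}_k-y_k\|=\tfrac{\kappa\alpha_k}{k+1}\|v_k-v_{k-1}\|$ gives an error bounded by $\tfrac{\kappa}{k+1}\|v_k-v_{k-1}\|\cdot\|v_k-x^*\|$. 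Applying $2ab\le a^2+b^2$ with $a=\sqrt{\kappa/2}\,\|v_k-v_{k-1}\|$ and $b=\sqrt{\kappa/2}\,\|v_k-x^*\|/(k+1)$, the negative quadratic $\tfrac{\kappa}{2}\|v_k-v_{k-1}\|^2$ absorbs part of the error, leaving
\[
\bigl(1-\tfrac{1}{(k+1)^2}\bigr) S_k \;\le\; S_{k-1}.
\]
This is where the $1/(k+1)$ factor in \eqref{eq:accx_stop_criteria} is essential: it produces a summable perturbation. Telescoping the factors $(k+1)^2/(k(k+2))$ gives a finite product (a constant times $(N+1)/(N+2)$), and a separate argument at $k=1$ (where $\alpha_1=1$, $v_0=x_0=y_1$) shows $S_1\le \tfrac{4}{3}\cdot\tfrac{\kappa}{2}\|x_0-x^*\|^2$. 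Combining these with $1/\alpha_N^2\ge(N+1)^2/4$ (the standard bound from Nesterov's $\alpha_k$-recurrence) delivers the first inequality.

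For the gradient-norm bound, I would leverage the proximal stopping criterion \eqref{eq:prox_stop_criteria}: by the sum rule, $\mathrm{dist}(0,\partial f(\bar{x}_j))\le \|\zeta_j\|+\kappa\|\bar{x}_j-x_{j-1}\|<2\kappa\|\bar{x}_j-x_{j-1}\|$ for some $\zeta_j\in\partial f_\kappa(\bar{x}_j;x_{j-1})$, while the descent condition yields $\tfrac{\kappa}{2}\|\bar{x}_j-x_{j-1}\|^2\le f(x_{j-1})-f(\bar{x}_j)$. Combining, $\mathrm{dist}^2(0,\partial f(\bar{x}_j))\le 8\kappa\,(f(x_{j-1})-f(\bar{x}_j))$. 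Summing over $j=N+1,\dots,2N$ and using $f(x_j)\le f(\bar{x}_j)$ makes the right-hand side telescope into $8\kappa(f(x_N)-f^*)$, to which the already-proved functional rate applies; bounding the minimum by the average over these $N$ indices introduces the extra $1/N$ factor.

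The main obstacle is the first part: carefully executing the estimate-sequence algebra so that the inexactness term $\langle\xi_k,x^*-v_k\rangle$ is absorbed by the negative quadratic in $\|v_k-v_{k-1}\|^2$ generated by the strong convexity of $f_\kappa(\cdot;y_k)$, and verifying that the resulting multiplicative recursion on $S_k$ telescopes to a bounded constant. The gradient-norm bound, once the functional rate is in hand, is a short computation.
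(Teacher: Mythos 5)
Your proposal is correct and follows essentially the same estimate-sequence argument as the paper: the potential $S_k = \tfrac{1}{\alpha_k^2}(f(x_k)-f^*) + \tfrac{\kappa}{2}\|v_k-x^*\|^2$, the identity $\tilde{x}_k - y_k = \alpha_k(v_k - v_{k-1})$, absorbing the inexactness term into the negative quadratic via completion of the square (which is exactly where the paper's $\theta_k = 1/(k+1)$ factor enters), and the telescoping bound $\prod(1-\theta_j^2)^{-1} = O(1)$; the gradient-norm bound by summing over $j=N+1,\dots,2N$ is also the paper's argument verbatim. The only cosmetic differences are whether the $(1-\theta_k^2)$ factor is folded into the potential (paper) or left as a multiplicative factor in the recursion (yours), and whether the key inequality is obtained by direct substitution of the convex combination (paper) versus combining two strong-convexity inequalities with the three-point identity (yours) — both yield the same intermediate inequality.
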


The proof of Theorem \ref{theo:outerloop-cvx-basic} appears in Appendix~\ref{sec:first}. This theorem establishes a rate of $O(N^{-2})$
for suboptimality in function value and convergence in $O(N^{-3/2})$ for the
minimal norm of subgradients. 
The first rate is optimal in terms of information-based complexity
for the minimization of a convex composite
function~\cite{nesterov,nesterov2013gradient}. 
The second can be improved to
$O(N^{-2}\log(N))$ through a regularization technique, if one knew in advance that the function is convex and had an
estimate on the distance of the initial point to an optimal solution \cite{nest_optima}. 

\paragraph{Towards an automatically adaptive algorithm.} 
So far, our analysis has not taken into account the cost of obtaining the iterates $\bar{x}_j$ and $\tilde{x}_j$ by the algorithm $\mathcal{M}$.
We emphasize again that
the two results above do not require any assumption on $\smthpara$, which leaves us a degree of 
freedom. In order to develop the global complexity, we need to evaluate the total number of 
iterations performed by~$\mtd$ throughout the process. Clearly, this complexity heavily depends on the choice of  $\smthpara$, since it controls the magnitude of regularization we add to improve the 
convexity of the subproblem. This is the point where a careful analysis is needed, because our algorithm must 
adapt to $\weakcnx$ without knowing it in advance. The next section is
 entirely dedicated to this issue. In particular, we will explain how to automatically adapt the parameter $\kappa$ (Algorithm~\ref{alg:adap-at-catalyst}).
\section{The \autonewalgo~algorithm}\label{sec:autoalgo}
In this section, we work towards understanding the global efficiency
of Algorithm~\ref{alg: uniform_cat_nols}, which automatically adapts
to the weak convexity parameter. For this, we must take into account
the cost of approximately solving the proximal subproblems to the
desired stopping criteria. We expect that once the subproblem becomes 
strongly convex, the given optimization method $\mathcal{M}$ can solve
it efficiently. For this reason, we first focus on the computational cost
for solving the sub-problems, before introducing a new algorithm with known worst-case complexity.

\subsection{Solving the sub-problems efficiently}
When
$\smthpara$ is large enough, the subproblems become strongly convex;
thus globally solvable.
Henceforth, we will assume that $\mathcal{M}$ satisfies the following natural linear convergence assumption.

\paragraph{Linear convergence of $\mtd$ for strongly-convex problems.} \label{para: assumptions_M}
We assume that for any $\smthpara > \weakcnx$, there exist $A_{\smthpara} \ge 0$ and
$\tau_{\smthpara} \in (0,1)$ so that the following hold:
\begin{enumerate}
\item For any prox-center $y \in \R^p$ and initial $z_0 \in \R^p$ the
  iterates $\{z_t\}_{t \ge 1}$ generated by $\mathcal{M}$ on the
  problem $\min_z \env(z; y)$ satisfy 
\begin{equation} 
\text{dist}^2(0, \partial \env(z_t; y) ) \le A_{\smthpara}
  (1-\tau_{\smthpara})^t (\env(z_0; y) - \env^*(y) ), \label{eq:criteria} \end{equation}
where $\env(y)^* := \inf_z \env(z; y)$. If the method $\mtd$ is randomized, 
we require the same inequality to hold in expectation.
\item The rates $\tau_{\smthpara}$ and the constants $A_{\smthpara}$
  are increasing in $\smthpara$. 
\end{enumerate}

\begin{remark}
{\rm
Our assumption on the linear rate of convergence of $\mtd$ differs from the one considered by~\cite{catalyst,catalyst_new},
which was given in terms of function values. However, if the problem is a composite one, both points of
   view are near-equivalent, as discussed in
   Section~\ref{subsec:conv} and the precise relationship is given in Appendix~\ref{sec:appendix_global_comp}.
   We choose the norm of the subgradient as our measurement because the
   complexity analysis is easier. 
   }
\end{remark} 
Then, 
a straightforward analysis bounds 
the computational complexity to achieve an $\epsilon$-stationary point.
\begin{lemma}
Let us consider a strongly convex problem $\env(\cdot; y)$ and a linearly convergent method $\mtd$ generating a sequence of iterates $\{z_t\}_{t\geq 0}$. Define
$T(\varepsilon) = \inf \{ t\geq 1, \text{\rm dist} \big (0, \partial
\env(z_t; y) \big ) \leq \varepsilon \}, $  where~$\varepsilon$ is the target accuracy; then,
 \begin{enumerate}
  \item If $\mtd$ is deterministic, 
  $$  T(\varepsilon) \leq  \frac{1}{\tau_{\smthpara}} \log \left ( \frac{A_{\smthpara} \left (
        \env(z_0; y)- \env^*(y) \right )}{\varepsilon^2} \right ).$$
  \item  If $\mtd$ is randomized, then
  $$  \mathbb{E}\left [ T(\varepsilon) \right ]\leq  
  \frac{1}{\tau_{\smthpara}} \log \left ( \frac{A_{\smthpara} \left ( \env(z_0; y)- \env^*(y) \right )}{\tau_{\smthpara} \varepsilon^2} \right ).$$ see Lemma C.1 of \cite{catalyst}.
 \end{enumerate}
\end{lemma}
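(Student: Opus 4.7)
The bound is essentially a direct inversion of the linear convergence inequality~\eqref{eq:criteria}, so the plan is to first solve for $t$ explicitly in the deterministic case, and then lift to the randomized case via a Markov-type tail argument.

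For part~(1), the plan is to substitute the linear convergence guarantee
$$\text{dist}^2(0,\partial f_\kappa(z_t;y)) \le A_\kappa(1-\tau_\kappa)^t(f_\kappa(z_0;y)-f_\kappa^*(y))$$
and demand that the right-hand side be at most $\varepsilon^2$. Taking logarithms, using $-\log(1-\tau_\kappa)\ge \tau_\kappa$ for $\tau_\kappa\in(0,1)$, and rounding up to the next integer immediately yields the stated closed-form upper bound on $T(\varepsilon)$. The only mild subtlety is that we need $(f_\kappa(z_0;y)-f_\kappa^*(y))A_\kappa \ge \varepsilon^2$ for the logarithm to be nonnegative; if this fails, then $z_0$ already satisfies the criterion and $T(\varepsilon)=0$, so the bound holds trivially.

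For part~(2), since $\mathcal{M}$ is randomized, $T(\varepsilon)$ is itself a random stopping time and \eqref{eq:criteria} holds only in expectation. The plan is a standard tail-sum argument: write
$$\mathbb{E}[T(\varepsilon)] = \sum_{t=0}^{\infty}\mathbb{P}(T(\varepsilon)>t),$$
and bound $\mathbb{P}(T(\varepsilon)>t) \le \mathbb{P}(\text{dist}(0,\partial f_\kappa(z_t;y))>\varepsilon)$, since if the criterion is not met by step $t$ then in particular it is not met at step $t$. Markov's inequality combined with the linear convergence of the second moment yields
$$\mathbb{P}(T(\varepsilon)>t) \le \frac{A_\kappa(1-\tau_\kappa)^t(f_\kappa(z_0;y)-f_\kappa^*(y))}{\varepsilon^2}.$$
Splitting the sum at $t_0 := \lceil \tau_\kappa^{-1}\log(A_\kappa(f_\kappa(z_0;y)-f_\kappa^*(y))/(\tau_\kappa\varepsilon^2))\rceil$, the head contributes $t_0$, and the geometric tail $\sum_{t\ge t_0}(1-\tau_\kappa)^t = (1-\tau_\kappa)^{t_0}/\tau_\kappa$ is chosen precisely so the tail contribution is at most $1$, producing the claimed bound up to the additional constant absorbed in rounding.

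The main (and essentially only) obstacle is the randomized case, because naively one might try to invert \eqref{eq:criteria} in expectation, which does not control a stopping time. The key idea is therefore to combine the in-expectation linear convergence with Markov's inequality to get a pointwise tail bound on $T(\varepsilon)$, and then to choose the splitting point $t_0$ so that the geometric tail exactly balances the cost of pushing the probability below a constant; this is where the extra $\tau_\kappa$ factor inside the logarithm comes from, and matches the statement of Lemma~C.1 of~\cite{catalyst}.
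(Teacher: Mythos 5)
Your proposal is correct and follows essentially the same argument as the paper's cited reference: part~(1) inverts the linear convergence inequality~\eqref{eq:criteria} using $-\log(1-\tau_\kappa)\ge\tau_\kappa$, and part~(2) reproduces the tail-sum plus Markov argument that is precisely the content of Lemma~C.1 of~\cite{catalyst} (with the squared subgradient distance playing the role that the function gap plays there). The only imprecision — absorbing the $+1$ from rounding $t_0$ to an integer and the $O(1)$ geometric-tail contribution into the stated bound — is the same convention adopted by the paper and its reference, so there is no genuine gap.
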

As we can see, we only lose a factor in the log term by switching from deterministic to randomized algorithms. For the sake of 
simplicity, we perform our analysis only for deterministic algorithms and the analysis for randomized algorithms holds in
the same way in expectation.

We can now prove a global complexity bound for \newalgosp if the weak
convexity constant $\weakcnx$ is known. For this, we introduce $\smthparacvx$, a $\mathcal{M}$ dependent smoothing parameter
and set it in the same way as the smoothing parameter in
\cite{catalyst}. 
\begin{theorem}[Global convergence bounds for \newalgosp with $\rho$
  known]\label{thm:global_conv_known_rho} Suppose the weak convexity constant $\weakcnx$ is
  known and the function $f$ is lower bounded. We let $\tilde{O}$ hide universal constants and  logarithmic
dependencies in $A_L$, $A_{\smthparacvx}$, $L$, $\varepsilon$, $\smthparainit$,
$\smthparacvx$, and $\norm{x^*-x_0}^2$. Then, the following statements hold.
\begin{enumerate}
\item Algorithm \ref{alg: uniform_cat_nols} generates a point
  satisfying $\text{dist}(0, \partial f(x)) \le \varepsilon$ after at
  most
	$$\tilde{O}\left(\left(\tau_L^{-1}+\tau_{\smthparacvx}^{-1}\right)\cdot\frac{\weakcnx
            (f(x_0)-f^*)}{\varepsilon^2}\right)$$
	iterations of the method $\mathcal{M}$.
	\item If $f$ is convex, then Algorithm~\ref{alg: uniform_cat_nols} generates a point $x$
	satisfying  $\text{dist} \big (0, \partial f(x) \big ) \leq \varepsilon$ after at
	most $$\tilde{O}\left(\left(\tau_L^{-1}+\tau_{\smthparacvx}^{-1}\right)\cdot\frac{L^{1/3}\left(\smthparacvx\|x^*-x_0\|^2\right)^{1/3}}{\varepsilon^{2/3}}\right)
	$$
	iterations of the method $\mathcal{M}$. 
	\item If $f$ is convex, then Algorithm~\ref{alg: uniform_cat_nols}
	 generates a point $x$ satisfying  $f(x)-f^*\leq \varepsilon$
	after at most
	$$\tilde{O}\left(\left(\tau_L^{-1}+\tau_{\smthparacvx}^{-1}\right)\cdot \frac{\sqrt{\smthparacvx\|x^*-x_0\|^2}}{\sqrt\varepsilon}\right)
	$$
	iterations of the method $\mathcal{M}$.
\end{enumerate}
\end{theorem}

\paragraph{Bounding the required iterations when $\smthpara >  \weakcnx$ and restart strategy.} 
Recall that we add a quadratic to $f$ with the hope to make each subproblem convex. Thus, if $\weakcnx$ is known, then we 
should set $\smthpara > \weakcnx$. In this first stage, we show that whenever $\smthpara > \weakcnx$, then 
the number of inner calls to $\mathcal{M}$ can be bounded with a proper initialization.
Consider the subproblem 
\begin{equation} \label{eq:arbitarysubproblem}
   \min_{x \in \R^p} \left\{ \env (x;y)  = f(x) + \frac{\kappa}{2} \norm{x-y}^2 \right\},
\end{equation}
and define the initialization point $z_0$ by
\begin{enumerate}
 \item if $f$ is smooth, then set $z_0 = y$;
 \item if $f = f_0+\psi$ is composite, with $f_0$ $L$-smooth, then
   set $z_0 = \proxi_{\eta \psi} (y-\eta \nabla f_0(y))$ with $\eta \leq \frac{1}{L+\smthpara}$.
\end{enumerate}

\begin{theorem}\label{thm:inner complex, kappa>rho}
Consider the subproblem (\ref{eq:arbitarysubproblem}) and suppose $\smthpara > \weakcnx$. 
Then initializing $\mathcal{M}$ at the previous $z_0$ generates a sequence of iterates $(z_t)_{t \geq 0}$ such that 
\begin{enumerate}
\item  in at most $T_\smthpara$ iterations where
\[ T_\smthpara =  \frac{1}{\tau_\kappa} \log \left ( \frac{8A_\smthpara
      (L+\smthpara)}{(\smthpara-\weakcnx)^2} \right ),\]
 the output $z_T$ satisfies $\env(z_T; y) \le \env(z_0; y)$ (descent condition) and
  $\text{dist}(0, \partial \env(z_T;y) ) \le \smthpara \norm{z_T-y}$
  (adaptive stationary condition);
\item in at most $S_{\smthpara}\log(\cnt +1)$ iterations where
\[S_{\smthpara} \log(\cnt +1) = \frac{1}{\tau_\smthpara} \log \left (
  \frac{8A_\smthpara (L+\smthpara) (\cnt + 1)^2}{(\smthpara-\weakcnx)^2} \right ),\]
the output $z_S$ satisfies $\text{dist}(0, \partial \env(z_S;y)) \le \frac{\smthpara}{\cnt
    +1} \norm{z_S-y}$ (modified adaptive stationary condition).
\end{enumerate}
\end{theorem}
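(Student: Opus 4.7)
Plan: the whole argument rests on three ingredients. First, since $f$ is $\weakcnx$-weakly convex and $\smthpara>\weakcnx$, the subproblem $\env(\cdot;y)$ is $(\smthpara-\weakcnx)$-strongly convex with a unique minimizer $x^*$, and I will use the two standard consequences
\begin{align*}
\env(z;y) - \inf \env(\cdot;y) &\le \tfrac{1}{2(\smthpara-\weakcnx)}\dist^2(0,\partial \env(z;y)), \\
\norm{z-x^*} &\le \tfrac{1}{\smthpara-\weakcnx}\dist(0,\partial \env(z;y)),
\end{align*}
valid for every $z\in\R^p$. Second, the chosen initialization $z_0$ satisfies a ``smoothness-type'' bound $\env(z_0;y) - \inf \env(\cdot;y) \le \tfrac{L+\smthpara}{2}\norm{y-x^*}^2$, equivalently $\norm{y-x^*}^2 \ge \tfrac{2}{L+\smthpara}(\env(z_0;y) - \inf \env(\cdot;y))$: in the smooth case this is just $(L+\smthpara)$-smoothness of $\env$ applied at $z_0=y$, and in the composite case, where $z_0$ is one proximal-gradient step from $y$ with $\eta\le 1/(L+\smthpara)$, it follows from the classical prox-gradient contraction identity $\env(z_0;y)-\inf\env(\cdot;y)\le \tfrac{L+\smthpara}{2}(\norm{y-x^*}^2-\norm{z_0-x^*}^2)$. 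Third, writing $d_t:=\dist(0,\partial \env(z_t;y))$, the linear-convergence hypothesis on $\mtd$ yields $d_T^2 \le A_\smthpara(1-\tau_\smthpara)^T(\env(z_0;y)-\inf\env(\cdot;y))$.

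For Part~1, the descent condition $\env(z_T;y)\le \env(z_0;y)$ is immediate from the first strong-convexity relation as soon as $A_\smthpara(1-\tau_\smthpara)^T \le 2(\smthpara-\weakcnx)$, which is clearly implied by the stated $T_\smthpara$. For the adaptive stationary condition $d_T \le \smthpara\norm{z_T-y}$, the second strong-convexity relation combined with the triangle inequality gives $\norm{z_T-y} \ge \norm{y-x^*} - d_T/(\smthpara-\weakcnx)$, so it suffices to ensure $d_T \le \tfrac{\smthpara(\smthpara-\weakcnx)}{2\smthpara-\weakcnx}\norm{y-x^*}$. Squaring, inserting the lower bound on $\norm{y-x^*}^2$, and matching against the linear-convergence estimate reduces the requirement to $(1-\tau_\smthpara)^T \le \tfrac{2\smthpara^2(\smthpara-\weakcnx)^2}{A_\smthpara(L+\smthpara)(2\smthpara-\weakcnx)^2}$, which holds for $T\ge T_\smthpara$ after noting that $\smthpara^2/(2\smthpara-\weakcnx)^2 \ge 1/4$.

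Part~2 is the same calculation with target tolerance $\smthpara/(\cnt+1)$ in place of $\smthpara$; the only change is that the final sufficient inequality inherits an extra factor of $(\cnt+1)^2$ on the right-hand side, which produces exactly the $\log(\cnt+1)$ term in $S_\smthpara\log(\cnt+1)$. The main technical point I expect to be slightly delicate is the composite-case verification of the smoothness-type initialization bound: unlike the smooth case, it cannot be read off from smoothness of $\env$ directly and instead requires combining the prox optimality condition for $z_0$, the descent lemma applied to the smooth part $f_0+\tfrac{\smthpara}{2}\norm{\cdot-y}^2$, and the step-size constraint $\eta\le 1/(L+\smthpara)$. Every remaining step is routine bookkeeping once strong convexity, smoothness, and the linear-convergence hypothesis have been put in place.
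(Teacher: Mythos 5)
Your proposal is correct and follows essentially the same route as the paper's: establish the initialization bound $\env(z_0;y)-\inf\env \le \tfrac{L+\smthpara}{2}\norm{y-x^*}^2$ (this is the paper's Lemma~C.1), invoke linear convergence of $\mtd$, and convert the resulting bound on $\dist(0,\partial\env(z_t;y))$ relative to $\norm{y-x^*}$ into one relative to $\norm{z_t-y}$ via strong convexity and the triangle inequality (the paper isolates this conversion as a separate small lemma and applies it with $\varepsilon=(\smthpara-\weakcnx)/4$, whereas you fold it into the main calculation; the two are equivalent). Your calculation in fact yields a slightly better constant than needed, so the stated $T_\smthpara$ and $S_\smthpara\log(\cnt+1)$ comfortably suffice. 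One minor stylistic divergence worth noting: for the descent condition you argue via function values, combining linear convergence with $\env(z_T;y)-\inf\env \le \tfrac{1}{2(\smthpara-\weakcnx)}d_T^2$ to conclude $\env(z_T;y)\le\env(z_0;y)$; the paper instead shows $\env(z_T;y)\le\env(y;y)$ directly from the subgradient inequality for the $(\smthpara-\weakcnx)$-strongly convex $\env(\cdot;y)$ once $d_T\le\tfrac{\smthpara-\weakcnx}{2}\norm{z_T-y}$ is known. Both are valid; the paper's version gives the descent condition in the exact form used inside Algorithms~1--2 (decrease relative to $y$, not to $z_0$), while yours proves the theorem statement as literally worded and, together with the standard decrease property of the composite prox-gradient initialization, also recovers decrease relative to $y$.

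One small point in Part~2: replacing the tolerance $\smthpara$ by $\smthpara/(\cnt+1)$ does not literally scale the sufficient threshold by $1/(\cnt+1)$, because the triangle-inequality step turns the target tolerance $c$ into $\tfrac{c(\smthpara-\weakcnx)}{\smthpara-\weakcnx+c}\norm{y-x^*}$ and the denominator does not scale. You recover the clean $(\cnt+1)^2$ factor only after bounding $\smthpara-\weakcnx+c\le 2\smthpara-\weakcnx$ (i.e., $c\le\smthpara$). This is a one-line fix, but it should be said explicitly; as written the ``only change is a factor of $(\cnt+1)^2$'' glosses over it.
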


The proof is technical and is presented in Appendix~\ref{sec: complexity}. 
The lesson we learn here is that as soon as the subproblem becomes
strongly convex, it can be solved in almost a constant number of iterations. 
Herein arises a problem--the choice of the smoothing parameter $\smthpara$.
On one hand, when $f$ is already convex, we may want to choose $\smthpara$ 
small in order to obtain the desired optimal complexity. On the other hand, 
when the problem is non convex, a small $\smthpara$ may not ensure the 
strong convexity of the subproblems. Because of such different behavior 
according to the convexity of the function, we introduce an additional parameter
$\smthparacvx$ to handle the regularization of the extrapolation step.
Moreover, in order to choose a $\smthpara >  \weakcnx$ in the nonconvex case, 
we need to know in advance an estimate of $\weakcnx$. This is not an easy task for
large scale machine learning problems such as neural networks. Thus we 
propose an adaptive step to handle it automatically.

\begin{algorithm}[!t]
   \caption{\autonewalgo} \label{alg:adap-at-catalyst}
 \begin{algorithmic}[1]
    \INPUT{Fix a point $x_0\in \text{dom} \,
	    f$, real numbers $\smthparainit, \smthpara_{\text{cvx}}
            > 0$ and $T,S>0$, and an
	  opt. method $\mathcal{M}$.}\\
    \textbf{initialization:} $\alpha_1 = 1$, $v_0 = x_0$.\\

    \textbf{repeat} for $\cnt =1, 2, \dots$
\begin{enumerate}[wide,labelindent = 10pt]
     \item Compute $$(\proxx_\cnt, \smthpara_\cnt)
      =\text{\linesearch}~ (x_{\cnt-1},\smthpara_{\cnt-1}, T).$$ 
\item Compute $y_\cnt = \alpha_\cnt v_{\cnt-1} + (1-\alpha_\cnt) x_{\cnt-1}$ 
and apply $S\log(k+1)$ iterations of $\mathcal{M}$ to find
    \begin{equation}
       \accx_{\cnt} \approx \argmin_{x \in \R^p} f_{\smthpara_{\text{cvx}}}(x,y_k),
 \label{eq:accer_adapt_2}
\end{equation}    
      by using the initialization strategy described below~(\ref{eq:arbitarysubproblem}).
\item Update  $v_\cnt $ and $\alpha_{\cnt +1}$ by
$$ v_\cnt=x_{\cnt-1} + \tfrac{1}{\alpha_\cnt} (\accx_\cnt-x_{\cnt-1}) \quad \text{ and } 
\quad \alpha_{\cnt +1} = \frac{\sqrt{\alpha_\cnt^4+4\alpha_\cnt^2} - \alpha_\cnt^2}{2}.$$
\item Choose $x_\cnt$ to be any point satisfying $f(x_\cnt) = \min \{ f(\proxx_\cnt),
    f(\accx_\cnt) \}.$
\end{enumerate}
\textbf{until} the stopping criterion $\text{\rm dist} \big
(0, \partial f(\proxx_\cnt) \big ) < \varepsilon$
 \end{algorithmic}
\end{algorithm}

\subsection{\autonewalgo: adaptation to weak convexity} 
We now introduce~\autonewalgo, presented in Algorithm~\ref{alg:adap-at-catalyst}, which can automatically adapt to 
the \emph{unknown weak convexity} constant of the objective.
The algorithm relies on a procedure to automatically adapt to~$\weakcnx$, described in Algorithm~\ref{alg:grad}.  
 
The idea is to fix in advance a number of iterations $T$, let
$\mathcal{M}$ run on the subproblem for $T$ iterations, output the
point $z_T$, and check if a sufficient decrease occurs. 
We show that if we set $T =
\tilde{O}(\tau_L^{-1})$, where the notation $\tilde{O}$
hides logarithmic dependencies in $L$ and $A_L$, where $L$ is the Lipschitz constant of the smooth part of~$f$; 
then, if the
subproblem were convex, the following conditions would be guaranteed:
 \begin{enumerate}[itemsep=0pt,topsep=0pt,parsep=0pt]
\item Descent condition: $f_\smthpara(z_T;x) \leq f_\smthpara(x;x)$;
\item Adaptive stationary condition:  $\text{\rm dist} \big (0, \partial
                        \env(z_T; x) \big )\leq \smthpara \norm{z_T-x}. $
\end{enumerate}
 Thus, if either condition is not satisfied, then the subproblem is deemed not convex and 
 we double $\smthpara$ and repeat. The procedure yields an estimate of
 $\weakcnx$ in a logarithmic number of increases; see Lemma~\ref{lem: doubling_kappa}.

\paragraph{Relative stationarity and predefining $S$.} 
One of the main differences of our approach with the Catalyst algorithm of~\cite{catalyst,catalyst_new} is
to use a \emph{pre-defined} number of iterations, $T$ and $S$, for
solving the subproblems. 
We introduce $\smthparacvx$, a $\mathcal{M}$ dependent smoothing parameter
and set it in the same way as the smoothing parameter in
\cite{catalyst,catalyst_new}. 
The automatic acceleration of our algorithm when the problem is convex
is due to extrapolation steps in Step 2-3 of~\newalgo. 
We show that if we set $S = \tilde{O}\left (
  \tau_{\smthpara_{\text{cvx}}}^{-1} \right )$, where $\tilde{O}$
hides logarithmic dependencies in $L$, $\smthpara_{\smthpara}$, and $A_{\smthparacvx}$, then we
can be sure that, for convex objectives, 
\begin{equation}
\text{\rm dist}\big (0, \partial
                        f_{\smthpara_{\text{cvx}}}(\accx_\cnt; y_\cnt) \big ) <
                       \frac{\smthpara_{\text{cvx}}}{\cnt+1}
                       \norm{\accx_{\cnt}-y_{\cnt}}.\label{eq:accx_stop_criteria_new}
\end{equation}
This relative stationarity of $\accx_\cnt$, including the choice of $\smthparacvx$, shall be crucial to guarantee that the scheme accelerates in the convex setting. 
An additional $k+1$ factor appears compared to the previous adaptive stationary condition because we need higher accuracy for solving the subproblem 
to achieve the accelerated rate in $1/\sqrt{\epsilon}$. 

We shall see in the experiments that our strategy 
of predefining $T$ and $S$ works quite well. The
theoretical bounds we derive are, in general, too conservative; we
observe in our experiments
that one may choose
$T$ and $S$ significantly smaller than the theory suggests and still retain the
stopping criteria.

\begin{algorithm}[H]
   \caption{\linesearch~$(y,\smthpara, T)$}\label{alg:grad}
   \begin{algorithmic}[1]
      \INPUT $y\in \R^p$,  method $\mtd$,  $\smthpara > 0$, number of iterations $T$.\\
  	 \textbf{Repeat} Compute
  	 \begin{equation*}
  	z_T \approx \argmin_{z \in \R^p} \env (z ; y) .
  	\end{equation*}
  	by running $T$ iterations of $\mathcal{M}$
      by using the initialization strategy described below~(\ref{eq:arbitarysubproblem}). \\
	\textbf{If} $\env (z_{T};y) > \env(y;y)$ or $\text{dist}(\partial
	\env(z_{T};y),0) > \smthpara \norm{z_{T}-y}$, \\
	\textbf{then} go to repeat with $\smthpara \rightarrow 2 \smthpara$.\\ 
	\textbf{else} go to output.
	\OUTPUT  $(z_{T}, \smthpara)$.
   \end{algorithmic}
\end{algorithm}

To derive the global complexity results for \autonewalgosp that
match optimal convergence guarantees, we make a distinction between
the regularization parameter $\smthpara$ in the proximal point step and 
in the extrapolation step. 
For the proximal point step, we apply Algorithm~\ref{alg:grad} to adaptively produce 
a sequence of $\smthpara_\cnt$ initializing at $\smthparainit > 0$, an initial guess of $\weakcnx$. 
The resulting $\proxx_\cnt$ and $\smthpara_\cnt$ satisfy both the following inequalities:
\begin{equation}
\text{\rm dist} \big (0, \partial
                        f_{\smthpara_{\cnt}}(\proxx_\cnt; x_{\cnt-1}) \big )<
			\smthpara_\cnt \norm{\proxx_\cnt-x_{\cnt}} \text{
                          and }
                        f_{\smthpara_{\cnt}}(\proxx_\cnt; x_{\cnt-1}) \le
                        f_{\smthpara_{\cnt}}(x_{\cnt-1}; x_{\cnt-1}). \label{eq:prox_stop_criteria_new}
\end{equation}
For the extrapolation step,
we introduce the parameter $\smthparacvx$ which essentially depends on the Lipschitz 
constant $L$. The choice 
is the same as the smoothing parameter in \cite{catalyst,catalyst_new} and depends on
the method $\mathcal{M}$. With a similar predefined iteration strategy,
the resulting $\accx_\cnt$ satisfies the following inequality if the original 
objective is convex,
\begin{equation}
\text{\rm dist}\big (0, \partial
                        f_{\smthparacvx}(\accx_\cnt; y_\cnt) \big ) <
                       \frac{\smthparacvx}{\cnt+1}
                       \norm{\accx_{\cnt}-y_{\cnt}}.\label{eq:accx_stop_criteria_new2}
\end{equation}

\subsection{Convergence analysis} 
Let us next postulate that $T$ and $S$ are chosen large enough to guarantee that $\proxx_{\cnt}$ and $\accx_{\cnt}$ satisfy
conditions \eqref{eq:prox_stop_criteria_new} and \eqref{eq:accx_stop_criteria_new2} for the
corresponding subproblems, and see how the outer algorithm complexity
resembles the guarantees of Theorem~\ref{theo:outerloop-ncvx-basic}
and Theorem~\ref{theo:outerloop-cvx-basic}. The main technical
difference is that~$\smthpara$ changes at each iteration $\cnt$,
which requires keeping track of the effects of
$\smthpara_\cnt$ and $\smthparacvx$ on the proof. 

\begin{theorem}[Outer-loop complexity,
  \autonewalgo] \label{thm: computational_complex_result} Fix real constants $\smthparainit, \smthparacvx >
  0$, the function $f$ is lower bounded, and $x_0 \in \text{dom}~f$. Set $\smthpara_{\max}:=\max_{\cnt\geq 1} \smthpara_\cnt$. 
Suppose that the number of iterations $T$ is such that
$\proxx_{\cnt}$ satisfies \eqref{eq:prox_stop_criteria_new}. Define $f^* := \lim_{\cnt \to \infty}
f(x_{\cnt})$. Then for any $N \ge 1$, the iterates generated by Algorithm~\ref{alg:adap-at-catalyst} satisfy,
\begin{align*}
\min_{j = 1, \hdots, N}  ~\text{\rm dist}^2 \big (0, \partial
  f(\proxx_j) \big ) \le \frac{8 \smthpara_{\max}}{N} (f(x_0)-f^*).
\end{align*}
If in addition the function $f$ is convex and $S_\cnt$ is chosen so that
$\accx_{\cnt}$ satisfies \eqref{eq:accx_stop_criteria_new2}, then
\begin{align*}
\min_{j =1, \hdots, 2N}~ \text{\rm dist}^2 \big (0, \partial f(\proxx_j)
  \big ) \le
  \frac{32\smthpara_{\max} \smthparacvx}{N(N+1)^2} \norm{x^*-x_0}^2,
\end{align*}
and 
\begin{equation} f(x_N)-f(x^*) \le \frac{4 \smthparacvx}{(N+1)^2}
  \norm{x^*-x_0}^2 ,\end{equation}
where $x^*$ is any minimizer of the function $f$. 
\end{theorem}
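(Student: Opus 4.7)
The proof should mirror the arguments for Theorems \ref{theo:outerloop-ncvx-basic} and \ref{theo:outerloop-cvx-basic}, but with the added bookkeeping of a per-iteration smoothing constant $\smthpara_\cnt$ in the proximal-point step and a fixed $\smthparacvx$ in the accelerated step. The strategy is: (i) use the descent part of \eqref{eq:prox_stop_criteria_new} to telescope a weighted sum $\sum \smthpara_\cnt \|\proxx_\cnt - x_{\cnt-1}\|^2$, (ii) use the adaptive stationarity part of \eqref{eq:prox_stop_criteria_new} together with the sum rule $\partial \env(\proxx_\cnt;x_{\cnt-1}) = \partial f(\proxx_\cnt) + \smthpara_\cnt (\proxx_\cnt - x_{\cnt-1})$ to convert those $\|\proxx_\cnt - x_{\cnt-1}\|^2$ increments into bounds on $\dist^2(0,\partial f(\proxx_\cnt))$, and (iii) for the convex case, run a Nesterov-style argument on the $\accx_\cnt$ sequence where $\env[\smthparacvx](\cdot;y_\cnt)$ is $\smthparacvx$-strongly convex.

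For the non-convex bound, from the descent condition $\env[\smthpara_\cnt](\proxx_\cnt;x_{\cnt-1})\le \env[\smthpara_\cnt](x_{\cnt-1};x_{\cnt-1})$ I get $f(\proxx_\cnt) + \tfrac{\smthpara_\cnt}{2}\|\proxx_\cnt - x_{\cnt-1}\|^2 \le f(x_{\cnt-1})$, and since $f(x_\cnt)\le f(\proxx_\cnt)$, telescoping gives $\sum_{\cnt=1}^N \smthpara_\cnt \|\proxx_\cnt - x_{\cnt-1}\|^2 \le 2(f(x_0) - f^*)$. The stationarity condition produces some $w_\cnt\in\partial \env[\smthpara_\cnt](\proxx_\cnt;x_{\cnt-1})$ with $\|w_\cnt\|<\smthpara_\cnt\|\proxx_\cnt - x_{\cnt-1}\|$, and subtracting the quadratic's gradient yields $w_\cnt - \smthpara_\cnt(\proxx_\cnt - x_{\cnt-1})\in \partial f(\proxx_\cnt)$ of norm at most $2\smthpara_\cnt\|\proxx_\cnt - x_{\cnt-1}\|$. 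Squaring and summing,
\begin{equation*}
\sum_{\cnt=1}^N \dist^2(0,\partial f(\proxx_\cnt)) \le 4\sum_{\cnt=1}^N \smthpara_\cnt^2\|\proxx_\cnt - x_{\cnt-1}\|^2 \le 4\smthpara_{\max}\sum_{\cnt=1}^N \smthpara_\cnt\|\proxx_\cnt - x_{\cnt-1}\|^2 \le 8\smthpara_{\max}(f(x_0)-f^*),
\end{equation*}
and dividing by $N$ and taking the minimum gives the first claim.

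For the convex case, I will redo the proof of Theorem~\ref{theo:outerloop-cvx-basic} essentially verbatim, because the accelerated step uses the \emph{constant} parameter $\smthparacvx$ and the inexactness criterion \eqref{eq:accx_stop_criteria_new2} coincides with the one driving that earlier proof. Since $f$ is convex, $\env[\smthparacvx](\cdot;y_\cnt)$ is $\smthparacvx$-strongly convex, and the $\tfrac{1}{\cnt+1}$ factor on the subgradient tolerance was designed to make the resulting error terms summable in the estimate-sequence inequality (the step labeled \eqref{eq:important_sum} in the cited proof); since $f(x_\cnt)\le f(\accx_\cnt)$, the resulting $O(1/(N+1)^2)$ rate transfers to $f(x_N)-f(x^*)$.

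The subgradient-norm bound in the convex case comes by restricting the telescoping sum of step~(i) to the window $\cnt=N+1,\dots,2N$: this produces $\sum_{\cnt=N+1}^{2N}\smthpara_\cnt\|\proxx_\cnt-x_{\cnt-1}\|^2 \le 2(f(x_N)-f^*)\le \tfrac{8\smthparacvx}{(N+1)^2}\|x^*-x_0\|^2$, after which the same conversion as in step (ii) applied to this window, together with $\min_{j\le 2N}\le \min_{N<j\le 2N}\le \tfrac{1}{N}\sum_{\cnt=N+1}^{2N}$, yields the claimed $\tfrac{32\smthpara_{\max}\smthparacvx}{N(N+1)^2}\|x^*-x_0\|^2$ bound. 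The one delicate point to double-check will be that $\smthpara_{\max}$ is well-defined and finite, which follows from the adaptive doubling in \linesearch terminating in $O(\log(\smthpara/\smthparainit))$ steps once $\smthpara>\weakcnx$ (this uses Theorem~\ref{thm:inner complex, kappa>rho}); beyond that the proof is mechanical tracking of constants, and the main risk is simply an index-off-by-one in the Nesterov estimate sequence when the tolerance scales as $1/(\cnt+1)$.
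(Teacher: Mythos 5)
Your proposal is correct and follows essentially the same route as the paper's own proof: telescope the descent inequality $\smthpara_\cnt\|\proxx_\cnt - x_{\cnt-1}\|^2 \le 2(f(x_{\cnt-1})-f(x_\cnt))$, convert it to a subgradient bound via the sum rule (Lemma~\ref{lem: prox_grad_near_station}), rerun the Nesterov estimate-sequence argument of Theorem~\ref{theo:outerloop-cvx-basic} verbatim with the constant $\smthparacvx$, and then restrict the telescoping window to $j=N+1,\dots,2N$ to get the subgradient rate in the convex case. The one place where you are slightly more careful than the paper is in making the chain $\min_{j\le 2N}\le \min_{N<j\le 2N}\le \tfrac1N\sum_{N<j\le 2N}$ explicit, and in flagging that finiteness of $\smthpara_{\max}$ relies on the termination of \linesearch~(Lemma~\ref{lem:lscomplete}); both points are handled correctly.
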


\paragraph{Inner-loop Complexity}
In light of Theorem~\ref{thm: computational_complex_result}, we must now understand how to choose $T$ and $S$ as small as possible, while guaranteeing that
$\proxx_{\cnt}$ and $\accx_{\cnt}$ satisfy
\eqref{eq:prox_stop_criteria_new} and
\eqref{eq:accx_stop_criteria_new2} hold for each $\cnt$. 
The quantities~$T$ and
$S$ depend on the method $\mtd$'s convergence rate parameter $\tau_\smthpara$ which only depends on $L$ and $\smthpara$. 
For example, the convergence rate parameter $\tau_\smthpara^{-1} = (L+\smthpara) / \smthpara$ for gradient descent and 
 $\tau_\smthpara^{-1} = n+(L+\smthpara) / \smthpara$ for SVRG.
The values of $T$ and $S$ must be set
beforehand without knowing the true value of the weak convexity
constant $\weakcnx$. Using Theorem~\ref{thm:inner complex, kappa>rho},
we assert the following choices for $T$ and $S$. 

\begin{theorem}[Inner complexity for \autonewalgo~:
  determining the values $T$ and
  $S$] \label{thm: inner_complexity}
Suppose the stopping criteria are \eqref{eq:prox_stop_criteria_new}
and \eqref{eq:accx_stop_criteria_new2} as in
in Theorem~\ref{thm: computational_complex_result}, and choose $T$
and $S$ in Algorithm~\ref{alg:adap-at-catalyst} to be the smallest numbers satisfying
\[T \ge \frac{1}{\tau_L} \log \left (\frac{40 A_{4L}}{L} \right ),\]
and 
\begin{align*}
S\log(\cnt + 1) \geq \frac{1}{\tau_{\smthparacvx}} \log \left ( \frac{8
      A_{\smthparacvx}(\smthparacvx +
      L) (\cnt+1)^2 }{\smthparacvx^2} \right ),
\end{align*}
for all $k$. In particular, 
\begin{align*}
T&=O\left(\frac{1}{\tau_L} \log\left(A_{4L}, L \right)\right),\\
S&=O \left ( \frac{1}{\tau_{\smthparacvx}} \log(A_{\smthparacvx}, L, \smthparacvx)
   \right ).
\end{align*}
Then $\smthpara_{\max}\leq 4L$ and the following hold for any index $\cnt \ge 1$:
\begin{enumerate}
\item Generating  $\proxx_{\cnt}$ in Algorithm~\ref{alg:adap-at-catalyst} requires at most $\tilde{O}
  \left ( \tau_L^{-1} \right )$ iterations
  of $\mathcal{M}$;
\item Generating $\accx_{\cnt}$ in Algorithm~\ref{alg:adap-at-catalyst} requires at most $\tilde{O}
  \left ( \tau_{\smthparacvx}^{-1}  \right )$ iterations
  of $\mathcal{M}$.
\end{enumerate}
where $\tilde{O}$ hides universal constants and logarithmic
dependencies on $\cnt$, $L$, $\smthparacvx$, $A_L$, and $A_{\smthparacvx}$.
\end{theorem}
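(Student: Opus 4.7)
My plan is to feed the two prescribed constants $T$ and $S$ into Theorem~\ref{thm:inner complex, kappa>rho} and then track the doubling dynamics of \linesearch. First, because the smooth part $f_0$ is $L$-smooth (with $\psi$ convex, as is standard in the composite setting), the function $f$ is $\weakcnx$-weakly convex with $\weakcnx \leq L$. This bound on $\weakcnx$ is the linchpin of the entire argument: it lets me turn the $\kappa>\rho$ hypothesis of Theorem~\ref{thm:inner complex, kappa>rho} into the concrete, computable condition $\smthpara \geq 2L$.

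Next I would show that the prescribed $T$ suffices for every $\smthpara\in[2L,4L]$. For such $\smthpara$, using $\smthpara-\weakcnx\geq L$ and $L+\smthpara\leq 5L$, together with monotonicity of $\tau_\smthpara$ and $A_\smthpara$ in $\smthpara$, Theorem~\ref{thm:inner complex, kappa>rho}(1) gives
\[
T_\smthpara \;\leq\; \frac{1}{\tau_\smthpara}\log\!\left(\frac{8 A_\smthpara\cdot 5L}{L^2}\right) \;\leq\; \frac{1}{\tau_L}\log\!\left(\frac{40\,A_{4L}}{L}\right) \;=\; T.
\]
Hence whenever \linesearch\ is called with a smoothing parameter $\smthpara\in[2L,4L]$, the $T$ inner iterations of $\mathcal{M}$ necessarily produce a point satisfying both the descent and adaptive stationarity tests, so \linesearch\ accepts and returns. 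Because \linesearch\ only doubles $\smthpara$ upon rejection, the first accepted value is at most $4L$: any preceding rejected value was strictly below $2L$, and one doubling takes it to at most $4L$. Since $\smthpara_\cnt$ is non-decreasing across outer iterations (Algorithm~\ref{alg:adap-at-catalyst} passes $\smthpara_{\cnt-1}$ into the next call), this gives the global bound $\smthpara_{\max}\leq 4L$.

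With $\smthpara_{\max}\leq 4L$ in hand, the work to generate $\proxx_\cnt$ in a single outer iteration is at most $T$ times the number of doublings in that call, which is bounded by $\log_2(4L/\smthparainit)$. This product is $\tilde O(\tau_L^{-1})$, proving claim~1. For claim~2 the argument is direct: no doubling occurs in the extrapolation step, $\smthparacvx$ is fixed, and when $f$ is convex we have $\weakcnx=0$. Plugging $\weakcnx=0$ and $\smthpara=\smthparacvx$ into Theorem~\ref{thm:inner complex, kappa>rho}(2) yields precisely the required lower bound on $S\log(\cnt+1)$, so $\accx_\cnt$ satisfies \eqref{eq:accx_stop_criteria_new2} after $S\log(\cnt+1)=\tilde O(\tau_{\smthparacvx}^{-1})$ inner iterations, with the $\log(\cnt+1)$ absorbed into the $\tilde O$.

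The main obstacle is the first half: ensuring the constants line up so that the prescribed $T$ dominates $T_\smthpara$ uniformly over $\smthpara\in[2L,4L]$, and then pinning down the $\smthpara_{\max}\leq 4L$ bound by carefully separating the ``pre-doubling'' and ``post-acceptance'' regimes of \linesearch. Once $\weakcnx\leq L$ has been exploited to make $\smthpara-\weakcnx\geq L$ for $\smthpara\geq 2L$, Theorem~\ref{thm:inner complex, kappa>rho} does the remaining heavy lifting and the claims on $T,S$ follow by routine substitution.
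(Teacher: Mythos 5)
Your proposal is correct and follows essentially the same route as the paper: it feeds the prescribed $T$ and $S$ into Theorem~\ref{thm:inner complex, kappa>rho} with the bounds $\smthpara-\weakcnx\geq L$, $L+\smthpara\leq 5L$, $A_\smthpara\leq A_{4L}$, $\tau_\smthpara\geq\tau_L$, controls the number of doublings to get $\smthpara_{\max}\leq 4L$, and sets $\weakcnx=0$ for the convex extrapolation step—which is exactly what Proposition~\ref{prop: terminates}, Lemma~\ref{lem: doubling_kappa}, and Proposition~\ref{prop: inner_comp_cnx} assemble in the paper's proof. Your explicit use of $\weakcnx\leq L$ (from $L$-smoothness of $f_0$ and convexity of $\psi$) to replace the paper's condition $\smthpara>\weakcnx+L$ with the checkable $\smthpara\geq 2L$ is a minor clarification of something the paper leaves implicit in the bound $\smthpara_{\max}\leq 2(\weakcnx+L)\leq 4L$.
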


Appendix~\ref{sec: complexity} is devoted to proving
Theorem~\ref{thm: inner_complexity}, but we outline below the general
procedure and state the two main propositions (see
Proposition~\ref{prop: terminates} and Proposition~\ref{prop: inner_comp_cnx}).

We summarize the proof of Theorem~\ref{thm: inner_complexity} as followed:
\begin{enumerate}
\item \label{item:1} When $\smthpara > \weakcnx + L$, we compute the number of iterations of
$\mathcal{M}$ to produce a point satisfying \eqref{eq:prox_stop_criteria_new}. Such a
  point will become $\proxx_\cnt$.
\item \label{item:2} When the function $f$ is convex, we compute the number of
   iterations of $\mathcal{M}$ to produce a point which satisfies the \eqref{eq:accx_stop_criteria_new2}
   condition. Such a point will become the point $\accx_\cnt$. 
\item We compute the smallest number of times we must double $\smthparainit$
  until it becomes larger than $\weakcnx + L$. Thus eventually the
  condition $4L \ge \smthpara > \weakcnx + L$ will occur. 
\item We always set the number of iterations of $\mathcal{M}$ to produce
  $\proxx_{\cnt}$ and $\accx_{\cnt}$ as in Step~\ref{item:1} and Step~\ref{item:2}, respectively, regardless of whether 
  $\env(\cdot; x_{\cnt})$ is convex or $f$ is convex. 
\end{enumerate}

The next proposition shows that \linesearch~terminates with a suitable
choice for $\proxx_{\cnt}$ after $T$ number of iterations. 

\begin{proposition}[Inner complexity for $\proxx_\cnt$] \label{prop:
    terminates}
   Suppose $\weakcnx +L < \smthpara \le 4L$. By initializing the method~$\mtd$ using the strategy suggested in Algorithm~\ref{alg:adap-at-catalyst} for solving
   \[\min_z \left\{ \env(z; x) := f(z) + \frac{\smthpara}{2} \norm{z-x}^2 \right\}\]
we may run the method~$\mtd$ for at least~$T$ iterations, where
\[T \ge \frac{1}{\tau_L} \log \left (\frac{40 A_{4L}}{L} \right );\]
then, the output $z_{T}$ satisfies $\env(z_{T}; x) \le \env(x;x)$ and
$\text{dist}\big (0,\partial \env(z_{{T}}; x) \big )\le
\smthpara \norm{z_T-x}$. 

\end{proposition}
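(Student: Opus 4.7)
The plan is to reduce the statement directly to Theorem~\ref{thm:inner complex, kappa>rho}(1), and then bound its inner iteration count from above uniformly over the regime $\weakcnx + L < \smthpara \le 4L$. Since $\smthpara > \weakcnx$, the subproblem $\min_z \env(z; x)$ is $(\smthpara - \weakcnx)$-strongly convex, so the hypotheses of that theorem are satisfied, and the initialization rule prescribed in Algorithm~\ref{alg:adap-at-catalyst} is exactly the one used there. Invoking the theorem, running $\mtd$ for at least
\[
T_\smthpara \;:=\; \frac{1}{\tau_\smthpara}\log\!\left(\frac{8 A_\smthpara (L+\smthpara)}{(\smthpara-\weakcnx)^2}\right)
\]
iterations from this initialization produces an output satisfying both the descent condition $\env(z_{T_\smthpara}; x) \le \env(x; x)$ and the adaptive stationary condition $\dist(0,\partial \env(z_{T_\smthpara}; x)) \le \smthpara \|z_{T_\smthpara} - x\|$. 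It therefore suffices to show that the $T$ of the statement dominates $T_\smthpara$ uniformly in the admissible range of $\smthpara$.

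The required estimate is straightforward arithmetic. The hypothesis $\smthpara > \weakcnx + L$ implies $(\smthpara - \weakcnx)^2 > L^2$, so that $1/(\smthpara-\weakcnx)^2 < 1/L^2$. Combined with $\smthpara \le 4L$, this gives $L + \smthpara \le 5L$. Moreover, noting $\smthpara \ge L$ (consequence of $\smthpara > \weakcnx + L \ge L$) and appealing to the monotonicity of $\smthpara \mapsto A_\smthpara$ and $\smthpara \mapsto \tau_\smthpara$ assumed for $\mtd$, we obtain $A_\smthpara \le A_{4L}$ and $\tau_\smthpara^{-1} \le \tau_L^{-1}$. Plugging all these bounds into the expression for $T_\smthpara$ yields
\[
T_\smthpara \;\le\; \frac{1}{\tau_L}\log\!\left(\frac{8 A_{4L}\cdot 5 L}{L^{2}}\right) \;=\; \frac{1}{\tau_L}\log\!\left(\frac{40 A_{4L}}{L}\right) \;\le\; T,
\]
which is precisely the lower bound on $T$ posited in the statement. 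Applying Theorem~\ref{thm:inner complex, kappa>rho}(1) at the current value of $\smthpara$ then completes the argument.

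The only mild obstacle I anticipate is making sure the potentially troublesome factor $(\smthpara - \weakcnx)^{-2}$ in $T_\smthpara$ is tamed, since $\smthpara - \weakcnx$ is not known a priori and could in principle be arbitrarily small. This is exactly why the hypothesis strengthens $\smthpara > \weakcnx$ to $\smthpara > \weakcnx + L$: it absorbs the dependence on the unknown $\weakcnx$ into the universal constant $L$. The remaining steps are routine bookkeeping under the monotonicity assumptions on $\mtd$, and the statement follows essentially as a specialization of the previously established inner complexity bound to the automatically generated range for $\smthpara$.
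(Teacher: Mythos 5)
Your proof is correct and follows the same route as the paper: invoke Theorem~\ref{thm:inner complex, kappa>rho}(1) and then bound each of $A_\smthpara$, $\tau_\smthpara^{-1}$, $L+\smthpara$, and $(\smthpara-\weakcnx)^{-2}$ uniformly over $\weakcnx + L < \smthpara \le 4L$ to obtain $T_\smthpara \le \tau_L^{-1}\log(40A_{4L}/L)$. The only cosmetic remark is that $A_\smthpara \le A_{4L}$ comes from the upper bound $\smthpara \le 4L$ rather than from $\smthpara \ge L$ as your sentence structure might suggest, but the bound you use is the right one.
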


Under the additional assumption that the function $f$ is convex, we
produce a point with \eqref{eq:accx_stop_criteria_new2} when the number
of iterations $S$ is chosen sufficiently large. 

\begin{proposition}[Inner-loop complexity for
  $\accx_\cnt$] \label{prop: inner_comp_cnx} 
   Consider the method~$\mtd$ with the initialization strategy suggested in Algorithm~\ref{alg:adap-at-catalyst}
   for minimizing
  $f_{\smthparacvx}(\cdot; y_{\cnt})$ with linear convergence rates of the form
  \eqref{eq:criteria}. 
  Suppose the function $f$ is convex. 
  If the number of iterations of $\mathcal{M}$ is greater than
\[S = O \left ( \frac{1}{\tau_{\smthparacvx}}
    \log(A_{\smthparacvx}, L, \smthparacvx) \right )\]
such that 
\begin{align}
S\log(k+1) \ge \frac{1}{\tau_{\smthparacvx}} \log \left (\frac{8
      A_{\smthparacvx}(\smthparacvx +
      L)(\cnt+1)^2}{\smthparacvx^2} \right )\label{eq: wtf}, 
\end{align}
   then, the output $\tilde{z}_{S} = \accx_{\cnt}$ satisfies
   $\norm{\partial f_{\smthparacvx} (\tilde{z}_{S})}<
   \frac{\smthparacvx}{\cnt +1}
   \norm{\tilde{z}_{S_{\cnt}}-y_\cnt}$ for all $\cnt \ge 1$. 
\end{proposition}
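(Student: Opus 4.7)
The statement is essentially a direct specialization of item 2 of Theorem~\ref{thm:inner complex, kappa>rho} to the convex regime. Since $f$ is convex, it is $\weakcnx$-weakly convex with $\weakcnx=0$; hence the hypothesis $\smthpara>\weakcnx$ of Theorem~\ref{thm:inner complex, kappa>rho} is automatically satisfied with the choice $\smthpara=\smthparacvx>0$, and the subproblem $\min_z f_{\smthparacvx}(z;y_\cnt)$ is $\smthparacvx$-strongly convex. Substituting $\weakcnx=0$ and $\smthpara=\smthparacvx$ into the formula $S_\smthpara\log(\cnt+1)=\tfrac{1}{\tau_\smthpara}\log\bigl(\tfrac{8A_\smthpara(L+\smthpara)(\cnt+1)^2}{(\smthpara-\weakcnx)^2}\bigr)$ recovers precisely the bound~\eqref{eq: wtf}, and delivers the modified adaptive stationary condition $\text{dist}(0,\partial f_{\smthparacvx}(\tilde z_S;y_\cnt))\leq \tfrac{\smthparacvx}{\cnt+1}\|\tilde z_S-y_\cnt\|$. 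Thus the plan reduces to checking that the proof of Theorem~\ref{thm:inner complex, kappa>rho} carries through in this specialization.

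For that, I would proceed in three steps. First, apply the linear convergence assumption~\eqref{eq:criteria} to get $\text{dist}^2(0,\partial f_{\smthparacvx}(z_t;y_\cnt))\leq A_{\smthparacvx}(1-\tau_{\smthparacvx})^t\bigl(f_{\smthparacvx}(z_0;y_\cnt)-f_{\smthparacvx}^*(y_\cnt)\bigr)$. Second, bound the initial gap via the prescribed warm start: if $z_0=y_\cnt$ (smooth case) or $z_0$ is one proximal-gradient step from $y_\cnt$ (composite case), a short computation using $L$-smoothness of $f_0$ and the optimality condition $\smthparacvx(y_\cnt-z^*)\in\partial f(z^*)$ gives
\[
f_{\smthparacvx}(z_0;y_\cnt)-f_{\smthparacvx}^*(y_\cnt)\;\leq\;\tfrac{L+\smthparacvx}{2}\|y_\cnt-z^*\|^2,
\]
where $z^*=\arg\min f_{\smthparacvx}(\cdot;y_\cnt)$. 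Third, convert the $\|y_\cnt-z^*\|$ factor into $\|z_t-y_\cnt\|$: by $\smthparacvx$-strong convexity, $\|z_t-z^*\|\leq \tfrac{1}{\smthparacvx}\text{dist}(0,\partial f_{\smthparacvx}(z_t;y_\cnt))$, so for $t$ large enough one secures $\|z_t-z^*\|\leq \tfrac12\|y_\cnt-z^*\|$, which via the triangle inequality yields $\|y_\cnt-z^*\|\leq 2\|z_t-y_\cnt\|$. Plugging these three pieces together produces the requirement $(1-\tau_{\smthparacvx})^t\lesssim \tfrac{\smthparacvx^2}{A_{\smthparacvx}(L+\smthparacvx)(\cnt+1)^2}$, which rearranges to~\eqref{eq: wtf} after using $-\log(1-\tau_{\smthparacvx})\geq \tau_{\smthparacvx}$.

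The main obstacle is the last step: bridging the gap between the quantity $\|y_\cnt-z^*\|$ that naturally appears in the initial-gap bound and the quantity $\|z_t-y_\cnt\|$ that appears in the target stopping criterion. Naively, this pair is uncontrolled, but strong convexity pins them together once $z_t$ is close enough to $z^*$. Enforcing that closeness is precisely why an extra factor $(\cnt+1)^2$ appears inside the logarithm of~\eqref{eq: wtf} compared to what one would obtain merely from driving the subgradient below an absolute tolerance; correspondingly, $S$ itself is only of the order $\tau_{\smthparacvx}^{-1}\log(A_{\smthparacvx},L,\smthparacvx)$ because the $(\cnt+1)^2$ term is absorbed by the $\log(\cnt+1)$ factor multiplying $S$ on the left-hand side.
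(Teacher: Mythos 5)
Your proposal is correct and takes essentially the same route as the paper: the paper's proof is a one-line observation that the result follows from Theorem~\ref{thm:inner complex, kappa>rho} by setting $\smthpara = \smthparacvx$ and $\weakcnx = 0$, which is exactly the specialization you identify (your re-derivation of Theorem~\ref{thm:inner complex, kappa>rho}'s inner workings is accurate but not additionally needed, since that theorem is already established).
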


We can now derive global complexity bounds by combining Theorem~\ref{thm: computational_complex_result} and
Theorem~\ref{thm: inner_complexity}, and a good choice for the constant $\smthparacvx$.

\begin{theorem}[Global complexity bounds for \autonewalgo]
   Choose $T$ and $S$ as in
  Theorem~\ref{thm: inner_complexity}. We let $\tilde{O}$ hide universal constants and  logarithmic
dependencies in $A_L$, $A_{\smthparacvx}$, $L$, $\varepsilon$, $\smthparainit$,
$\smthparacvx$, and $\norm{x^*-x_0}^2$. Then, the following statements hold.
\begin{enumerate}
	\item 
	 Algorithm~\ref{alg:adap-at-catalyst} generates a point $x$ satisfying
$\text{dist} \big (0, \partial f(x) \big ) \leq \varepsilon$ after at most
	$$\tilde{O}\left(\left(\tau_L^{-1}+\tau_{\smthparacvx}^{-1}\right)\cdot\frac{L(f(x_0)-f^*)}{\varepsilon^2}\right)$$
	iterations of the method $\mathcal{M}$.
	
	\item If $f$ is convex, then Algorithm~\ref{alg:adap-at-catalyst} generates a point $x$
	satisfying  $\text{dist} \big (0, \partial f(x) \big ) \leq \varepsilon$ after at
	most $$\tilde{O}\left(\left(\tau_L^{-1}+\tau_{\smthparacvx}^{-1}\right)\cdot\frac{L^{1/3}\left(\smthparacvx\|x^*-x_0\|^2\right)^{1/3}}{\varepsilon^{2/3}}\right)
	$$
	iterations of the method $\mathcal{M}$. 
	\item If $f$ is convex, then Algorithm~\ref{alg:adap-at-catalyst}
	 generates a point $x$ satisfying  $f(x)-f^*\leq \varepsilon$
	after at most
	$$\tilde{O}\left(\left(\tau_L^{-1}+\tau_{\smthparacvx}^{-1}\right)\cdot \frac{\sqrt{\smthparacvx\|x^*-x_0\|^2}}{\sqrt\varepsilon}\right)
	$$
	iterations of the method $\mathcal{M}$.
\end{enumerate}
\end{theorem}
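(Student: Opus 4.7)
The plan is to assemble the global complexity bounds by combining the outer-loop rates from Theorem~\ref{thm: computational_complex_result} with the per-iteration inner-loop budget from Theorem~\ref{thm: inner_complexity}, plugging in $\smthpara_{\max}\leq 4L$ throughout. The structure is the same for all three parts: find the number $N$ of outer iterations required to achieve the target accuracy, then multiply by the maximum cost of a single outer iteration.

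For the per-iteration cost, Theorem~\ref{thm: inner_complexity} guarantees that producing $\proxx_\cnt$ costs at most $\tilde{O}(\tau_L^{-1})$ inner calls to $\mtd$ and producing $\accx_\cnt$ costs at most $\tilde{O}(\tau_{\smthparacvx}^{-1})$ inner calls, so a single outer iteration costs $\tilde{O}(\tau_L^{-1}+\tau_{\smthparacvx}^{-1})$ inner iterations. One subtle point worth checking is that the \linesearch{} subroutine may double $\smthpara$ several times within a given outer step; however, since $\smthpara_\cnt$ is monotone nondecreasing across $\cnt$ and capped by $\smthpara_{\max}\leq 4L$, the total number of doublings over the whole run is at most $\log_2(4L/\smthparainit)$, which is absorbed into the $\tilde O$ notation.

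For part~1, the nonconvex bound from Theorem~\ref{thm: computational_complex_result} gives $\min_{j\leq N}\dist^2(0,\partial f(\proxx_j))\leq 8\smthpara_{\max}(f(x_0)-f^*)/N$. Using $\smthpara_{\max}\leq 4L$ and inverting, $N=O(L(f(x_0)-f^*)/\varepsilon^2)$ suffices, and multiplying by the per-iteration cost gives the claim. For part~2, the convex stationarity bound reads $\min_{j\leq 2N}\dist^2(0,\partial f(\proxx_j))\leq 32\smthpara_{\max}\smthparacvx\|x^*-x_0\|^2/(N(N+1)^2)$, so $N\geq (128\,L\,\smthparacvx\|x^*-x_0\|^2/\varepsilon^2)^{1/3}$ is enough, yielding the $\varepsilon^{-2/3}$ rate once multiplied by the inner budget. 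For part~3, the convex function-value bound $f(x_N)-f(x^*)\leq 4\smthparacvx\|x^*-x_0\|^2/(N+1)^2$ gives $N=O\!\left(\sqrt{\smthparacvx\|x^*-x_0\|^2/\varepsilon}\right)$, again multiplied by $\tilde O(\tau_L^{-1}+\tau_{\smthparacvx}^{-1})$.

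There is no real obstacle of substance here since all the work has been done in Theorems~\ref{thm: computational_complex_result} and~\ref{thm: inner_complexity}; the only care needed is (i) checking that the $\smthpara_{\max}\leq 4L$ bound asserted by Theorem~\ref{thm: inner_complexity} is invoked correctly to eliminate $\smthpara_{\max}$ in favor of $L$ in all three bounds, (ii) handling the amortization of the \linesearch{} doublings (a $O(\log(L/\smthparainit))$ factor swept into $\tilde O$), and (iii) noting that $S\log(\cnt+1)$ rather than a flat $S$ is used for the extrapolation subproblem, which contributes only an additional $\log N$ factor that is also hidden in $\tilde O$. After these bookkeeping points, the three bounds drop out by direct multiplication.
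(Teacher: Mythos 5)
Your proof is correct and follows the same approach the paper intends (the paper itself omits a formal proof of this theorem, stating only that it follows by combining Theorem~\ref{thm: computational_complex_result} and Theorem~\ref{thm: inner_complexity}). Your derivation — inverting each of the three outer-loop bounds, substituting $\smthpara_{\max}\leq 4L$ to eliminate $\smthpara_{\max}$, and multiplying by the per-iteration budget $\tilde{O}(\tau_L^{-1}+\tau_{\smthparacvx}^{-1})$ — is exactly the intended argument, and your bookkeeping for the \linesearch{} doublings and the $\log(\cnt+1)$ factor in $S\log(\cnt+1)$ correctly accounts for the remaining logarithmic terms absorbed into $\tilde{O}$. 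One small remark: the paper's own accounting in the proof of Theorem~\ref{thm: inner_complexity} charges up to $T\log_2(4L/\smthparainit)$ iterations to \emph{each} production of $\proxx_\cnt$, rather than amortizing the doublings across the whole run as you do; your amortized count is tighter but both collapse to the same $\tilde{O}$ bound, so the distinction is immaterial here.
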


\begin{remark}
\rm{In general, the linear convergence parameter of $\mtd$, $\tau_\smthpara$, depends on the condition number of the problem $f_\smthpara$. Here, $\tau_L$ and $\tau_{\smthparacvx}$ are precisely given by plugging in $\smthpara=L$ and $\smthparacvx$ respectively into $\tau_{\smthpara}$. To clarify, let $\mtd$ be SVRG, $\tau_\smthpara$ is given by $\frac{1}{n+\frac{\smthpara+L}{\smthpara}}$ which yields $\tau_L = 1/(n+2)$. A more detailed computation is given in Table~\ref{table:parameters}. For all the incremental methods we considered, these parameters $\tau_L$ and $\tau_{\smthpara}$ are on the order of $1/n$.   }
\end{remark}

\begin{remark}
\rm{If $\mathcal{M}$ is a first order method, the convergence guarantee in
the convex setting is \emph{near-optimal}, up to logarithmic factors,
when compared to $O(1/\sqrt{\varepsilon})$~\cite{catalyst,woodworth:srebro:2016}. 
In the non-convex setting, our approach matches, up
to logarithmic factors, the best known rate for this class of
functions, namely $O(1/\varepsilon^2)$~\cite{Cartis2010,Cartis2014}. 
Moreover, our rates dependence on
the dimension and Lipschitz constant equals, up to log factors, the best known
dependencies in both the convex and nonconvex setting. These logarithmic factors may be
the price we pay for having a generic algorithm. }
\end{remark}

\section{Applications to Existing Algorithms} \label{sec:existing_alg}
We now show how to accelerate existing algorithms $\mathcal{M}$ and
compare the convergence guaranties before and after~\autonewalgo. In
particular, we focus on the gradient descent algorithm, randomized
coordinate descent, and on the 
incremental methods SAGA and SVRG. For all the algorithms considered,
we state the convergence guaranties in terms of the \emph{total number of
  iterations} (in expectation, if appropriate) to reach an accuracy of
$\varepsilon$; in the convex setting, the accuracy is
stated in terms of functional error, $f(x) - \inf f < \varepsilon$ and in the
nonconvex setting, the appropriate measure is
stationarity, namely $\text{dist}(0,\partial f(x))
< \varepsilon$. All the algorithms considered have formulations for
the composite setting with analogous convergence rates. 

Table~\ref{ncvx_catalyst:tbl: exist} presents convergence rates for
SAGA \cite{saga}, (prox) SVRG \cite{proxsvrg}, randomized coordinate
descent (Rand. CD) \cite{Wright_CD}, and gradient descent (FG). 
       
\begin{table}[hbtp!]
   \renewcommand{\arraystretch}{2.8}
   \vspace*{-0.1cm}
\centering
\small
\setlength\tabcolsep{2pt}
   \begin{tabular}{|c | c | c  | c |}
	\hline
& \begin{minipage}{0.2\textwidth} \begin{center} Theoretical \\
    stepsize \end{center} \end{minipage} & {Nonconvex}  & {Convex}\\
\hline
SVRG \cite{proxsvrg} & $\displaystyle O\left (\frac{1}{L} \right )$ & not avail.  & $\displaystyle O \left (n \frac{L}{\varepsilon} \right )$ \\
\hline

ncvx-SVRG \cite{allen2016variance,reddi2016stochastic,reddi2016proximal} & {$\displaystyle O\left (\frac{1}{n^{2/3} L} \right )$}  & {$\displaystyle O\left (  \frac{n^{2/3} L}{\varepsilon^2} \right )$} & {$\displaystyle O\left ( \sqrt{n} \frac{ L}{\varepsilon} \right )$}  \\
\hline

\autonewalgo-SVRG & {$ \displaystyle O\left (\frac{1}{L} \right )$} & {$ \displaystyle \tilde{O} \left ( \frac{n L}{\varepsilon^2} \right )$} & {$\displaystyle \tilde{O} \left ( \sqrt{n} \sqrt{ \frac{L}{\varepsilon} }\right )$} \\
\hline
SAGA \cite{saga} & $\displaystyle O \left (\frac{1}{L} \right )$ & not avail. & $\displaystyle O \left (n \frac{L}{\varepsilon} \right )$\\
\hline
ncvx-SAGA
     \cite{reddi2016stochastic,reddi2016proximal} & {$\displaystyle O\left (\frac{1}{n^{2/3} L} \right )$}  & {$\displaystyle O\left (  \frac{n^{2/3} L}{\varepsilon^2} \right )$} & {$\displaystyle O\left ( \sqrt{n} \frac{ L}{\varepsilon} \right )$}\\
\hline
\autonewalgo-SAGA & $\displaystyle O \left ( \frac{1}{L}
                              \right ) $& {$ \displaystyle \tilde{O} \left ( \frac{n L}{\varepsilon^2} \right )$} & {$\displaystyle \tilde{O} \left ( \sqrt{n} \sqrt{ \frac{L}{\varepsilon} }\right )$}  \\
\hline
FG & $\displaystyle O \left ( \frac{1}{L} \right )$
           & $\displaystyle O \left (n \frac{L}{\varepsilon^2} \right
             )$ & $\displaystyle O \left ( n \frac{L}{\varepsilon} \right )$ \\
\hline
\autonewalgo-FG &  $\displaystyle O \left ( \frac{1}{L}
                              \right ) $ & $\displaystyle \tilde{O} \left ( n\frac{L}{\varepsilon^2}
                              \right ) $  & $\displaystyle O \left (n \sqrt{\frac{L}{\varepsilon}}
                              \right ) $ \\
\hline
\begin{minipage}{0.3\textwidth} \begin{center} Rand. CD \\ \cite{Wright_CD,Schmidt_CD,Nesterov_CD,Richtarik_CD} \end{center} \end{minipage} & $\displaystyle O \left ( \frac{1}{L_{\max}} \right )$
           & not avail. & $\displaystyle O \left (p \frac{L_{\max}}{\varepsilon} \right )$ \\
\hline
\autonewalgo-Rand. CD &  $\displaystyle O \left ( \frac{1}{L_{\max}}
                              \right ) $ & $\displaystyle \tilde{O} \left ( p^2\frac{L_{\max}}{\varepsilon^2}
                              \right ) $  &$\displaystyle O \left (p \sqrt{\frac{L_{\max}}{\varepsilon}}
                              \right ) $ \\
\hline
\end{tabular}
\caption{Comparison of rates of convergence, before and after the
  \autonewalgo~, resp. in the non-convex and convex cases. For the
  comparison, in the convex case, we only present the number of
 iterations to obtain a point $x$ satisfying $f(x) - f^* <
 \varepsilon$. In the non-convex case, we show the number of
 iterations to obtain a point $x$ satisfying $ \text{dist}(0, \partial
 f(x)) <\varepsilon$. }\label{ncvx_catalyst:tbl: exist}
   \vspace*{-0.3cm}
\end{table}

The original SVRG \cite{proxsvrg} has no guarantee for nonconvex functions. However a nonconvex extension of SVRG was proposed in~\cite{reddi2016proximal}. Their convergence rate gives a better dependence on $n$ compared to ours, namely $O(\frac{n^{2/3} L }{\epsilon^2})$. This is achieved thanks to a mini-batching strategy. In order to obtain a similar dependency on $n$, we need a tighter bound for SVRG with mini-batching applied to $\mu$-strongly convex problems, namely $O\left (\left ( n^{2/3}+\frac{L}{\mu} \right ) \log\left (\frac{1}{\varepsilon} \right )  \right )$. To the best of our knowledge, such a rate is currently unknown. Therefore, for ncvx-SVRG, we present the results without mini-batching. With mini-batching, the same convergence rate can be obtained by using a batch size $b =n^{2/3}$ and a stepsize $O(1/L)$. Similarly for ncvx-SAGA. For Rand. CD, we present the results for a smooth function $f$, with $L_{\max}$ the max. of the coordinate-wise Lipschitz constants for $\nabla f$ and $p$ is the dimension of the domain of $f$.

\subsection{Practical parameter choices and convergence rates}
The smoothing parameter
$\smthparacvx$ drives the convergence rate of \autonewalgosp
in the convex setting. To determine
$\smthparacvx$, we pretend $\weakcnx = 0$ and compute the
global complexity of our scheme. As such, we end up with the same
complexity result as Catalyst \cite{catalyst}. Following their work, the rule
of thumb is to maximize the ratio $\tau_\smthpara/\sqrt{L + \smthpara}$ for convex problems.
On the other hand, the choice of $\smthparainit$ is independent of
$\mathcal{M}$; it is an initial lower estimate for the weak convexity constant
$\weakcnx$. In practice, we typically choose $\kappa_0 = \smthparacvx$; 
For incremental approaches a natural heuristic is also to choose $S=T=n$,
meaning that $S$ iterations of~$\mtd$ performs one pass over the data.
In Table~\ref{table:parameters}, we present the values of $\smthparacvx$ used for various algorithms,
as well as other quantities that are useful to derive the convergence rates.

\paragraph{Full gradient method.} A first illustration is the
algorithm obtained when accelerating the regular ``full'' gradient
(FG). Here, the optimal choice for $\smthparacvx$ is
$L$. In the convex setting, we get an accelerated rate of $O(n
\sqrt{L/\varepsilon} \log(1/\varepsilon))$ which agrees with Nesterov's accelerated
variant (AFG) up to logarithmic factors. On the other hand, in the
nonconvex setting, our approach achieves no worse rate than $O(n
L/\varepsilon^2 \log(1/\varepsilon) )$, which agrees with the standard gradient
descent up to logarithmic factors. We note that under stronger assumptions,
namely $C^2$-smoothness of the objective, the accelerated algorithm in
\cite{carmon2017convex} achieves the same rate as (AFG) for the convex setting
and $O(\varepsilon^{-7/4} \log(1/\varepsilon))$ for the 
nonconvex setting. Their
approach, however, does not extend to composite setting nor to stochastic methods. Our marginal loss is
the price we pay for considering a much larger class of functions. 

\paragraph{Randomized Coordinate Descent (Rand. CD).} Next, we consider
\autonewalgosp applied to randomized coordinate descent (Rand. CD)
\cite{Wright_CD,Schmidt_CD,Nesterov_CD,Richtarik_CD}, see
\cite{Wright_CD} for more references. We examine the
Rand. CD method in a simplified setting, namely, $\displaystyle \min_{x \in \R^p}~f(x)$
where $f$ is smooth and $|\nabla f(x+te_i)_i - (\nabla f(x))_i| <
L_i|t|$. We use the Rand. CD algorithm described in
\cite[Algorithm 3, Theorem 1]{Wright_CD}.  The Lipschitz constant
                is $L = pL_{\max}$. Following \cite{catalyst}, the optimal choice for $\smthparacvx =
\max_{i=1, \hdots, p} |L_i| := L_{\max}$. The relationship between the Lipschitz constants are $L_{\max} \le L \le
p L_{\max}$; see \cite{Wright_CD}. Under our procedure,
\autonewalgosp attains an accelerated rate of $\tilde{O}(p
\sqrt{L_{\max}/\varepsilon})$, matching (up to log factors) the guarantees of the
accelerated randomized coordinate descent in
\cite[Algorithm 4]{Wright_CD} for the convex setting. A direct
implementation of Rand. CD has no convergence
guarantees in the non-convex setting. 

\paragraph{Randomized incremental gradient.} We now consider
randomized incremental gradient methods such as SAGA \cite{saga} and
(prox) SVRG \cite{proxsvrg}. Here, the optimal choice for
$\smthparacvx$ is $O(L/n)$. Under the convex setting, we
achieve an accelerated rate of $O(\sqrt{n} \sqrt{L/\varepsilon}
\log(1/\varepsilon))$. A direct application of SVRG and SAGA have no
convergence guarantees in the non-convex setting. With our approach,
the resulting algorithm matches the guarantees for FG up to log
factors.

\begin{table}[hbtp]\label{table:parameters}
   \centering
\renewcommand{\arraystretch}{1.2}
\begin{tabular}{|c | m{6.25cm} | c | c | c| c|}
	\hline
   Variable & Description & GD & Rand. CD & SVRG & SAGA \\
\hline
   $1/\tau_L$ & linear conv. param. with $\smthpara =
                L$ &
                                                                    $2$
                               & $p+1$ & $n+2$ & $4n$ \\ 
\hline
   $\smthparacvx$ & smoothing param. for convex setting & $L$ & $L_{\max}$ &
                                                                   $L/(n-1)$ & $3L/(4n-3)$ \\
\hline
$1/\tau_{\smthpara_\text{cvx}}$ & linear conv. param. with
                                $\smthparacvx$ & $2$  & $2p$ & $2n$ & $4n$ \\
\hline
   $A_{4L}$ & constant from conv. rate of $\mathcal{M}$ &
                                                                    $8L$
                               & $8pL_{\max}$  & $8L$ & $8Ln$ \\
\hline
\end{tabular}
   \caption{Values of various quantities that are useful to derive the
     convergence rate of the different optimization methods.
       For Rand. CD, we only consider the smooth setting. In
       particular, $L_{\max}$ is the max. of the coordinate Lipschitz
       constants for $\nabla f(x)$ and $p$ is the dimension of
       the domain of $f$.}
\end{table}

\subsection{Detailed derivation of convergence rates}
Using the values of Table~\ref{table:parameters}, we may now specialize 
our convergence results to different methods. Many of the linearly
convergent methods (e.g. Rand. CD and incremental methods) state
convergence results in terms of function values instead of subgradients
as in Equation \eqref{eq:criteria}. We relate function values to subgradients by using the Lipschitz constant $L$:
\[\text{dist}^2(0, \partial
f(x) ) \le 2L (f(x) - f(x^*)).\]

\paragraph{Gradient descent.} 
To compute the parameters $\tau_L$, $\smthparacvx$, etc, we use the
convergence analysis from \citep[][]{nesterov} for full gradient:

\begin{theorem}[Convergence guarantee for FG: Theorem 2.1.15 in \cite{nesterov}] Suppose
  the function $f : \R^p \to \R$ is $\mu$-strongly convex and has
  $L$-Lipschitz continuous gradient. Then the gradient descent method
  (FG) with stepsize $\delta = \frac{2}{\mu + L}$ generates a sequence
  $\{x_k\}$ such that 
\[ f(x_k) -f(x^*) \le \frac{L}{2} \left (\frac{L/\mu-1}{L/\mu + 1}
  \right )^{2k} \|x_0-x^*\|^2\],
where $x^*$ is the optimal solution of $f$. 
\end{theorem}

With this result, the number of iterations in
the inner loop are
\begin{align*}
T &\geq 2 \log (320) \\
S\log(\cnt +1 )& \geq 2 \log \left ( 64 (\cnt + 1)^2 \right ).
\end{align*}
The global complexity for gradient descent is 
{\small \begin{enumerate}
\item Algorithm~\ref{alg:adap-at-catalyst} will generate a point $x$ satisfying
$\text{dist} \big (0, \partial f(x) \big ) \leq \varepsilon$ after at most
\begin{align*}
O \left [  \frac{nL(f(x_0)-f^*)}{\varepsilon^2}\cdot \log \left ( \frac{L^2(f(x_0)-f^*)^2}{\varepsilon^4}
  \right ) +  n \log \left ( \frac{L}{\smthparainit} \right ) \right ]
\end{align*}
gradient computations. 
\item If $f$ is convex, then Algorithm~\ref{alg:adap-at-catalyst} will generate a point $x$
	satisfying  $\text{dist} \big (0, \partial f(x) \big ) \leq \varepsilon$ after at
	most 
\begin{align*}
O \Bigg [  \frac{n L^{2/3} \norm{x_0-x^*}^{2/3}}{\varepsilon^{2/3}}  \cdot \log \left  (\frac{L^{4/3}
                                   \norm{x_0-x^*}^{4/3}}{\varepsilon^{4/3}} \right ) +  n \log \left ( \frac{L}{\smthparainit} \right )\Bigg ]
\end{align*}
	gradient computations. 
\item If $f$ is convex, then Algorithm~\ref{alg:adap-at-catalyst} will
	 generate a point $x$ satisfying  $f(x)-f^*\leq \varepsilon$
	after at most
	$$O\left [  \frac{n\sqrt{L} \norm{x^*-x_0} }{\sqrt\varepsilon} 
          \cdot \log\left (\frac{L\norm{x_0-x^*}^2 }{\varepsilon}
  \right ) +n \log \left ( \frac{L}{\smthparainit}\right ) \right]
	$$
	gradient computations.
\end{enumerate} }

\paragraph{Rand. CD.} We use the convergence analysis from
\cite{Wright_CD}[Algorithm 3, Theorem 1]:

\begin{theorem}[Convergence guarantee for Rand. CD: Theorem 1 in \cite{Wright_CD} ]
Suppose the function $f: \R^p \to \R$ is $\mu$-strongly convex and each component
has an $L_i$-Lipschitz continuous gradient, namely for all $x \in \R^p$
and all $t \in \R$ we have
\[ \big | [\nabla f(x+te_i)]_i- [\nabla f(x)]_i \big | \le L_i |t|. \]
Set $L_{\max} = \max_{i =1, \hdots, p} L_i$. Then the iterates of
Algorithm 3 in \cite{Wright_CD} satisfy
\[ \mathbb{E} [f(x_k)]-f^* \le \left (1- \frac{\sigma}{p L_{\max}} \right )^k (f(x_0)-f^*). \]
\end{theorem}

For the Rand. CD, the number of iterations in the inner loop are
{\small \begin{align*}
T &\geq (p+1)\log (320)  \\
S\log(\cnt +1) &\geq 
		2p \log \left ( 64 \cdot p(1+p)
                   \cdot (k+1)^2 \right ) .
\end{align*}}

The global complexity for Rand. CD is
{\small \begin{enumerate}
\item Algorithm~\ref{alg:adap-at-catalyst} will generate a point $x$ satisfying
$\text{dist} \big (0, \partial f(x) \big ) \leq \varepsilon$ after at most
\begin{align*}
O \left [  \frac{p^2L_{\max}(f(x_0)-f^*)}{\varepsilon^2}\cdot \log \left ( \frac{(1+p)p^5L^2_{\max}(f(x_0)-f^*)^2}{\varepsilon^4}
  \right ) +  p \log \left ( \frac{pL_{\max}}{\smthparainit} \right ) \right ]
\end{align*}
gradient computations. 
\item If $f$ is convex, then Algorithm~\ref{alg:adap-at-catalyst} will generate a point $x$
	satisfying  $\text{dist} \big (0, \partial f(x) \big ) \leq \varepsilon$ after at
	most 
\begin{align*}
O \Bigg [  \frac{p^{4/3} L_{\max}^{2/3}
  \norm{x_0-x^*}^{2/3}}{\varepsilon^{2/3}}  \cdot \log \left
  (\frac{(1+p)p^{11/3} L_{\max}^{4/3}
                                   \norm{x_0-x^*}^{4/3}}{\varepsilon^{4/3}}
  \right ) +  p \log \left ( \frac{pL_{\max}}{\smthparainit} \right )\Bigg ]
\end{align*}
	gradient computations. 
\item If $f$ is convex, then Algorithm~\ref{alg:adap-at-catalyst} will
	 generate a point $x$ satisfying  $f(x)-f^*\leq \varepsilon$
	after at most
	$$O\left [  \frac{p\sqrt{L_{\max}} \norm{x^*-x_0} }{\sqrt\varepsilon} 
          \cdot \log\left (\frac{p(1+p) L_{\max} \norm{x_0-x^*}^2 }{\varepsilon}
  \right ) +p \log \left ( \frac{pL_{\max}}{\smthparainit}\right ) \right]
	$$
	gradient computations.
\end{enumerate} }

\paragraph{SVRG.} 
We use the convergence analysis established in \cite{proxsvrg}:
\begin{theorem}[Convergence guarantee for SVRG: Theorem 3.1 in
  \cite{proxsvrg} ] Suppose the function $1/n \sum_{i=1}^n f_i$ is
  $L$-Lipschitz and the function $f$ is $\mu$-strongly convex. Choose
  the real constant $0 < \theta < 1/4$ sufficiently small so that
\[\rho = \frac{1}{100 \theta (1-4 \theta)} + \frac{4 \theta \left ( \frac{L}{\mu}+1
    \right )}{100 \frac{L}{\mu} (1-4\theta)} < 1.\] 
Then the Prox-SVRG method in \cite{proxsvrg} has geometric convergence
in expectation:
\[ \mathbb{E} [f(x_k)] - f(x^*) \le \rho^k (f(x_0) - f(x^*) ). \]
In particular, each stage requires $n + 100L/\mu$ component gradient
evaluations so the overall complexity is
\[O \left ( (n + L/\mu) \log(1/\varepsilon) \right ).\]
\end{theorem}

For SVRG, the number of iterations in
the inner loop are
{\small \begin{align*}
T &\geq (n+2)\log (320)  \\
S\log(\cnt +1) &\geq 
		2n \log \left ( 64 \cdot n^2
                   \cdot (k+1)^2 \right ) .
\end{align*}}

The global complexity for SVRG when $n$ is sufficiently large is
{\small \begin{enumerate}
\item Algorithm~\ref{alg:adap-at-catalyst} will generate a point $x$ satisfying
$\text{dist} \big (0, \partial f(x) \big ) \leq \varepsilon$ after at
most
\begin{align*}
O \left [ \frac{nL(f(x_0)-f^*)}{\varepsilon^2} \cdot \log \left ( \frac{n^2L^2(f(x_0)-f^*)^2}{\varepsilon^4}
  \right ) + n \log \left ( \frac{L}{\smthparainit} \right ) \right ]
\end{align*}
gradient computations. 
\item If $f$ is convex, then Algorithm~\ref{alg:adap-at-catalyst} will generate a point $x$
	satisfying  $\text{dist} \big (0, \partial f(x) \big ) \leq \varepsilon$ after at
	most 
\begin{align*}
O \Bigg [  \frac{n^{2/3}L^{2/3} \norm{x^*-x_0}^{2/3}}{\varepsilon^{2/3}} \log \left (
                                                               \frac{n^{4/3} L^{4/3} 
                                                              \norm{x^*-x_0}^{4/3}    } {\varepsilon^{4/3}}
  \right )+  n^{2/3} \log \left ( \frac{L}{\smthparainit}\right ) \Bigg ]
\end{align*}
	gradient computations. 
\item If $f$ is convex, then Algorithm~\ref{alg:adap-at-catalyst} will
	 generate a point $x$ satisfying  $f(x)-f^*\leq \varepsilon$
	after at most
	$$O\left [ \frac{\sqrt{nL} \norm{x^*-x_0} }{\sqrt\varepsilon} 
          \cdot \log\left (\frac{nL\norm{x_0-x^*}^2}{\varepsilon}
  \right ) + \sqrt{n} \log \left ( \frac{L}{\smthparainit}\right )\right]
	$$
	gradient computations.
\end{enumerate} }

\paragraph{SAGA.} We observe that the variables for SAGA are the same as for SVRG up to a
multiplicative factors. Therefore, the global complexities results for SAGA
are, up to constant factors, the same as SVRG. 

\begin{theorem}[Convergence guarantee of SAGA \cite{saga} in
  Corollary 1]
Suppose the function $f: \R^p \to \R$ is $\mu$-strongly convex and
each $f_i$ has Lipschitz continuous derivatives with constant
$L$. Then the iterates $\{x_k\}$ generated by SAGA in \cite{saga} satisfy
\[\mathbb{E} \| x_k- x^* \|^2 \le \left (1 + \frac{2n}{3} \right ) \left ( 1 - \min \left \{
        \frac{1}{4n}, \frac{\mu}{3L} \right \} \right )^k
    \|x_0-x^*\|^2.\]
\end{theorem}
\section{Experiments}\label{sec:exp}
We investigate the performance of~\autonewalgosp on two standard
non-convex problems in machine learning, namely on sparse matrix
factorization and on training a simple two-layer neural network. 

\paragraph{Comparison with linearly convergent methods.} We report experimental results of~\autonewalgosp when applied to the incremental algorithms SVRG~\cite{proxsvrg} and SAGA~\cite{saga},
and consider the following variants:

\begin{itemize}
 \item ncvx SVRG/SAGA~\cite{reddi2016proximal,allen2016variance} with its theoretical stepsize $\eta =
   1/Ln^{2/3}$.
\item a minibatch variant of ncvx SVRG/SAGA~\cite{reddi2016proximal,allen2016variance} with batch size $b= n^{2/3}$ and stepsize $\eta = 1/L$.
\item SVRG/SAGA with large stepsize $\eta = 1/L$. This is variant of SVRG/SAGA, whose stepsize is not justified by theory for nonconvex problems, but which performs well in practice.
 \item 4WD-Catalyst SVRG/SAGA with its theoretical stepsize $\eta = 1/2L$.
\end{itemize}

The algorithm SVRG (resp. SAGA) was originally designed for minimizing
convex objectives. The nonconvex version was developed in \cite{reddi2016proximal,allen2016variance}, using a significantly smaller stepsize $\eta = 1/Ln^{2/3}$. Following~\cite{reddi2016proximal}, we also include in the comparison a heuristic variant that uses a large stepsize $\eta = 1/L$, where no theoretical guarantee is available for nonconvex objectives.
4WD-Catalyst SVRG and 4WD-Catalyst SAGA use a similar stepsize, but the Catalyst mechanism makes this choice theoretically grounded.

\paragraph{Comparison with popular stochastic algorithms.} We also include as baselines three
popular stochastic algorithms: stochastic gradient descent (SGD),
AdaGrad~\cite{adaGrad}, and Adam \cite{adam}.
\begin{itemize}[leftmargin=0.5cm,itemsep=0pt,topsep=0pt,parsep=0pt]
 \item SGD with constant stepsize.
 \item AdaGrad~\cite{adaGrad} with stepsize $\eta = 0.1$ or $0.01$. 
\item Adam \cite{adam} with stepsize $\alpha = 0.01$ or $0.001$,
  $\beta_1 = 0.9$, and $\beta_2 = 0.999$.
\end{itemize}
The stepsize (learning rate) of these algorithms are manually tuned to
output the best performance. Note that none of them, SGD,
AdaGrad~\cite{adaGrad}, or Adam \cite{adam} enjoys
linear convergence when the problem is strongly convex. Therefore, we do
not apply 4WD-Catalyst to these algorithms. 
SGD is used in both experiments, whereas AdaGrad and Adam are
used only on the neural network experiments and not on sparse matrix factorization since it is unclear how to apply it to a nonsmooth objective. 

\paragraph{Parameter settings.}
We start from an initial estimate of the Lipschitz constant $L$ and
use the theoretically recommended $\smthpara_0= \smthpara_{\text{cvx}}
= 2L/n$ in \autonewalgo.
We set the number of inner iterations $T=S=n$ in all experiments which
means making at most one pass over the data to solve each
sub-problem. Moreover, the $\log(k)$ dependency dictated by the theory
is dropped while solving the subproblem in~(\ref{eq:accer_adapt_2}). 
These choices turn out to be justified \textit{a posteriori}, as both SVRG and SAGA have a much better convergence rate in practice 
than the theoretical rate derived from a worst-case analysis. Indeed, in all experiments, one pass over the data to solve each sub-problem 
was found to be enough to guarantee sufficient descent. 

\paragraph{Sparse matrix factorization a.k.a. dictionary learning.}
Dictionary learning 
consists of representing a dataset $X=[x_1, \cdots, x_n] \in \R^{m \times n}$ 
as a product $X \approx D A$, where $D$ in~$\R^{m \times p}$ is called a dictionary, and
$A$ in $\R^{p \times n}$ is a sparse matrix. The classical non-convex formulation~\citep[see][]{mairal2014sparse} 
is 
\begin{equation*}
\min_{D \in \mathcal{C}, A \in \R^{p\times n}} \sum_{i=1}^n \frac{1}{2} \Vert x_i - D \alpha_i \Vert_2^2 +  \psi(\alpha_i),
\end{equation*}
where $A = [\alpha_1 \cdots \alpha_n]$ carries the decomposition coefficients
of signals $x_1 \cdots x_n$, $\psi$ is a sparsity-inducing regularization
and $\mathcal{C}$ is chosen as the set of matrices whose
columns are in the $\ell_2$-ball.
An equivalent point of view is the finite-sum problem
$\min_{D \in \mathcal{C}}  \frac{1}{n} \sum_{i=1}^n f_i(D)$ with 
\begin{equation}
   f_i(D) :=  \min_{\alpha \in \R^p} \frac{1}{2} \Vert x_i -D \alpha \Vert_2^2 + \psi(\alpha). \label{dl1}
\end{equation}
We consider the elastic-net regularization $\psi(\alpha) = \frac{\mu}{2} \Vert \alpha \Vert^2 + \lambda \Vert \alpha \Vert_1 $
 of~\cite{zou2005regularization}, which has a sparsity-inducing effect, and report the corresponding results in Figures~\ref{fig:patches_append_svrg} 
 and \ref{fig:patches_append_saga}. We learn
a dictionary in $\R^{m \times p}$ with $p=256$ elements on a set of whitened normalized
image patches of size $m=8 \times 8$.  Parameters are set to be as in~\citep{mairal2014sparse}---that is, a small value $\mu\!=\!1e-5$,
and~$\lambda\!=\!0.25$, leading to sparse matrices~$A$ (on average $\approx
4$ non-zero coefficients per column of~$A$).
Note that our implementations are based on the open-source SPAMS toolbox~\cite{mairal2010}.\footnote{available here \url{http://spams-devel.gforge.inria.fr.}}

\begin{figure*}[t!]
   \centering
   \includegraphics[width=.31\textwidth]{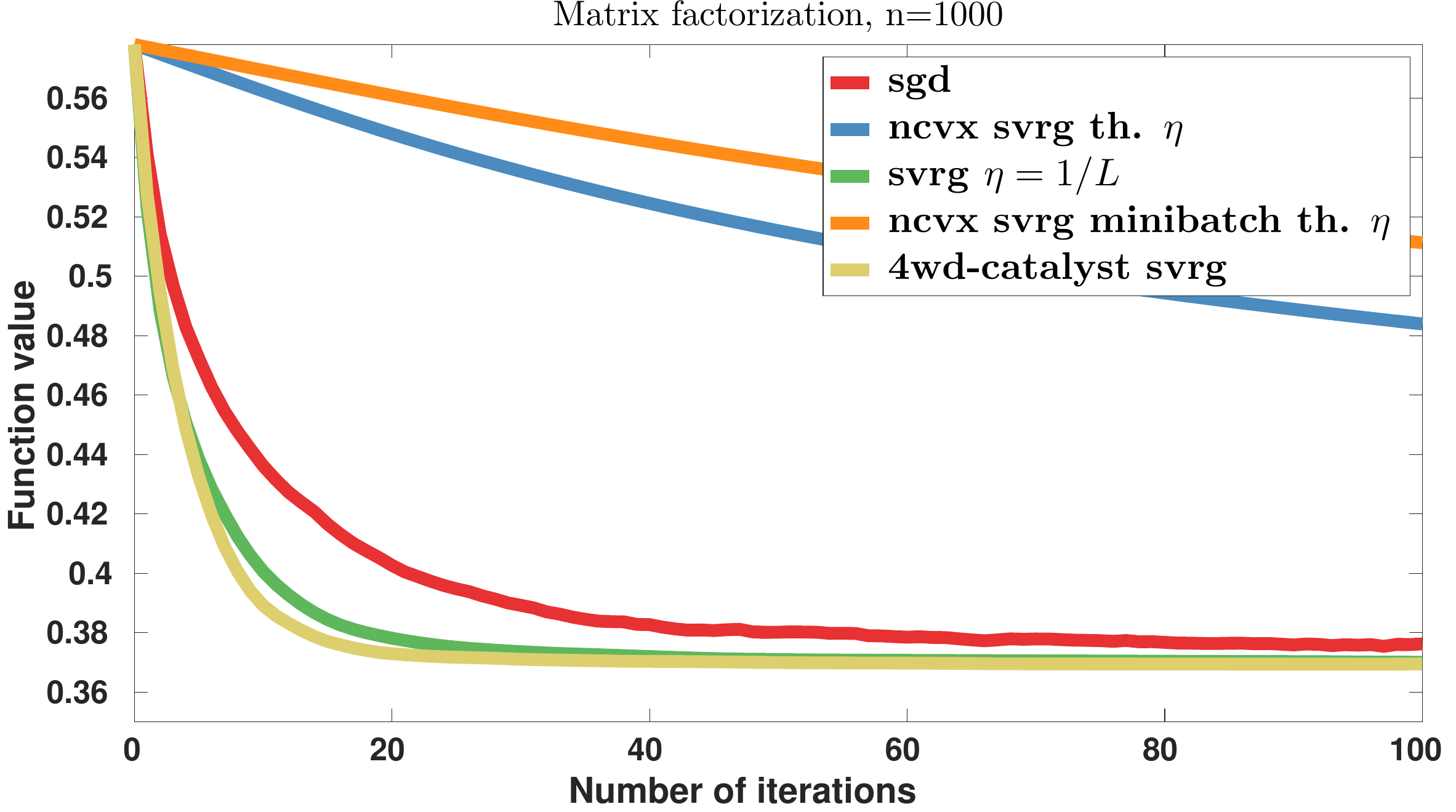}
   \includegraphics[width=.31\textwidth]{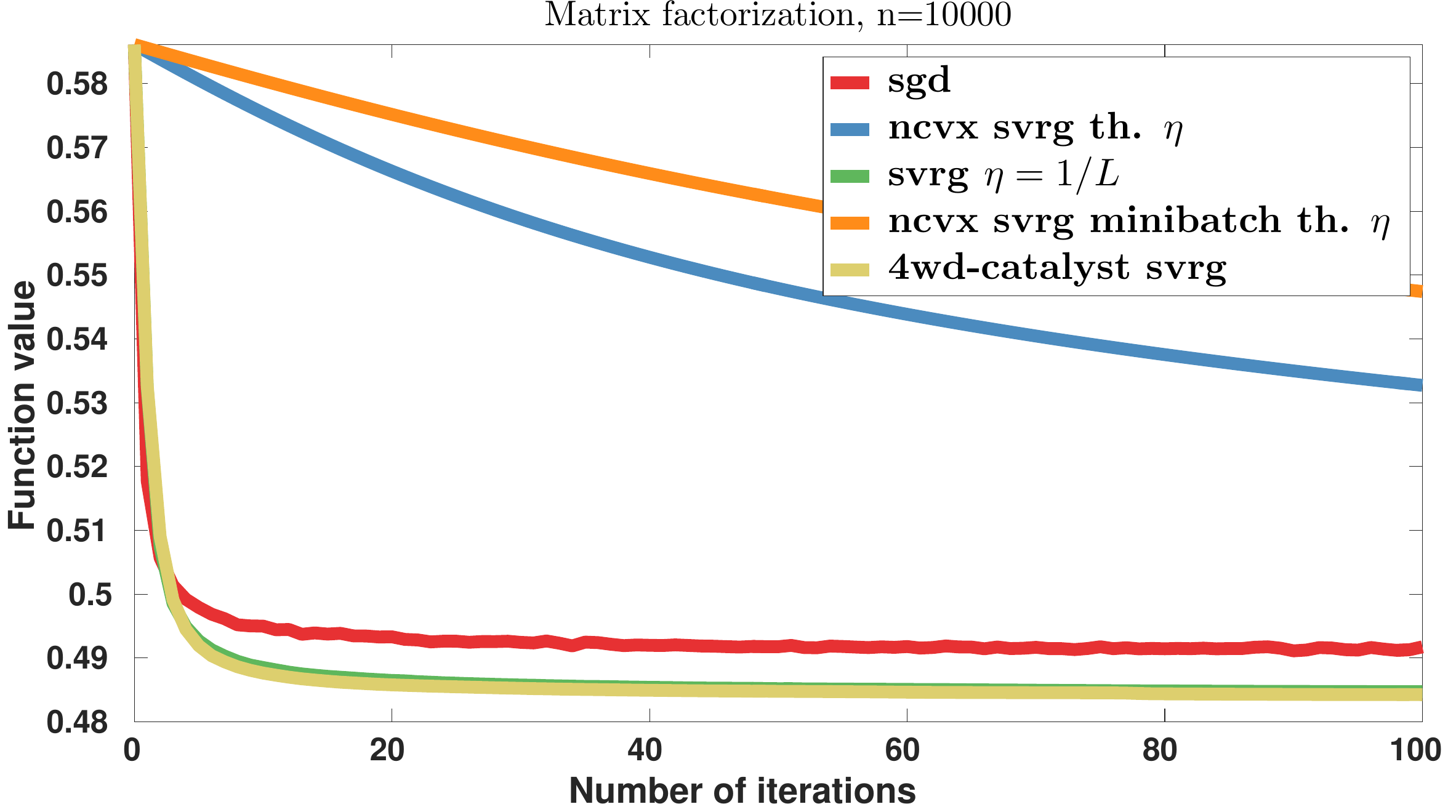}
   \includegraphics[width=.31\textwidth]{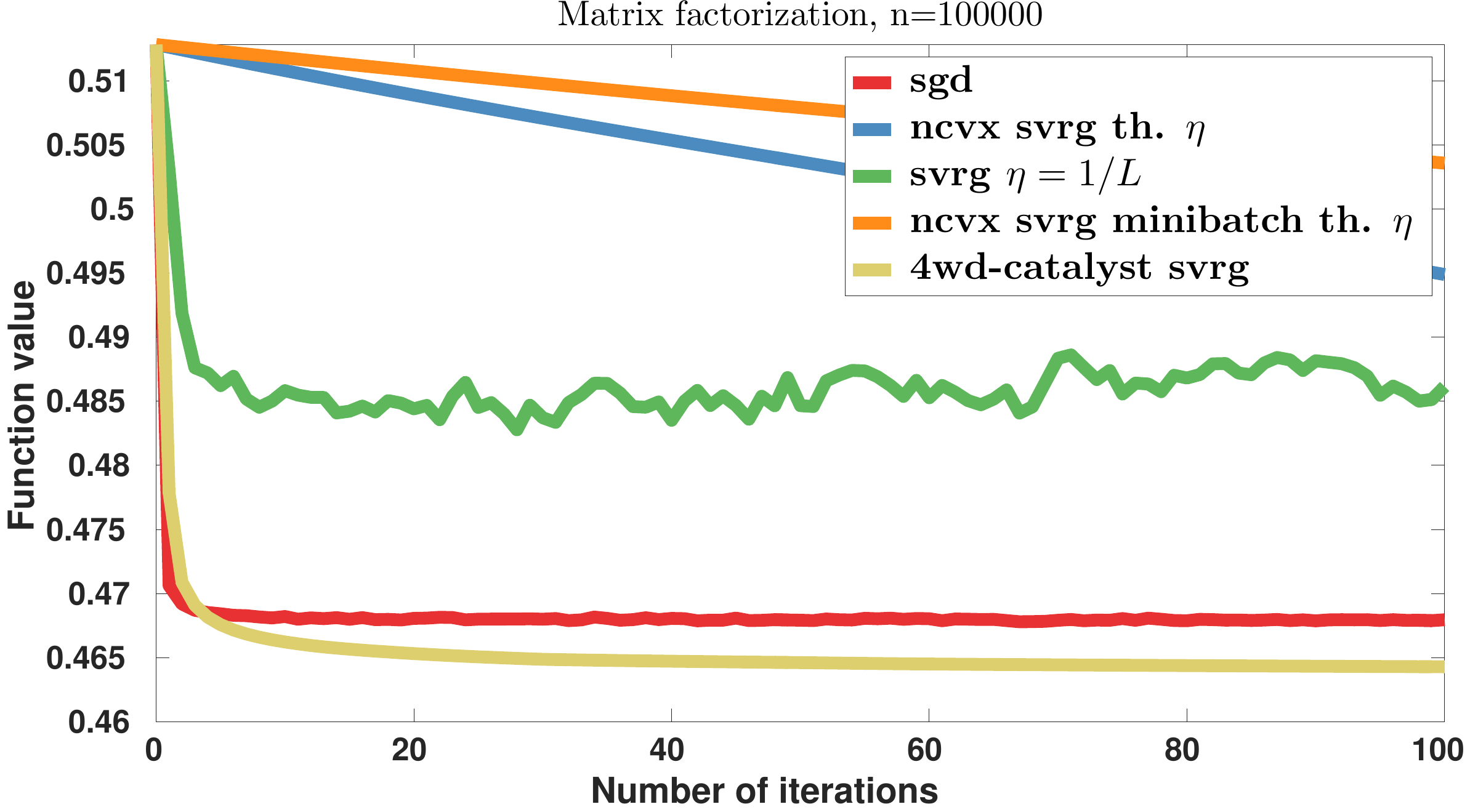}\\
   \includegraphics[width=.31\textwidth]{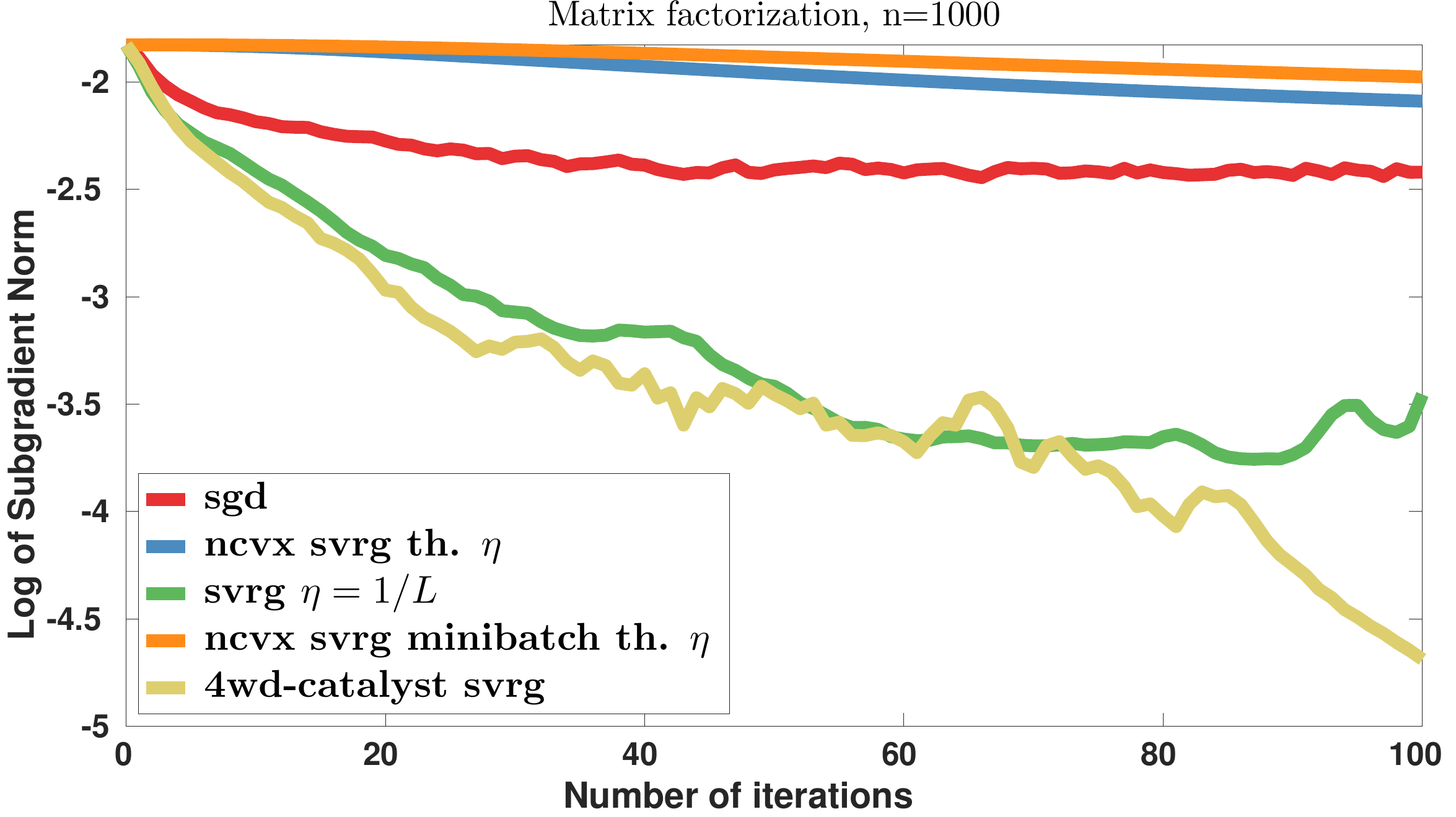}
   \includegraphics[width=.31\textwidth]{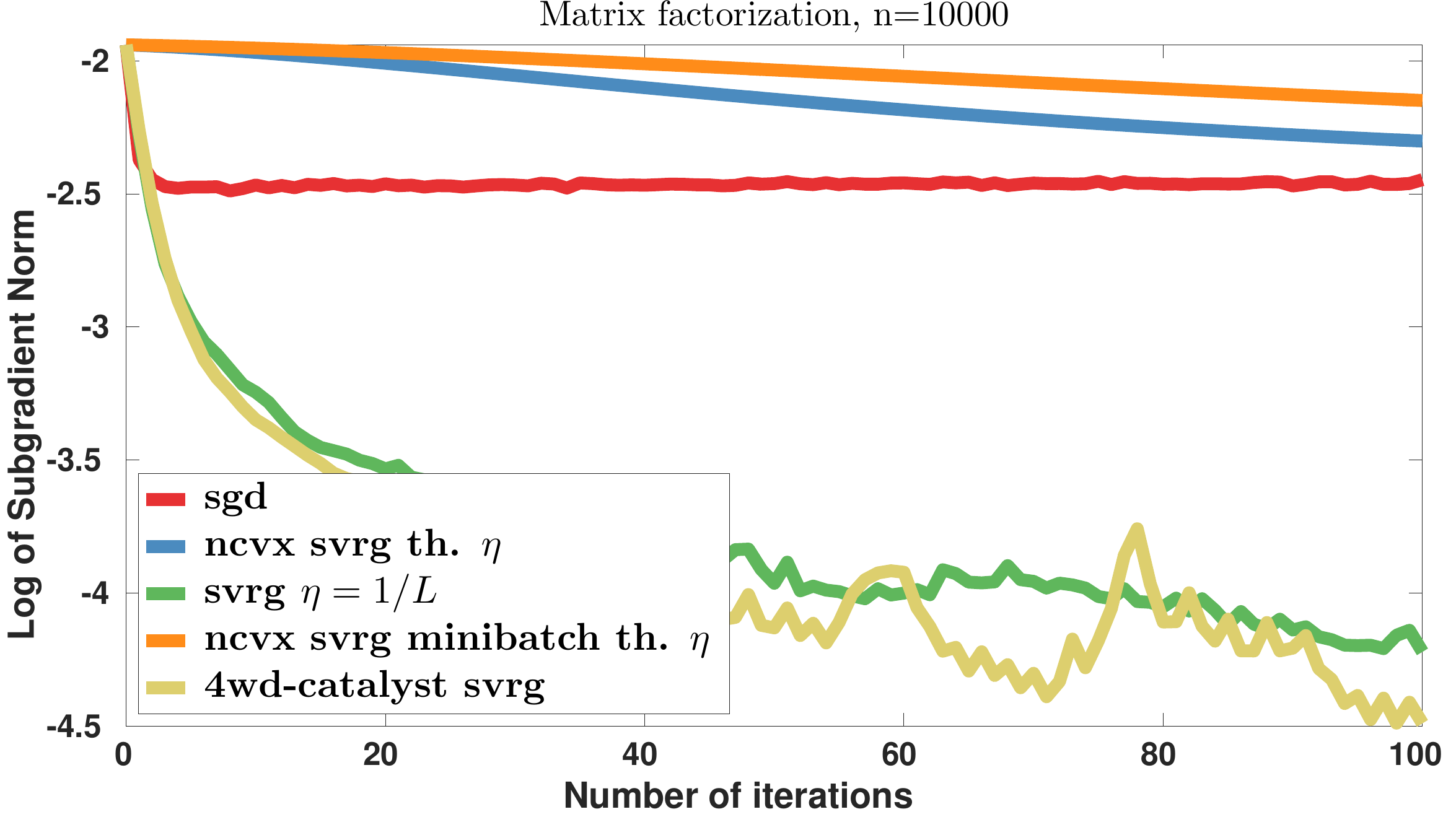}
   \includegraphics[width=.31\textwidth]{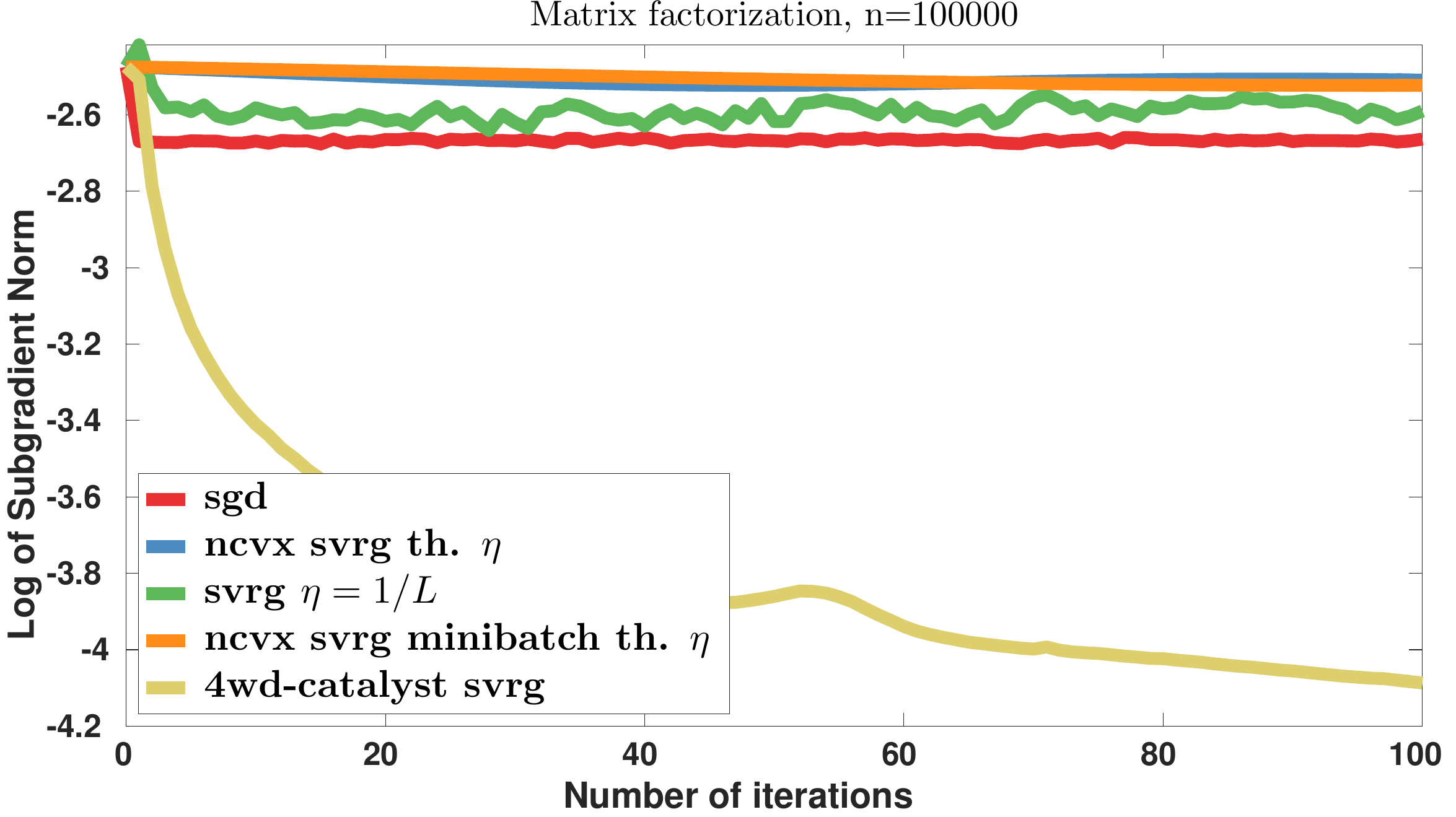}
   \vs
   \caption{Dictionary learning experiments using SVRG. We plot the function value (top) and the subgradient norm (bottom). From left to right, we vary the size of dataset from $n=1\,000$~~to~$n=100\,000$.}\label{fig:patches_append_svrg}
\end{figure*}

\begin{figure}[t!]
   \centering
   \includegraphics[width=.31\textwidth]{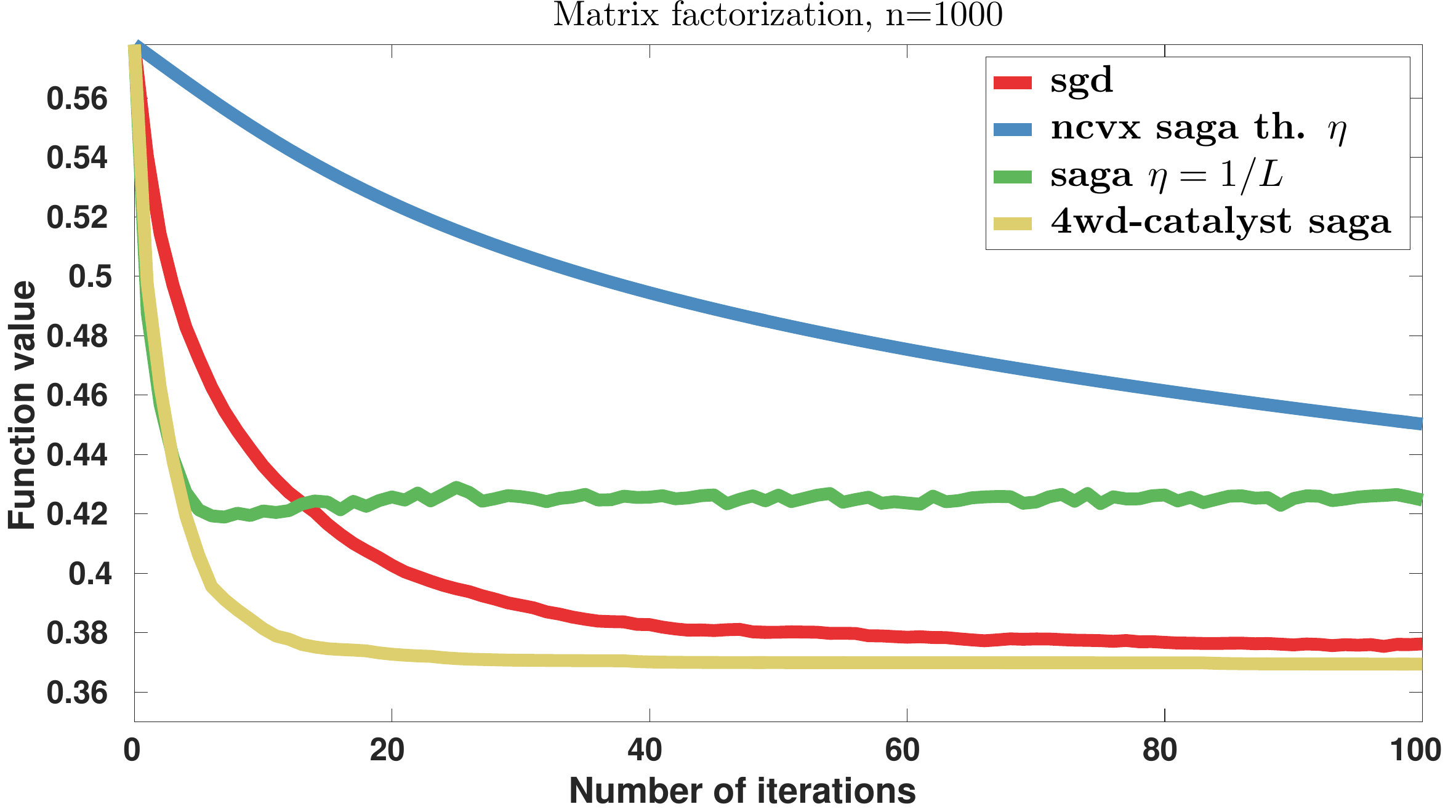}
   \includegraphics[width=.31\textwidth]{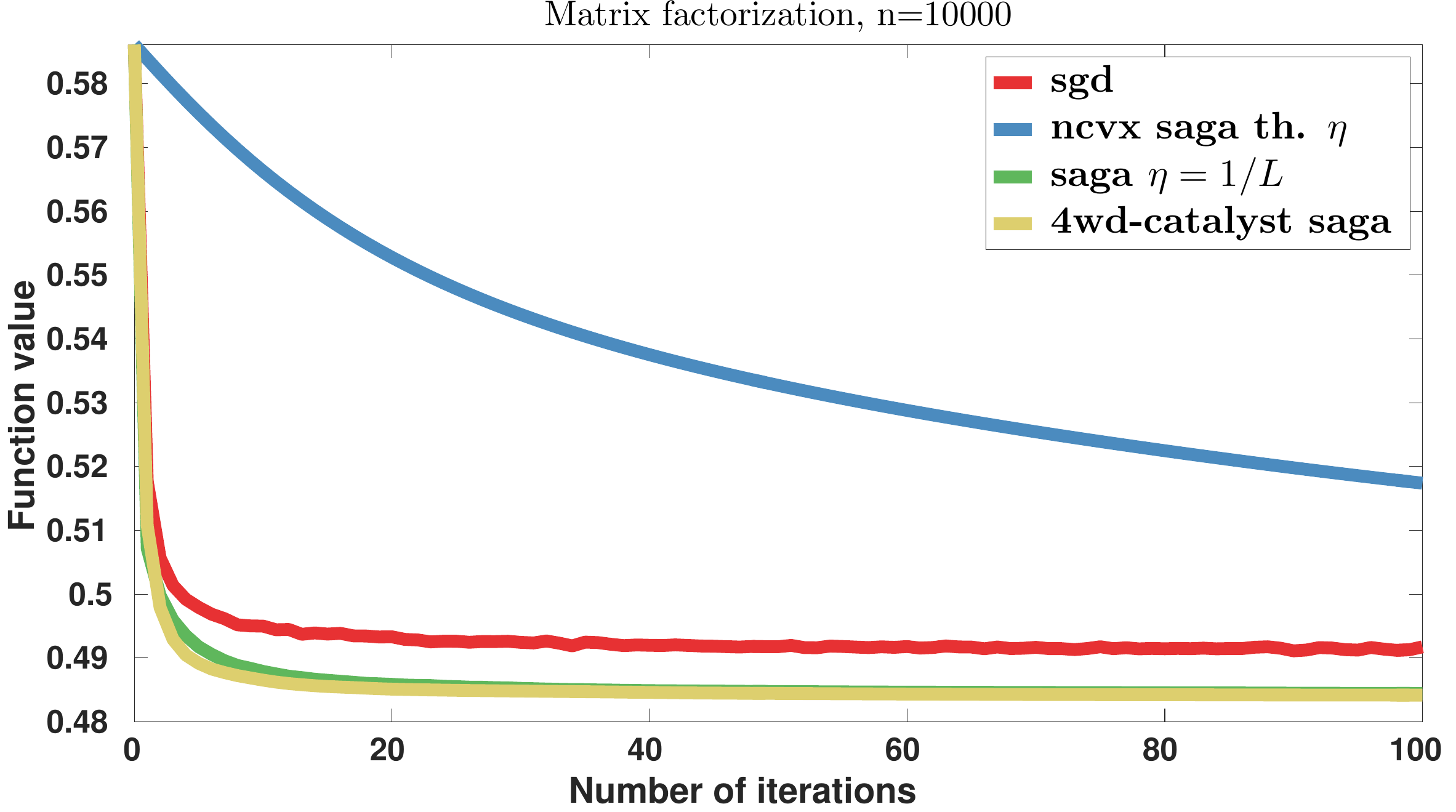}
   \includegraphics[width=.31\textwidth]{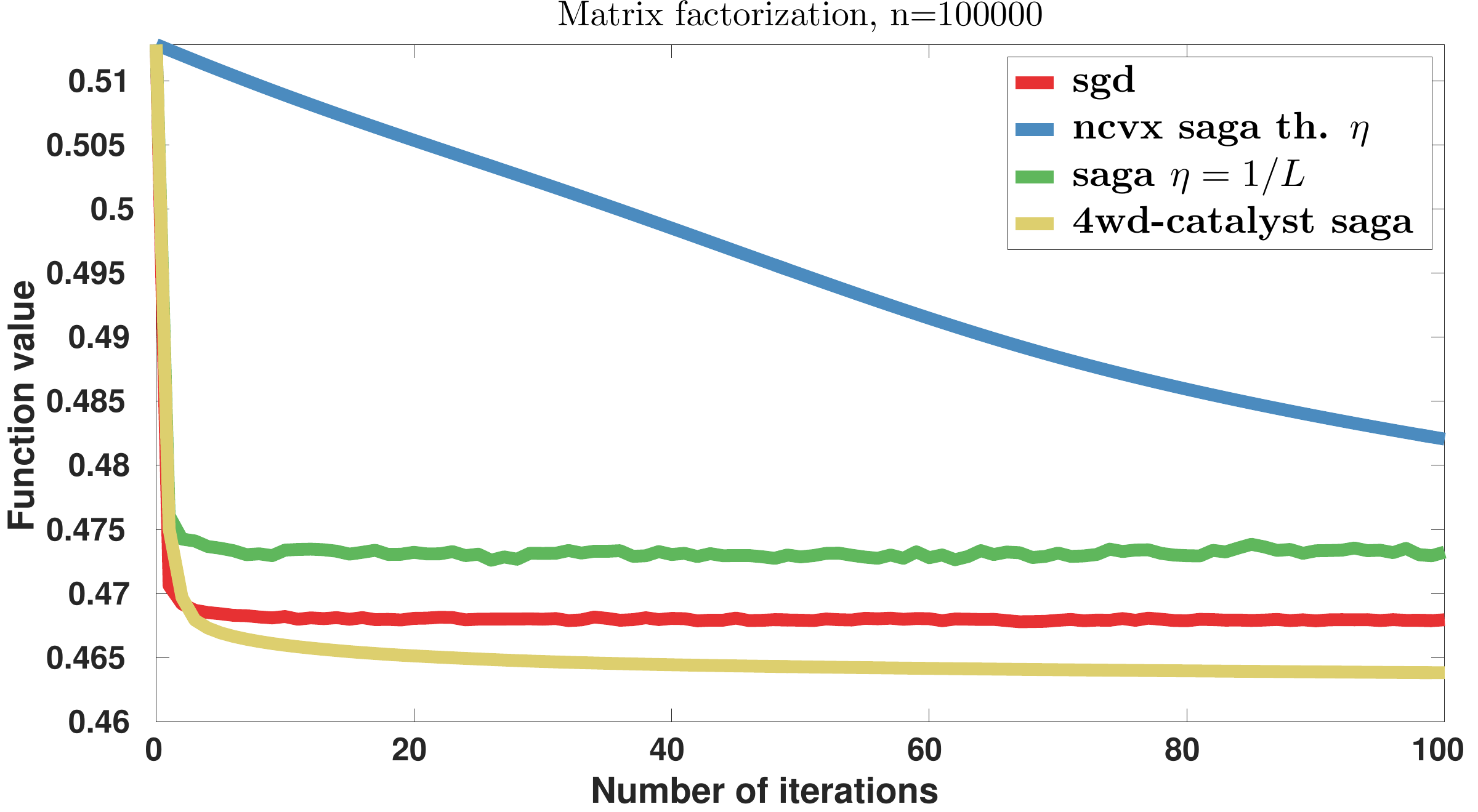}\\
   \includegraphics[width=.31\textwidth]{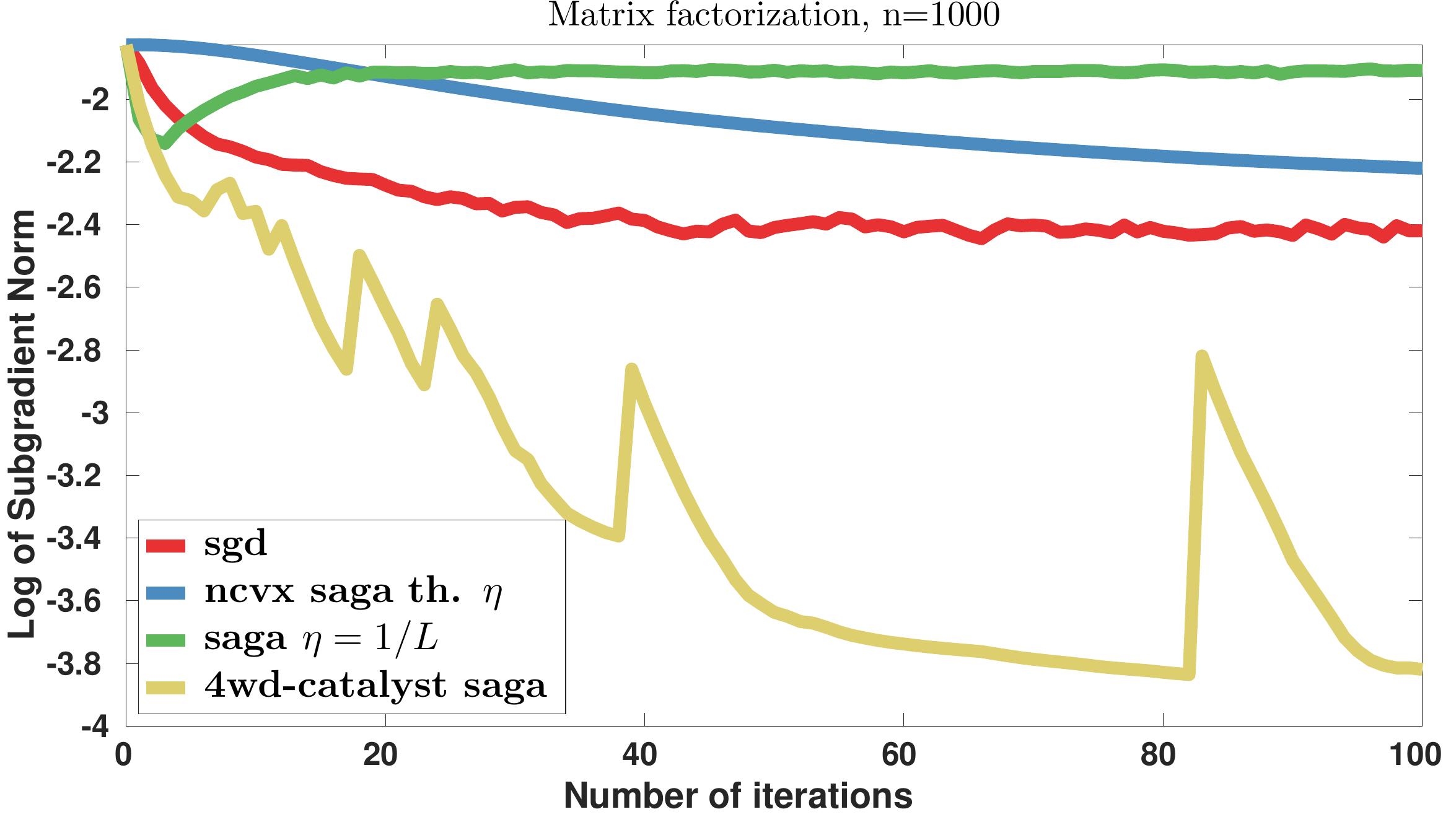}
   \includegraphics[width=.31\textwidth]{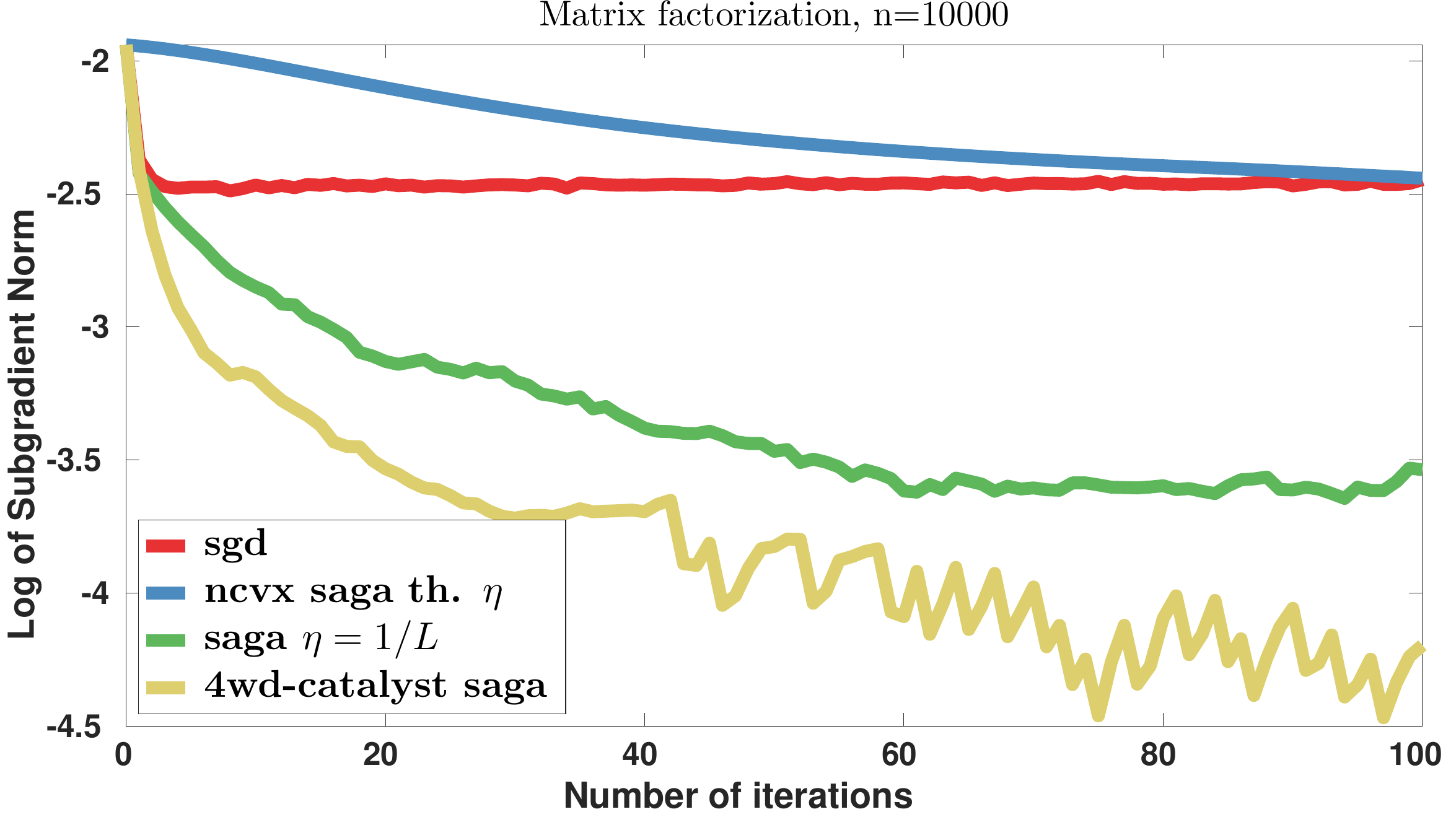}
   \includegraphics[width=.31\textwidth]{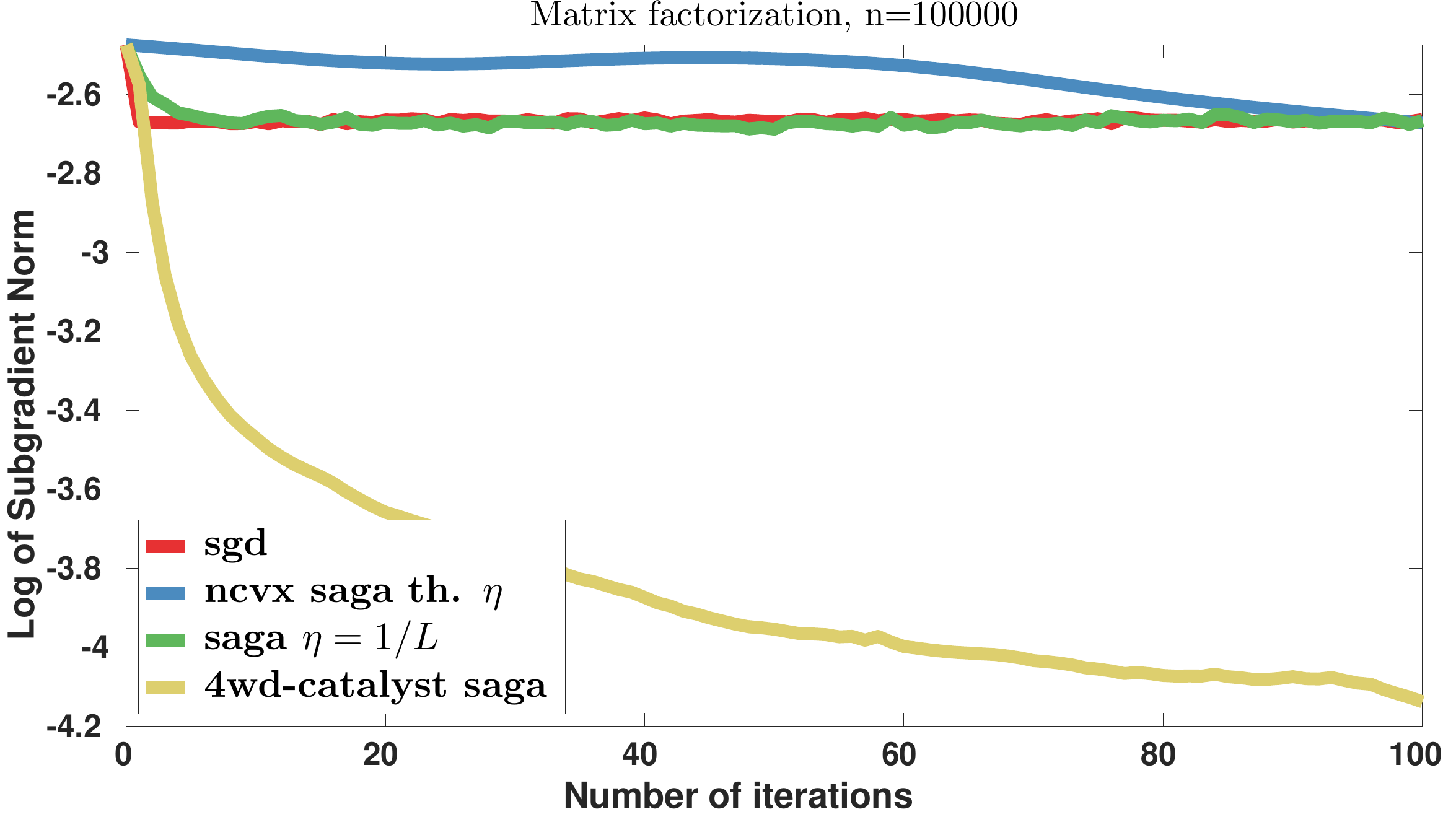}
   \caption{Dictionary learning experiments using SAGA. We plot the function value (top) and the subgradient norm (bottom). From left to right, we vary the size of dataset from $n=1\,000$~~to~$n=100\,000$.}\label{fig:patches_append_saga}
\end{figure}

\vs
\paragraph{Neural networks.}
We consider now simple binary classification problems for learning neural networks. 
Assume that we are given a training set $\{a_i,b_i\}_{i=1}^n$, where the
variables $b_i$ in $\{-1,+1\}$ represent class labels, and $a_i$ in $\R^p$ are
feature vectors.
The estimator of a label class is now given by a two-layer neural network
$\hat{b} = \text{sign}(w_2^\top \sigma ( W_1^\top a))$, where $W_1$
in~$\R^{p \times d}$ represents the weights of a hidden layer with~$d$
neurons, $w_2$ in~$\R^d$ carries the weight of the network's second layer,
and $\sigma(u) = \log(1+e^u)$ is a non-linear function, applied pointwise to its arguments.
We fix the number of hidden neurons to $d=100$ and use the logistic loss to fit the estimators to the true labels.
Since the memory required by SAGA becomes $n$ times larger than SVRG for nonlinear models, which is 
problematic for large $n$, we can only perform experiments with SVRG. The experimental results are reported
on two datasets~\textsf{alpha} and~\textsf{covtype} in Figures~\ref{fig:nn_append_alpha} 
and \ref{fig:nn_append_covtype}.

\begin{figure*}[t!]
   \includegraphics[width=.32\textwidth]{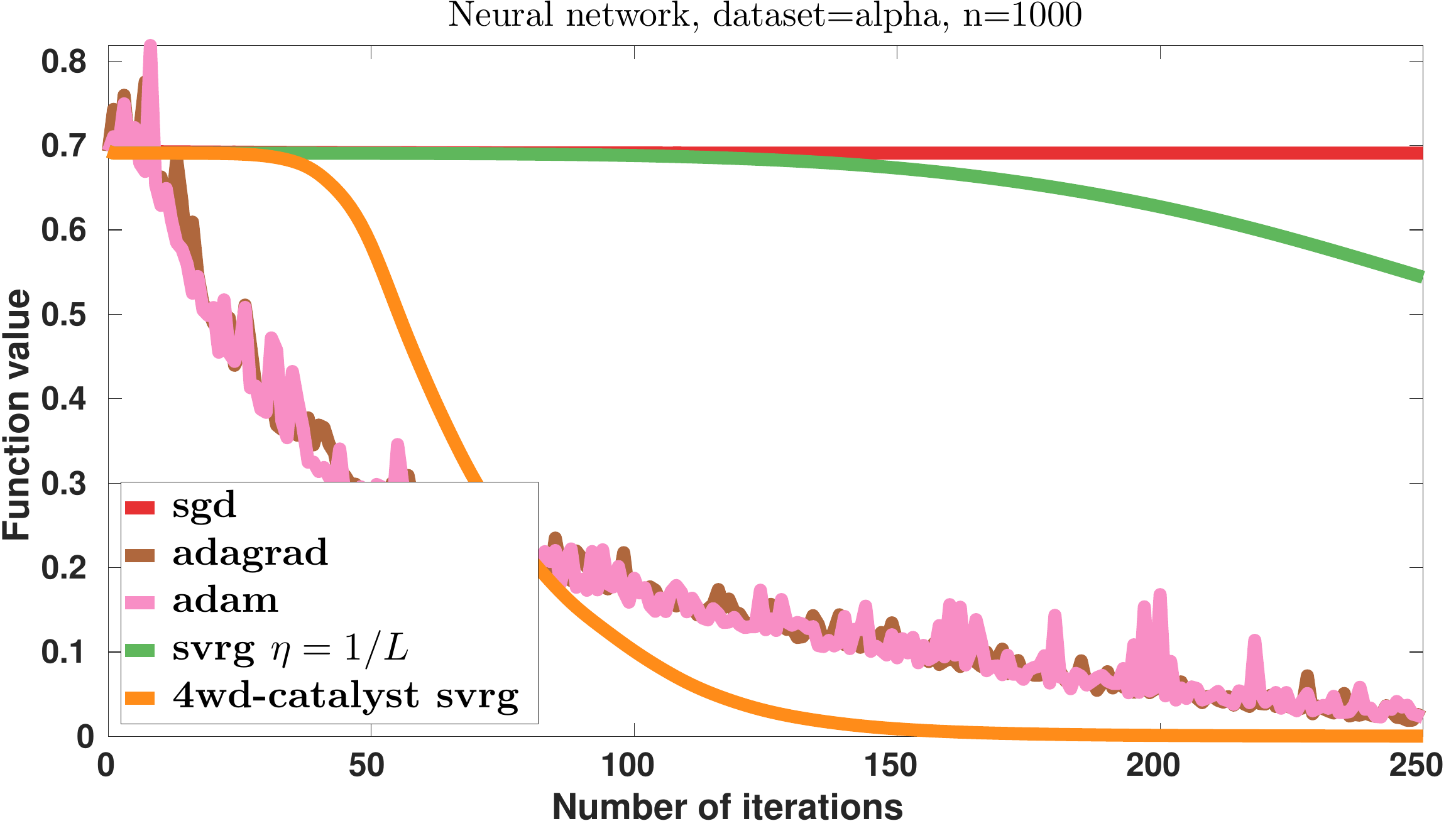}
   \includegraphics[width=.32\textwidth]{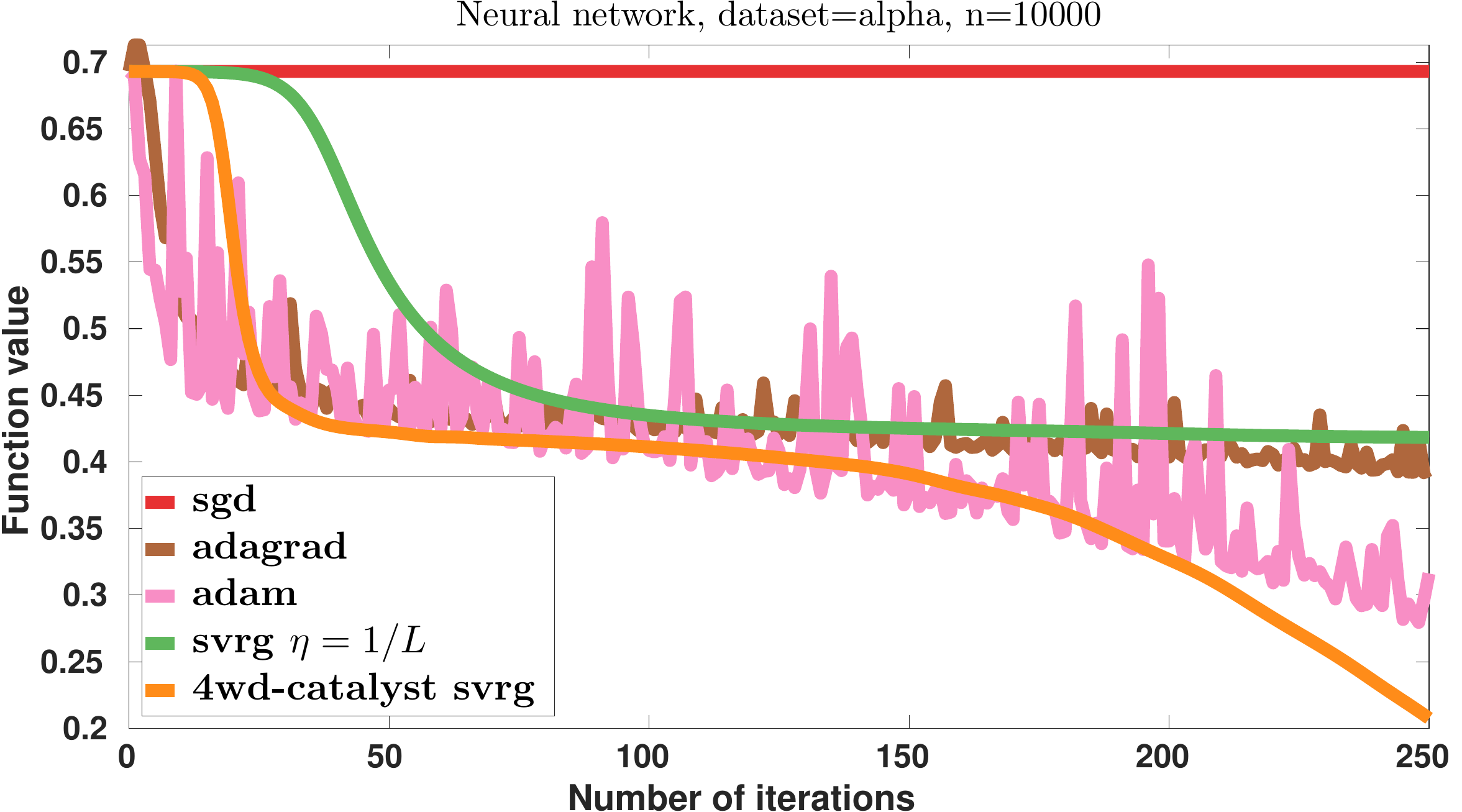}
   \includegraphics[width=.32\textwidth]{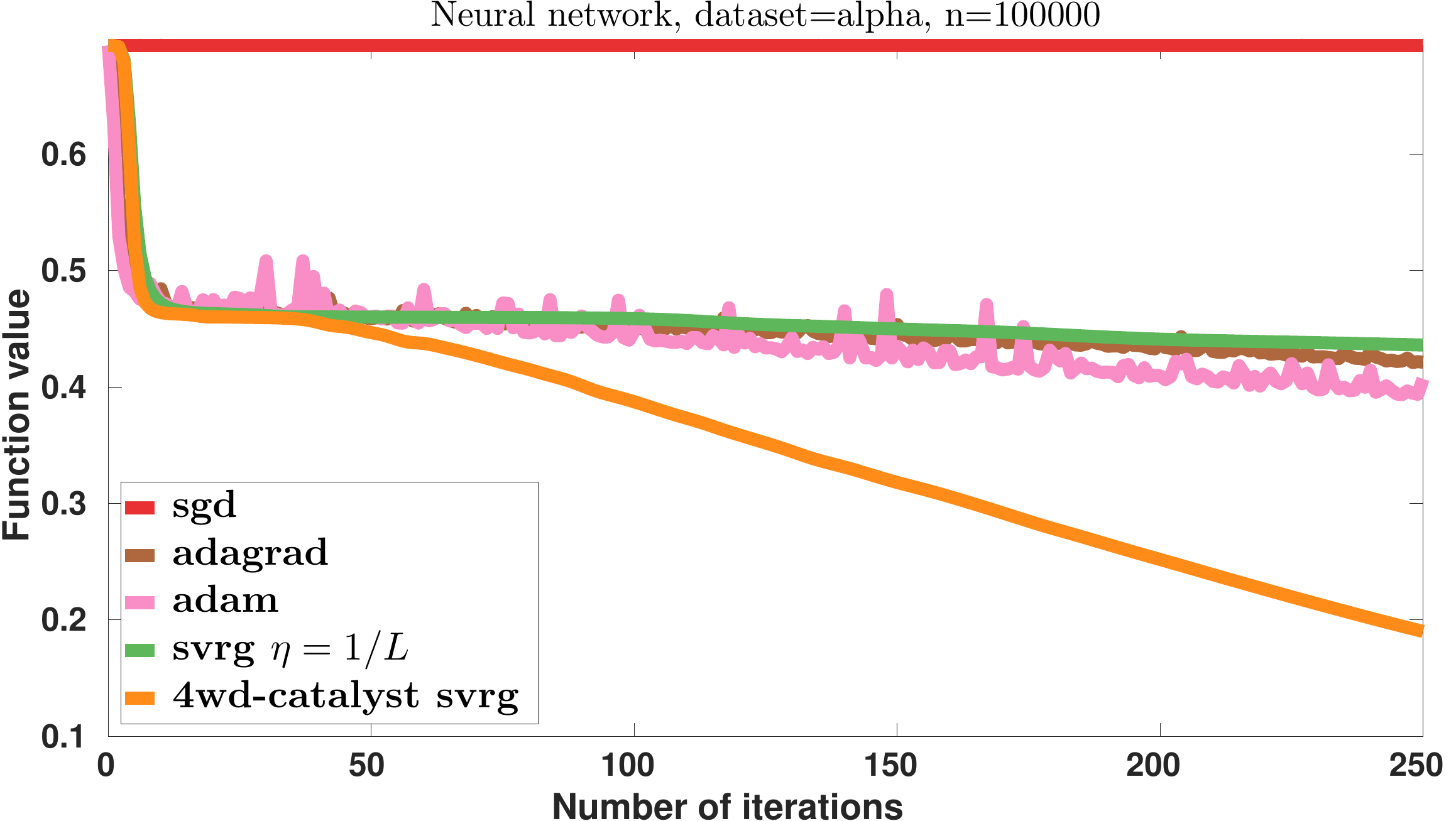} \\
   \includegraphics[width=.32\textwidth]{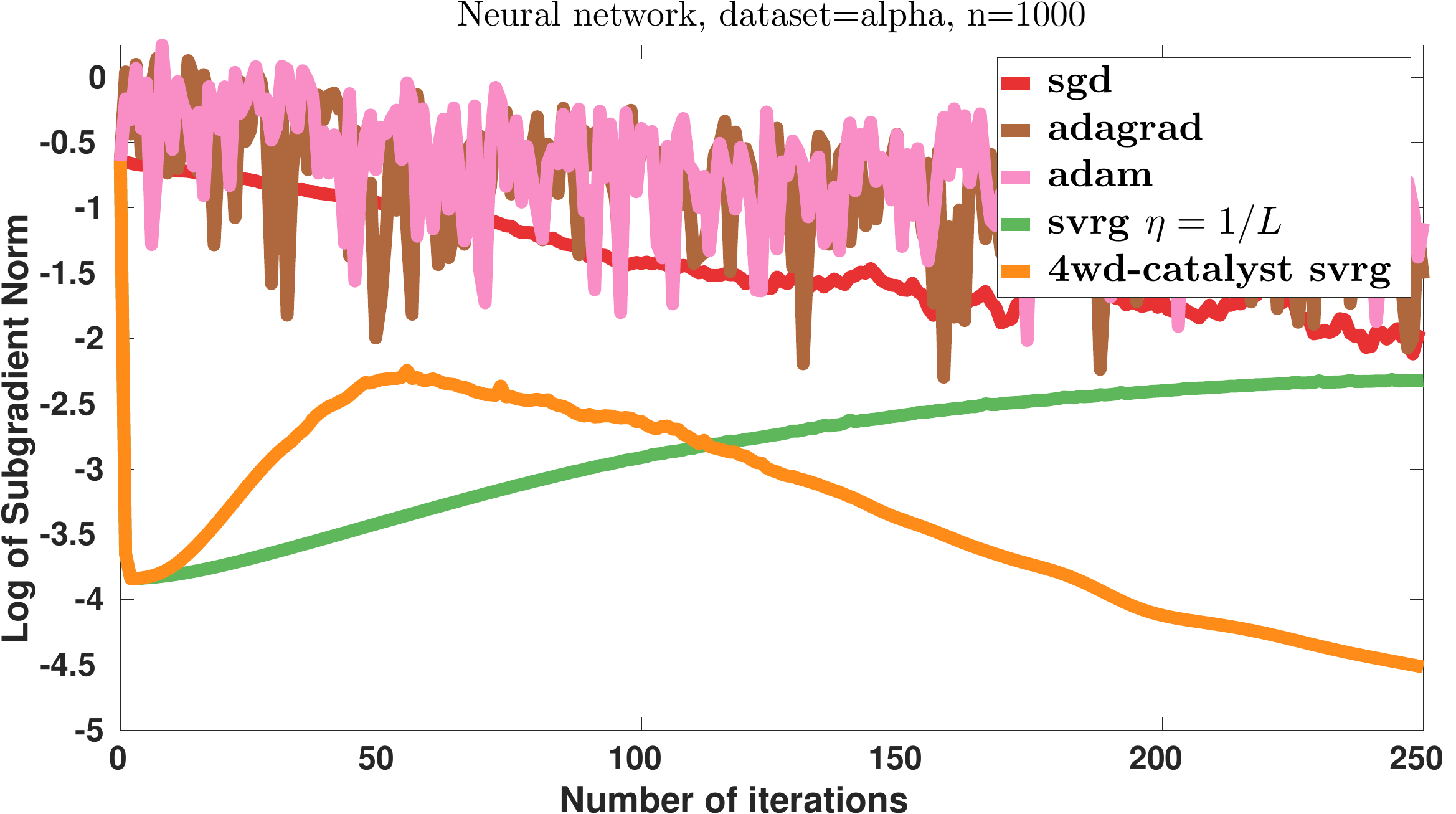}
   \includegraphics[width=.32\textwidth]{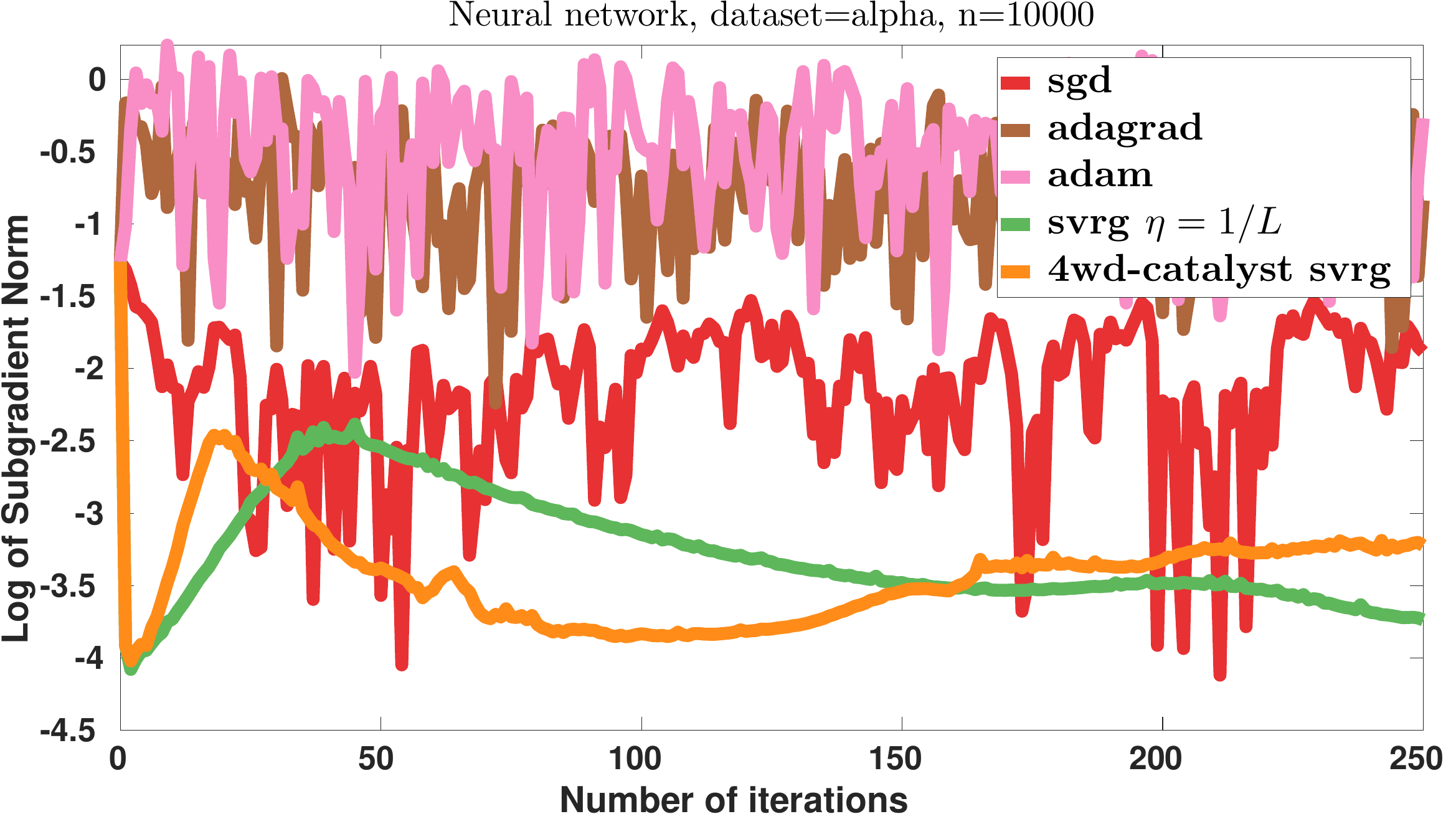}
   \includegraphics[width=.32\textwidth]{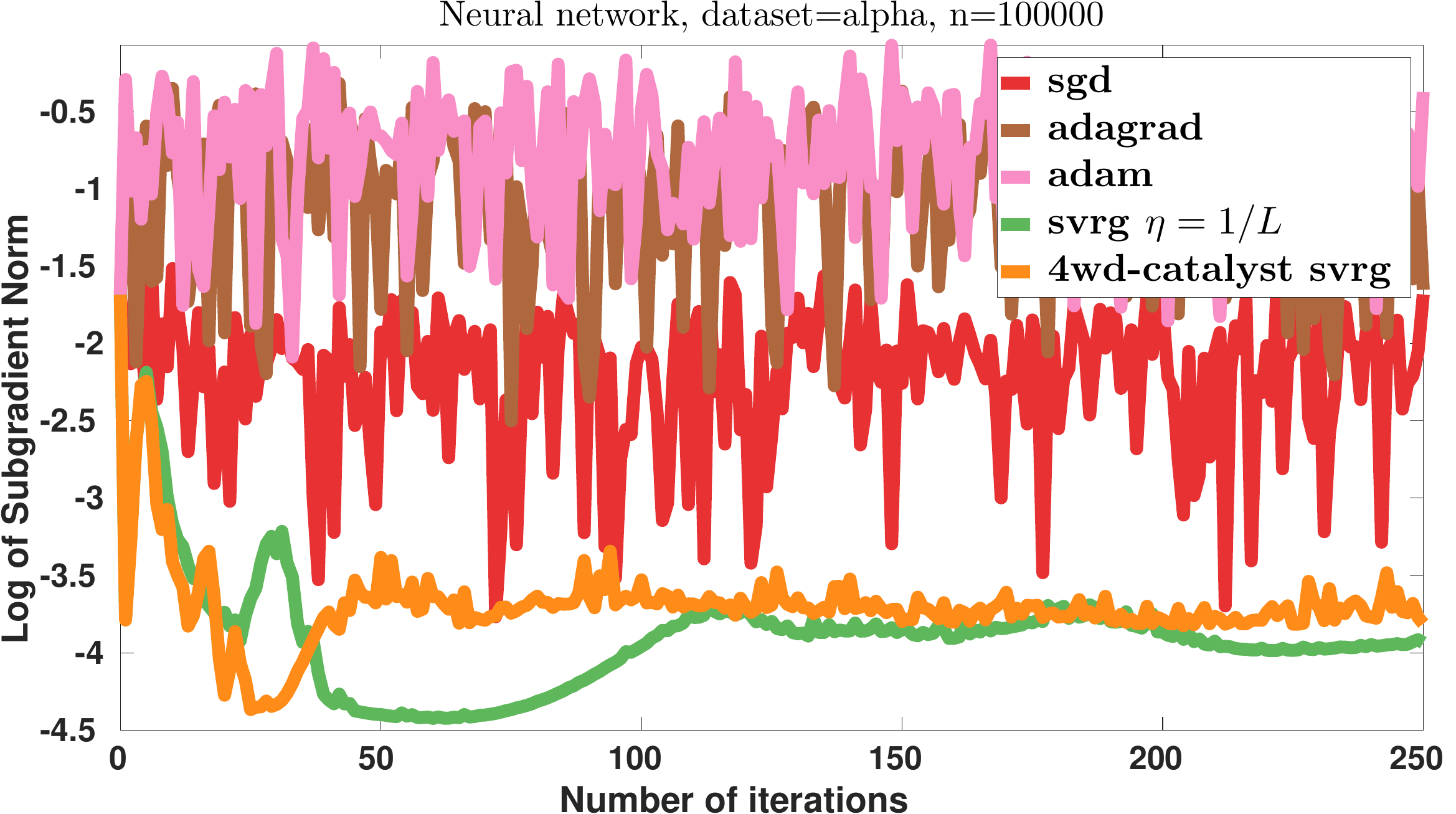} 
\vs
   \caption{Neural network experiments on subsets of dataset \textsf{alpha}. From left to right, we vary the size of the dataset's subset from $n=1\,000$ to~$n=100\,000$.}\label{fig:nn_append_alpha}
\vs
\end{figure*}

\begin{figure*}[t!]
   \includegraphics[width=.31\textwidth]{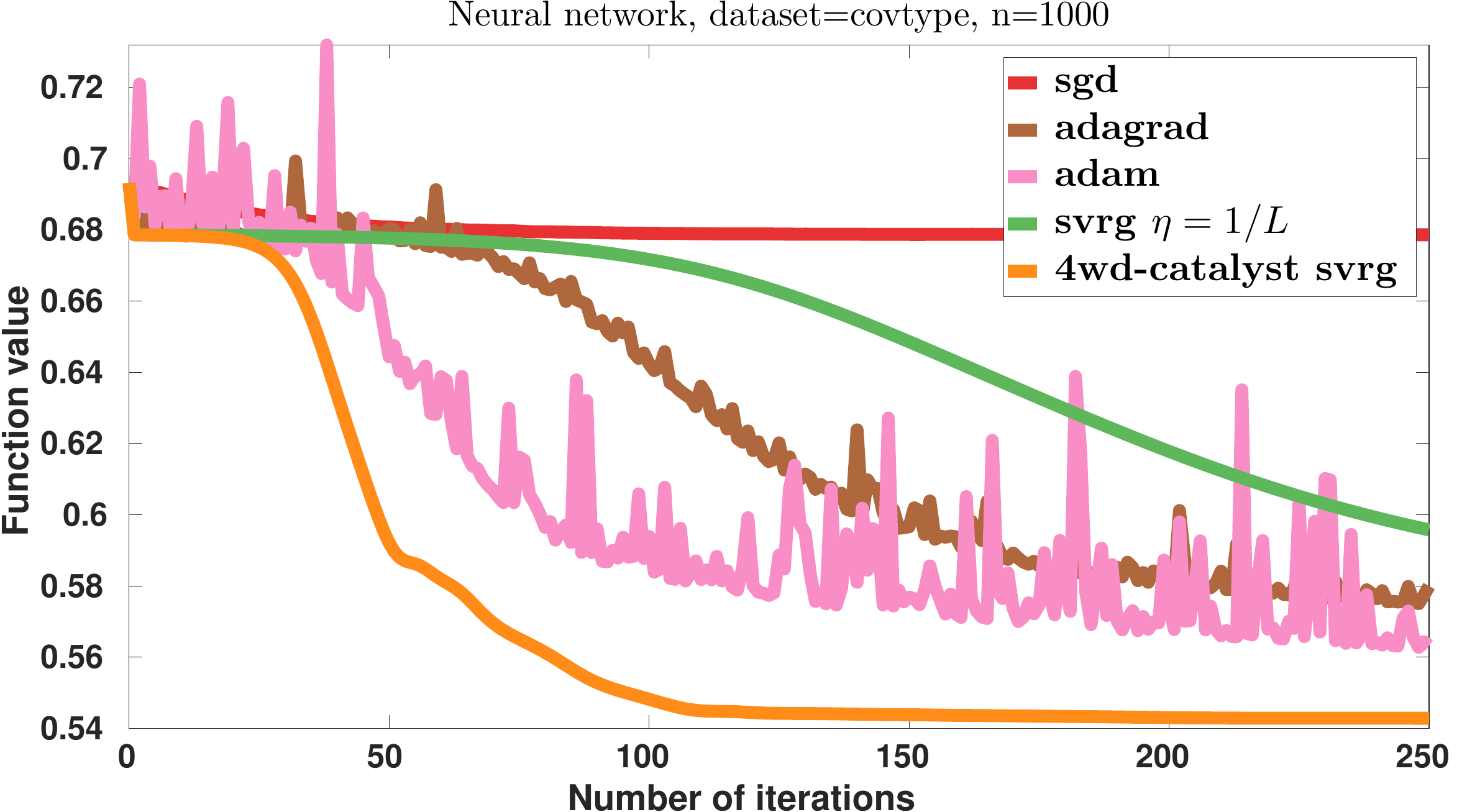}
   \includegraphics[width=.31\textwidth]{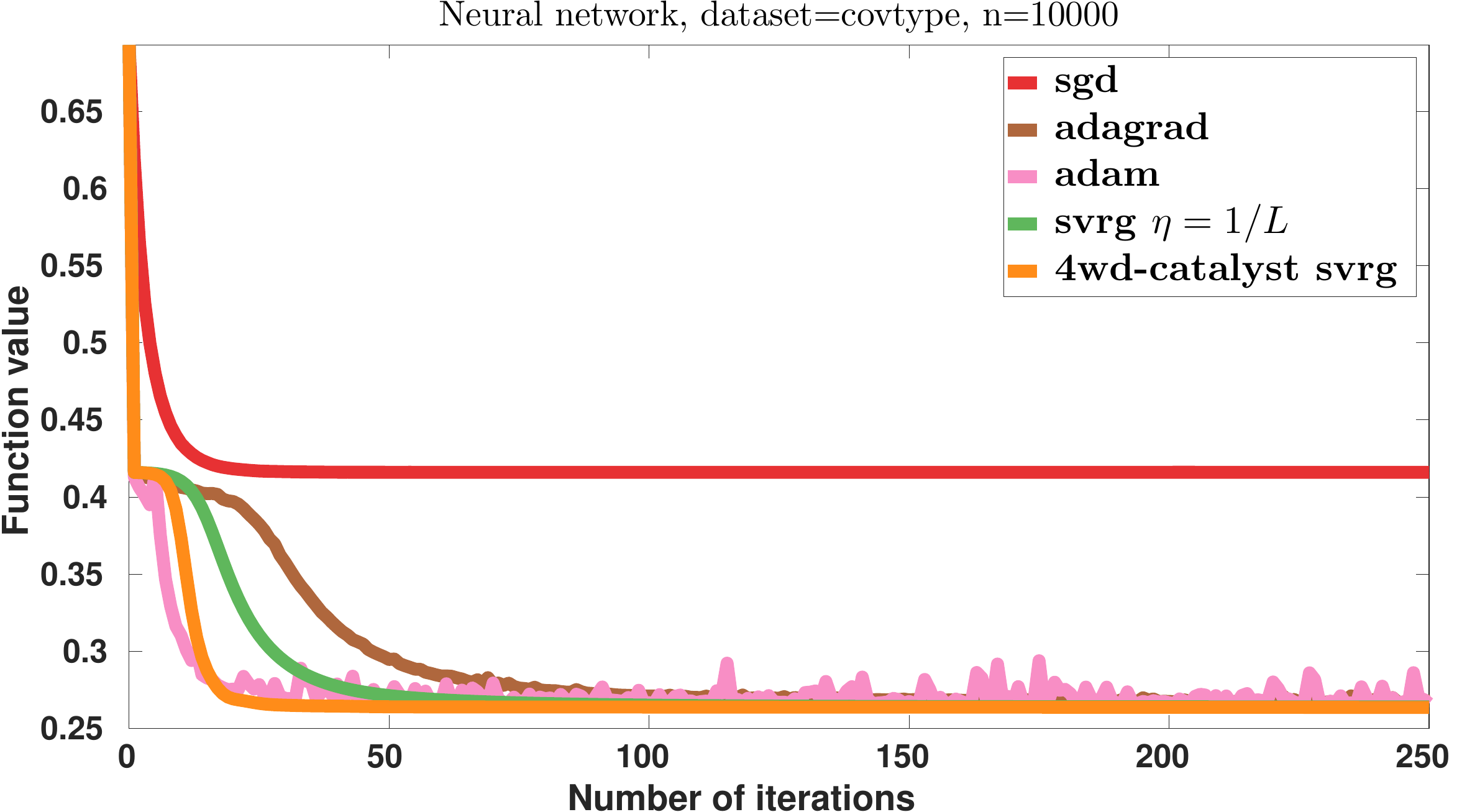}
   \includegraphics[width=.31\textwidth]{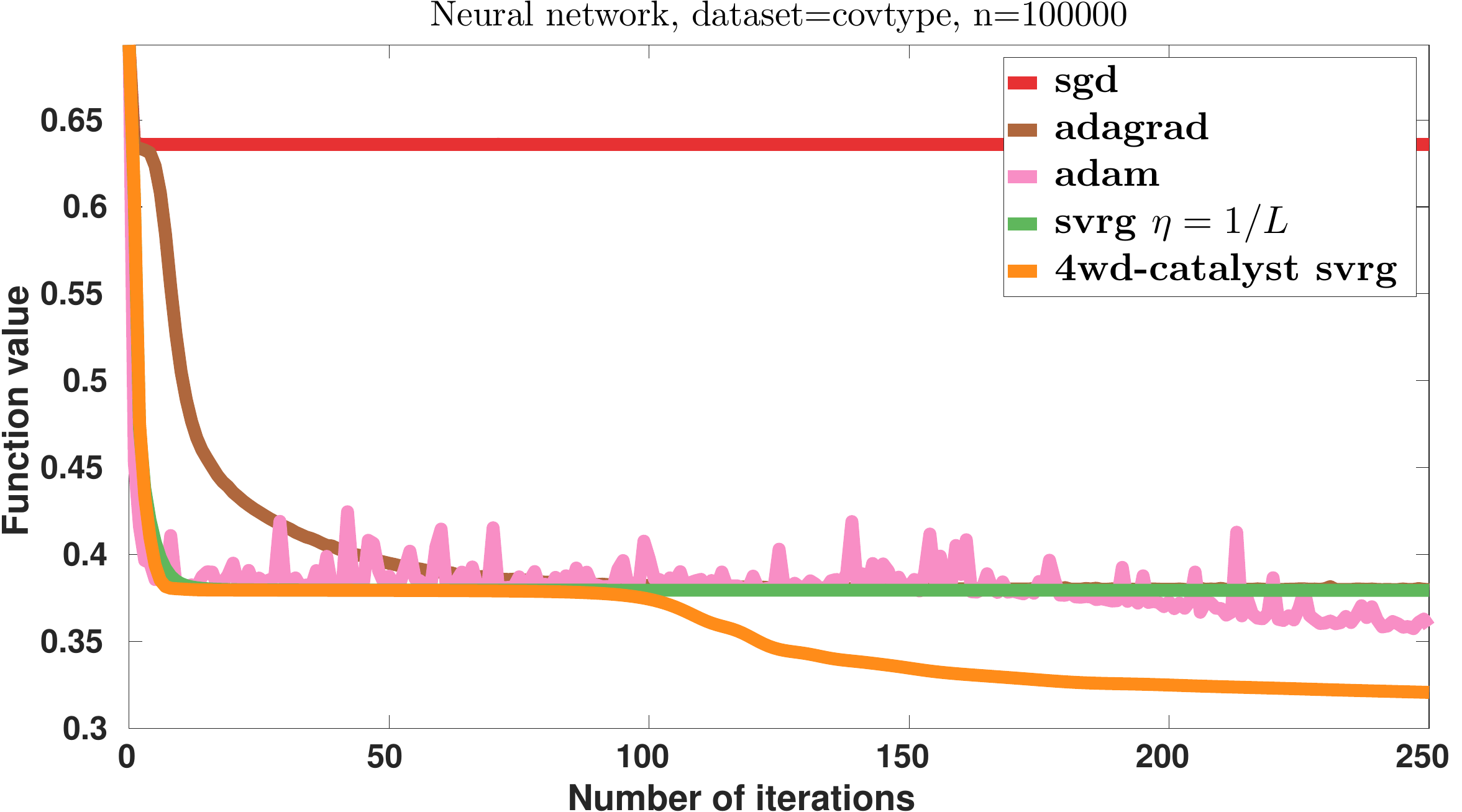} \\
   \includegraphics[width=.31\textwidth]{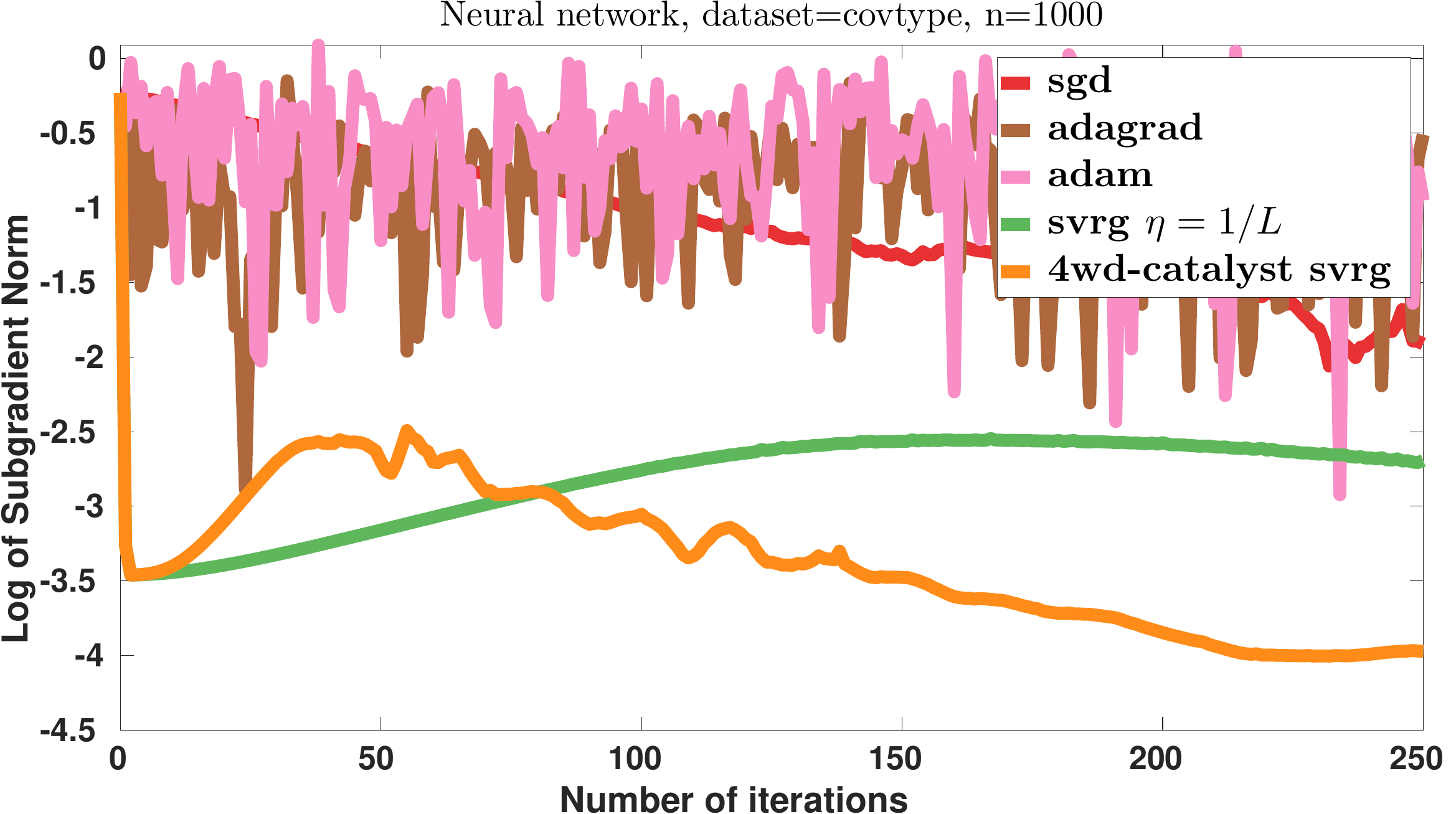}
   \includegraphics[width=.31\textwidth]{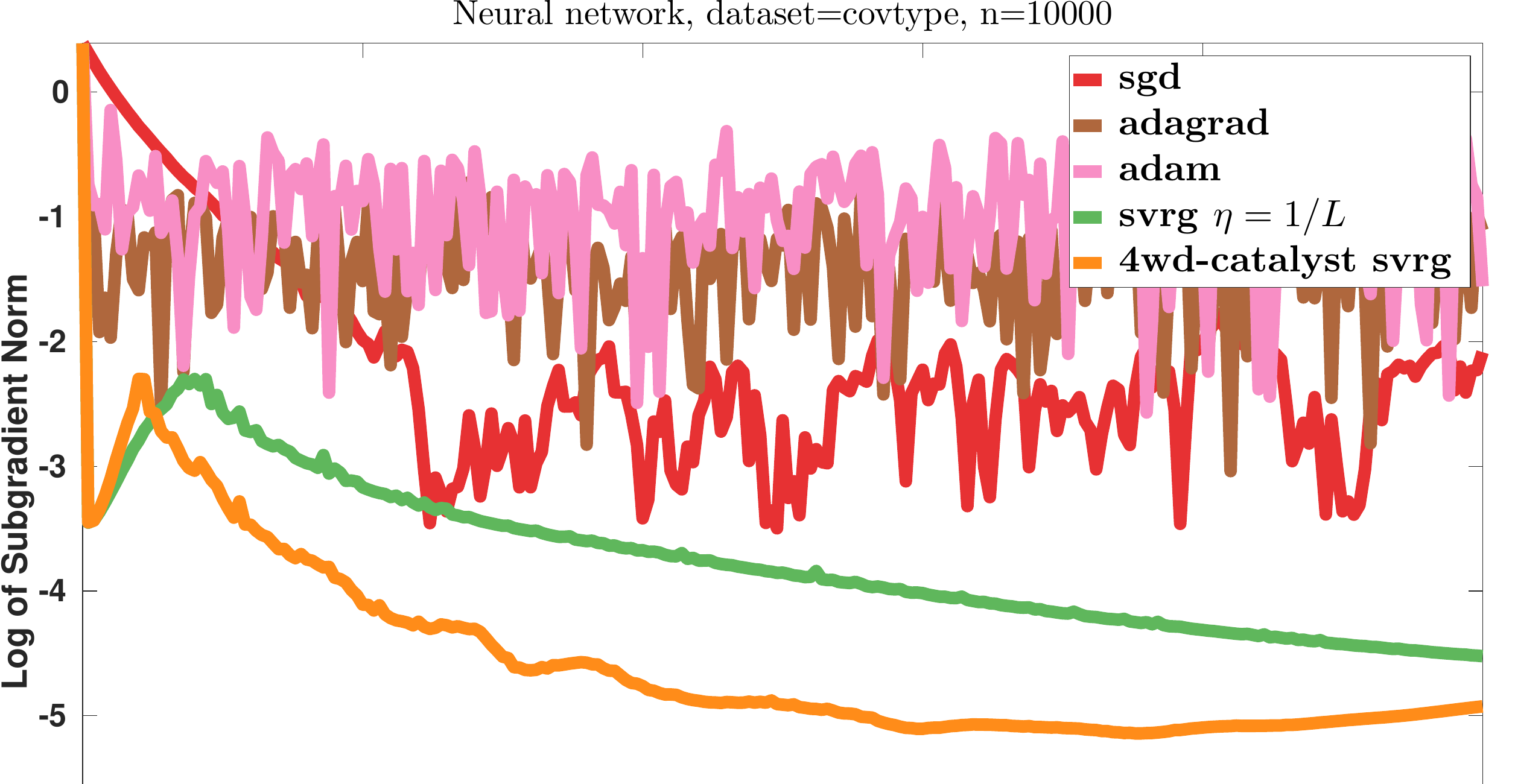}
   \includegraphics[width=.31\textwidth]{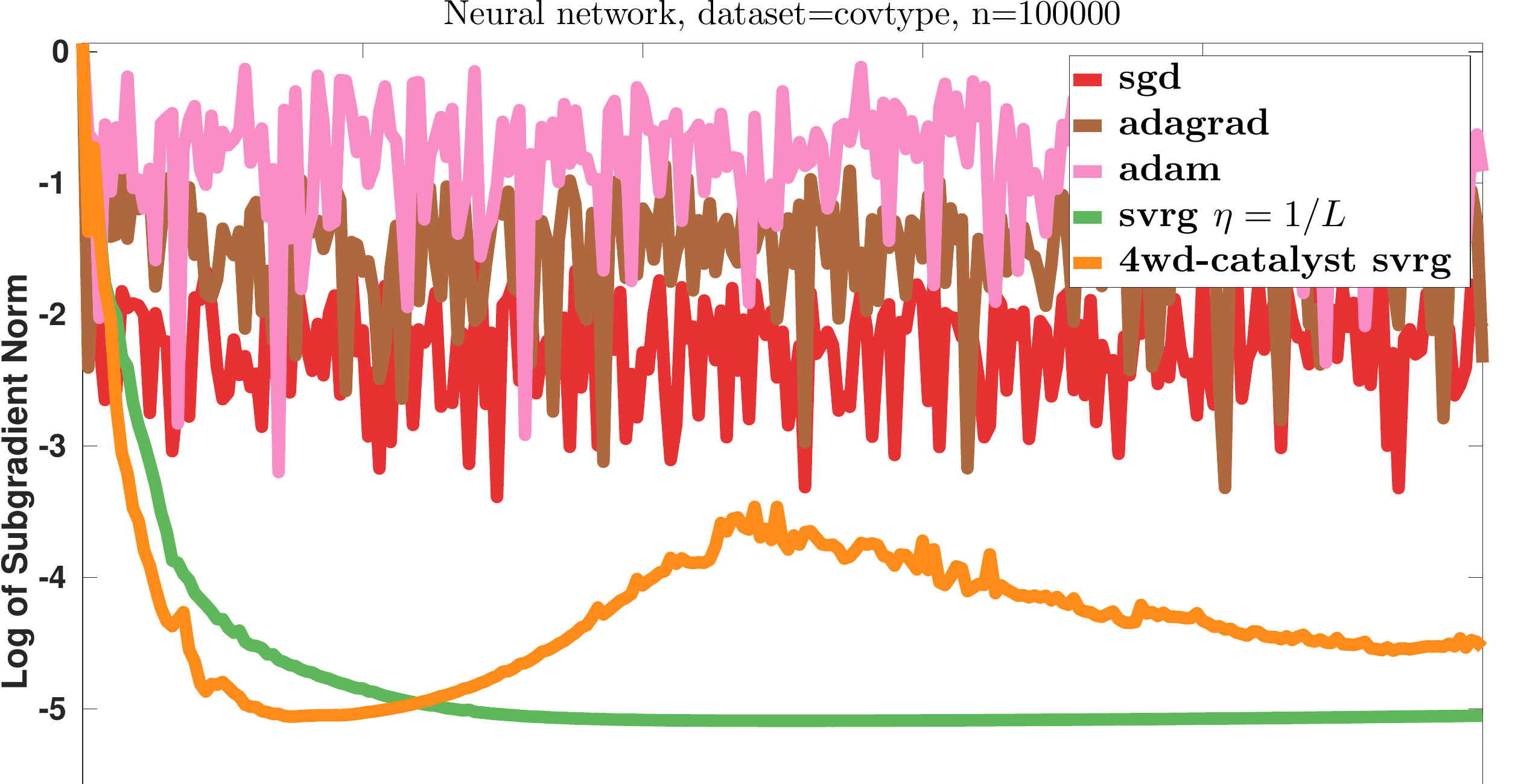}
   \caption{Neural network experiments on subsets of datasets \textsf{alpha}
   (top) and \textsf{covtype} (bottom).}
   \label{fig:nn_append_covtype}
\end{figure*}

\paragraph{Initial estimates of $L$.}
The proposed algorithm~\autonewalgosp requires an initial estimate of the Lipschitz constant $L$. 
In the problems we are considering, there is no simple closed form formula available to compute an estimate of $L$. 
We use the following heuristics to estimate $L$:
\begin{enumerate}
 \item For matrix factorization, it can be shown that the function $f_i$ defined in (\ref{dl1}) is differentiable 
 according to Danskin's theorem [see Bertsekas \cite{bertsekas1999nonlinear}, Proposition B.25] and its gradient is given by 
 $$\nabla_D f_i(D) = -(x_i-D\alpha_i(D))\alpha_i(D)^T \quad \text{ where } \quad \alpha_i(D)\in \argmin_{\alpha \in \R^p} \frac{1}{2} \norm{x_i-D\alpha}^2+\psi(\alpha). $$
 If the coefficients~$\alpha_i$ were fixed, the gradient would be linear in $D$ and thus admit $\norm{\alpha_i}^2$ as Lipschitz constant. Therefore, when initializing our algorithm 
 at $D_0$, we find $\alpha_i(D_0)$ for any $i \in [1,n]$ and use $\max_{i \in [1,n]}{\norm{\alpha_i(D_0)}^2}$ as an estimate of $L$.
 \item For neural networks, the formulation we are considering is differentiable. 
 We randomly generate two pairs of weight vectors $(W_1,W_2)$ and $(W_1',W_2')$ and use the quantity
 $$ \max_{i \in [1,n]} \left \{ \frac{\norm{\nabla f_i(W_1,W_2) - \nabla f_i(W_1',W_2)}}{\norm{W_1-W_1'}},\frac{\norm{\nabla f_i(W_1,W_2) - \nabla f_i(W_1,W_2')}}{\norm{W_2-W_2'}} \right \} $$
as an estimate of the Lipschitz constant, where $f_i$ denotes the loss function respect to $i$-th training sample $(a_i,b_i)$. 
We separate weights in each layer to estimate the Lipschitz constant \emph{per layer}. Indeed the scales of the weights can be quite different across layers.  
 
\end{enumerate}

\vs
\paragraph{Computational cost.}
For SGD, AdaGrad, Adam, and all the ncvx-SVRG/SAGA variants, one iteration
corresponds to one pass over the data in the plots. On the one hand,
since~\autonewalgo-SVRG/SAGA solves two sub-problems per iteration, the
cost per iteration is twice as large as the other algorithms. 
In our experiments, we observe that every time acceleration occurs then $\tilde{x}_k$ is almost always preferred to $\bar{x}_k$ in step 4 of \autonewalgo, half of the computations are in fact not performed when running~\autonewalgo-SVRG/SAGA.  

We report in Figure~\ref{fig:nn_append_covtype_logk} an experimental study where we vary $S$ on the neural network example. 
In terms of number of iterations, of course, the larger $S_k$ the better the performance. This is
not surprising as we solve each subproblem more accurately. Nevertheless, in terms of number of 
gradient evaluations, the relative performance is reversed. There is clearly no benefit to take larger $S_k$. 
This justifies in hindsight our choice of setting $S=n$.   

 \begin{figure*}[t!]
 \begin{center}
   \includegraphics[width=.41\textwidth]{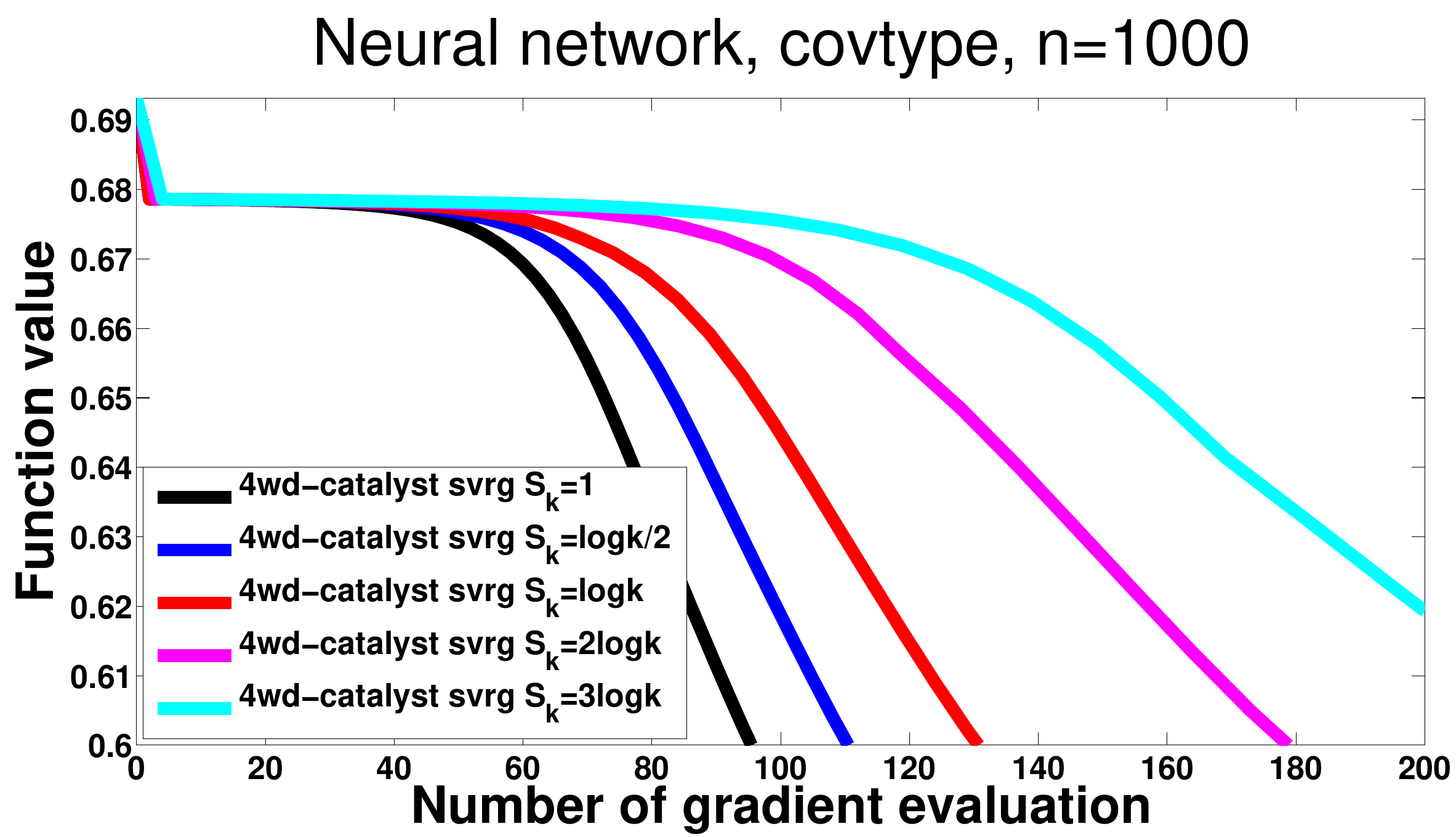}
   \includegraphics[width=.41\textwidth]{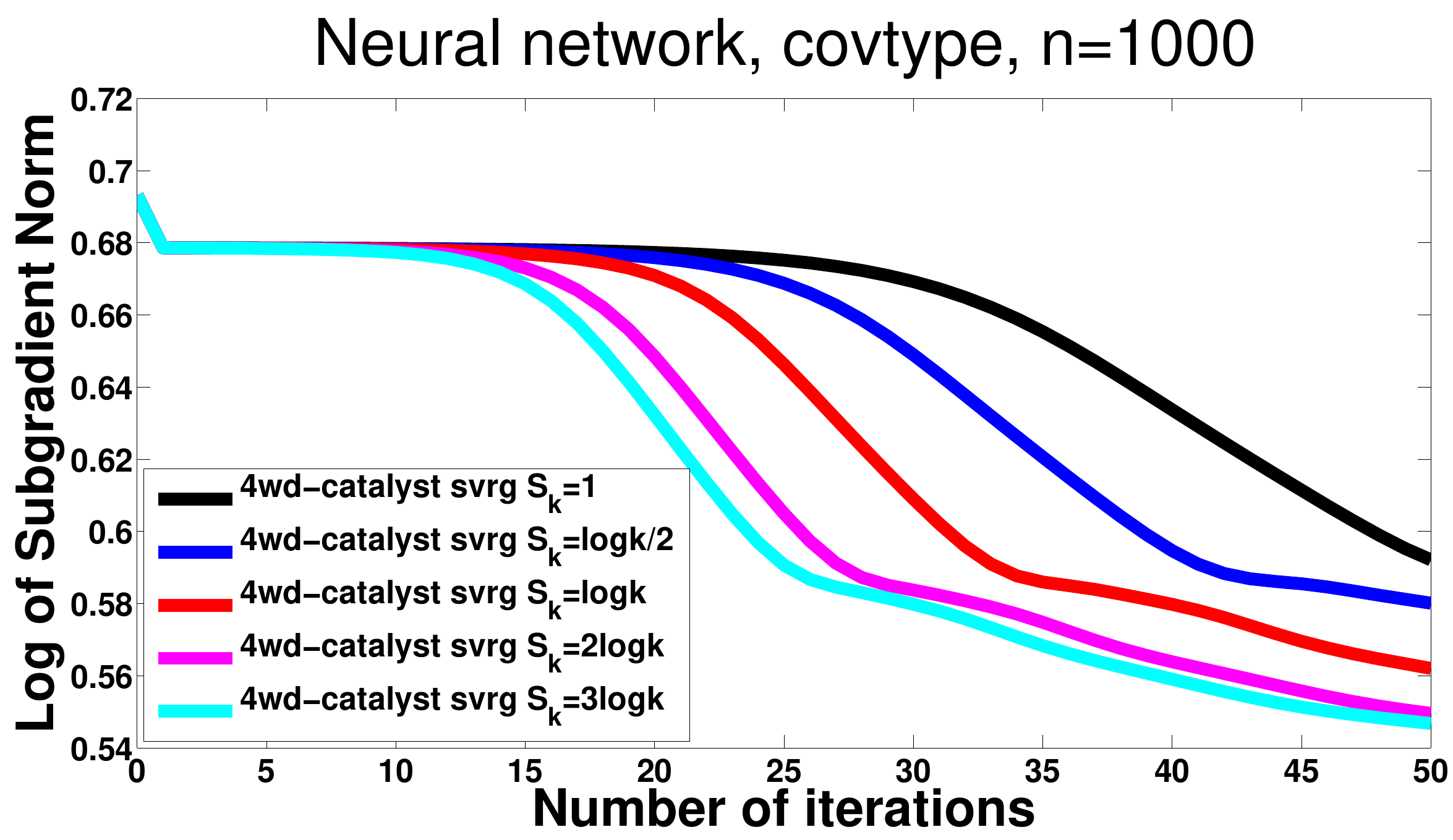}
      \caption{We run 50 iterations of \autonewalgosp SVRG with different choice of S on two-layer neural network. 
      The data is a subset of dataset \textsf{covtype}. 
      The x-axis is the number of gradient evaluations on the left, which is $T+S_k$ per iteration with $T=1$;  
      and the number of iterations on the right.}
   \label{fig:nn_append_covtype_logk}
   \end{center}
\end{figure*}

\vs
\paragraph{Experimental conclusions.}
In the matrix factorization experiments in Fig.~\ref{fig:patches_append_svrg} and
Fig.~\ref{fig:patches_append_saga},
\autonewalgo-SVRG/SAGA were always competitive, with a similar performance to the heuristic SVRG/SAGA-$\eta=1/L$ in two cases out of three, while being significantly better as soon as the amount of data~$n$ was large enough. As expected, the variants of SVRG with theoretical stepsizes have slow convergence, but exhibit a stable behavior compared to SVRG-$\eta=1/L$. This confirms the remarkable ability of~\autonewalgo-SVRG/SAGA to adapt to nonconvex terrains. 

In the neural network experiments, we observe that~\autonewalgo-SVRG
converges much faster overall in terms of objective values than other
algorithms. Yet Adam and AdaGrad often perform well-during the first
iterations, they oscillate a lot, which is a behavior commonly
observed. In constrast, \autonewalgo-SVRG always decreases and keeps
decreasing while other algorithms tend to stabilize, hence achieving
significantly lower objective values. 

More interestingly, as the algorithm proceeds, the subgradient norm
may increase at some point and then decrease, while the function value
keeps decreasing. 
This suggests that the extrapolation step, or the \linesearch~procedure, is helpful to escape bad stationary points, \eg, saddle-points. 
We leave the study of this particular phenomenon as a potential direction for
future work. 

\paragraph{Acknowledgments.}
The authors would like to thank J. Duchi for fruitful discussions related to this work. C. Paquette was partially supported by the ``Learning in Machines and Brains'' program of CIFAR. H. Lin and J. Mairal were supported by ERC grant SOLARIS (\# 714381) and ANR grant MACARON (ANR-14-CE23-0003-01). D. Drusvyatskiy was supported by AFOSR YIP FA9550-15-1-0237, NSF DMS 1651851, and NSF CCF 1740551 awards. Z. Harchaoui was supported by NSF CCF 1740551 award, the ``Learning in Machines and Brains'' program of CIFAR, and faculty research awards. This work was performed while C. Paquette was at University of Washington and H. Lin was at Inria.

\appendix
\section{Convergence rates in strongly-convex composite minimization}\label{subsec:conv}
We now briefly discuss convergence rates,
which are typically given in different forms in the convex and
non-convex cases. If the weak-convex constant is known, we can form a
strongly convex approximation similar to \cite{catalyst}. 
For that purpose, we consider a strongly-convex composite minimization problem
$$\min_{x\in\R^p}~ h(x):=f_0(x)+\psi(x),$$
where $f_0\colon\R^p\to\R$ is $\mu$-strongly convex and smooth with
$L$-Lipschitz continuous gradient $\nabla f_0$, and $\psi\colon\R^p\to \overline\R$
is a closed convex function with a computable proximal map
        $$\pr_{\beta \psi}(y):=\argmin_{z \in \R^p} \left\{\psi(y)+\tfrac{1}{2\beta}\|z-y\|^2\right\}.$$
Let $x^*$ be the minimizer of $h$ and $h^*$ be the minimal value of
$h$. In general, there are three types of measures of optimality that
one can monitor: $\|x-x^*\|^2$, $h(x)-h^*$, and $\text{dist}(0, \partial h(x))$. 

Since $h$ is strongly convex, the three of them are equivalent in terms of convergence rates if one can take an extra {\em prox-gradient step}:
  $$[x]_L:=\pr_{\psi/L}(x-L^{-1}\nabla f_0(x)).$$
To see this, define the {\em displacement vector}, also known as the gradient mapping, 
$g_L(x):=L(x-[x]_L)$, 
and notice the inclusion $g_L(x)\in \partial h([x]_L)$. In particular
$g_L(x)=0$ if and only if $x$ is the minimizer of $h$.
These next inequalities follow directly from Theorem 2.2.7 in \cite{nesterov}:
\begin{align*}
\tfrac{1}{2L}\|g_L(x)\|\leq&\|x-x^*\|\leq \tfrac{2}{\mu}\|g_L(x)\|\\
\tfrac{\mu}{2}\|x-x^*\|^2\leq &h(x)-h^*\leq \tfrac{1}{2\mu}|\partial h(x)|^2\\
2\mu(h([x]_L)-h^*)\leq &\|g_L(x)\|^2\leq 2L(h(x)-h([x]_L))
\end{align*}
Thus, an estimate of any one of the four quantities $\|x-x^*\|$,
$h(x)-h^*$, $\|g_L(x)\|$, or $\text{dist}(0, \partial h(x))$ directly implies an
estimate of the other three evaluated either at $x$ or at
$[x]_L$.
\section{Theoretical analysis of the basic algorithm}\label{sec:first}

We present here proofs of the theoretical results of the paper. All throughout the proofs, we shall work 
under the Assumptions on $f$ stated in Section~\ref{sec:algo} and the Assumptions on $\mathcal{M}$
stated in Section~\ref{sec:autoalgo}.
\subsection{Convergence guarantee of~\newalgosp}
In Theorem~\ref{theo:outerloop-ncvx-basic} and
Theorem~\ref{theo:outerloop-cvx-basic} under  an
appropriate tolerance policy on the proximal
subproblems~\eqref{eqn:prox_1} and~\eqref{eqn:accel_2},~\newalgosp
performs no worse than an exact proximal point method in general, while
automatically accelerating when $f$ is convex. 
For this, we need the following observations.

\begin{lemma}[Growth of $(\alpha_\cnt)$]
	Suppose the sequence $\{\alpha_{\cnt}\}_{\cnt \ge 1}$ is produced by
	Algorithm~\ref{alg: uniform_cat_nols}. Then, the following bounds hold for all $\cnt \ge 1$:
	\[ \frac{\sqrt{2}}{\cnt + 2} \le \alpha_{\cnt} \le \frac{2}{\cnt
		+1}. \]
	\label{lem: growth_a}
\end{lemma}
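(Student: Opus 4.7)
The plan is to pass to the reciprocal sequence $\beta_\cnt := 1/\alpha_\cnt$, under which the defining identity $(1-\alpha_{\cnt+1})/\alpha_{\cnt+1}^2 = 1/\alpha_\cnt^2$ becomes the clean quadratic recursion
\begin{equation*}
\beta_{\cnt+1}^2 - \beta_{\cnt+1} = \beta_\cnt^2, \qquad \beta_1 = 1.
\end{equation*}
The two bounds then translate to $(\cnt+1)/2 \le \beta_\cnt \le (\cnt+2)/\sqrt{2}$, which I would prove separately after first establishing monotonicity.

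\emph{Monotonicity.} Factoring the recursion as $(\beta_{\cnt+1}-\beta_\cnt)(\beta_{\cnt+1}+\beta_\cnt) = \beta_{\cnt+1}$ and observing that $\beta_{\cnt+1}$ and $\beta_{\cnt+1}+\beta_\cnt$ are positive yields $\beta_{\cnt+1} \ge \beta_\cnt$. In particular $\beta_\cnt \ge 1$ for all $\cnt$.

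\emph{Upper bound on $\alpha_\cnt$.} Starting from the same identity, solve for the increment: $\beta_{\cnt+1}-\beta_\cnt = \beta_{\cnt+1}/(\beta_{\cnt+1}+\beta_\cnt)$. Because of monotonicity, $\beta_\cnt+\beta_{\cnt+1} \le 2\beta_{\cnt+1}$, so each increment is at least $1/2$. Telescoping from $\beta_1 = 1$ gives $\beta_\cnt \ge (\cnt+1)/2$, i.e., $\alpha_\cnt \le 2/(\cnt+1)$.

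\emph{Lower bound on $\alpha_\cnt$.} I would induct on $\cnt$, showing $\beta_\cnt \le (\cnt+2)/\sqrt{2}$. The base case $\beta_1 = 1 \le 3/\sqrt{2}$ is immediate. For the induction step, write $\beta_{\cnt+1}$ as the positive root $(1+\sqrt{1+4\beta_\cnt^2})/2$ of $x^2-x = \beta_\cnt^2$; this map is monotone in $\beta_\cnt$, so it suffices to check the bound at $\beta_\cnt = (\cnt+2)/\sqrt{2}$. Substituting and rearranging, the inequality $\beta_{\cnt+1} \le (\cnt+3)/\sqrt{2}$ reduces to $\sqrt{1+2(\cnt+2)^2} \le \sqrt{2}(\cnt+3)-1$. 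Squaring both sides (the right-hand side is positive) collapses this to $(2-\sqrt{2})\cnt \ge 3\sqrt{2}-5$, which holds for every $\cnt \ge 0$ since the left-hand side is nonnegative and the right-hand side is negative.

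The only mildly delicate point is the algebraic verification in the inductive step of the lower bound, but it reduces to a trivial linear inequality. All other steps are routine manipulations of the reciprocal recursion, and no structural properties of $f$ or $\mathcal{M}$ enter anywhere.
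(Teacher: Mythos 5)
Your proof is correct, and it takes a genuinely different route from the paper for the more delicate part, namely the lower bound on $\alpha_\cnt$. For the upper bound $\alpha_\cnt \le 2/(\cnt+1)$, the two arguments are essentially the same in disguise: the paper substitutes the explicit formula $\alpha_{\cnt+1} = 2/\bigl(\sqrt{1+4/\alpha_\cnt^2}+1\bigr)$ and inducts, which in your reciprocal notation $\beta_\cnt = 1/\alpha_\cnt$ is exactly the statement that each increment $\beta_{\cnt+1}-\beta_\cnt$ is at least $1/2$. For the lower bound, however, the paper uses the product identity $\alpha_{\cnt+1}^2 = \prod_{i=2}^{\cnt+1}(1-\alpha_i)$, which follows by telescoping $\alpha_{\cnt+1}^2 = (1-\alpha_{\cnt+1})\alpha_\cnt^2$ from $\alpha_1=1$, and then feeds the already-established upper bound $\alpha_i \le 2/(i+1)$ into this product to obtain $\alpha_{\cnt+1}^2 \ge 2/\bigl((\cnt+2)(\cnt+1)\bigr)$. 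You instead prove $\beta_\cnt \le (\cnt+2)/\sqrt{2}$ by a direct induction on the one-step map $\beta_\cnt \mapsto (1+\sqrt{1+4\beta_\cnt^2})/2$, which reduces to a short algebraic verification and, notably, does not rely on first having the upper bound on $\alpha_\cnt$ in hand. The paper's route is a bit slicker once the telescoping product is spotted and avoids the radical manipulation; your route is more mechanical but makes the two bounds logically independent of each other. Both yield exactly the stated constants. One small thing to make explicit when you write this up: positivity of $\beta_\cnt$ (hence of $\beta_{\cnt+1}+\beta_\cnt$, used in the monotonicity step) follows from $\beta_1=1$ and the fact that the algorithm selects $\alpha_{\cnt+1}\in(0,1)$, i.e., $\beta_{\cnt+1}$ is the positive root of $x^2 - x = \beta_\cnt^2$; you gesture at this but it is worth stating before the factorization argument rather than after.
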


\begin{proof} 
This result is noted without proof in a remark of \cite{p_tseng}. 
For completeness, we give below a simple proof using induction.
   Clearly, the statement holds for $\cnt = 1$.  Assume the inequality
   on the right-hand side holds for $\cnt$. By using the induction hypothesis, we get
	$$\alpha_{\cnt+1}=\frac{\sqrt{\alpha_\cnt^4+4\alpha_\cnt^2}-\alpha_\cnt^2}{2}=\frac{2}{\sqrt{1+4/\alpha_\cnt^2}+1}\leq \frac{2}{\sqrt{1+(\cnt+1)^2}+1}\leq\frac{2}{\cnt+2},$$
	as claimed and the expression for $\alpha_{\cnt+1}$ is given
        by explicitly solving \eqref{eqn:accel_end}.
	 To
	show the lower
	bound, we note that for all $\cnt \ge 1$, we have $$\alpha_{\cnt +1}^2 =(1-\alpha_{\cnt+1})\alpha_\cnt^2=
	\prod_{i = 2}^{\cnt +1} (1-\alpha_i)\alpha_1^2=\prod_{i = 2}^{\cnt +1} (1-\alpha_i).$$ Using the established upper bound $\alpha_\cnt\leq \frac{2}{\cnt+1}$ yields
	\begin{align*}
	\alpha_{\cnt +1}^2 \ge \prod_{i=2}^{\cnt +1} \left (1-\frac{2}{i+1}
	\right ) = \frac{2}{(\cnt+2)(\cnt +1)} \ge \frac{2}{(\cnt +2)^2}.
	\end{align*}
	The result follows.
\end{proof}

\begin{lemma}[Prox-gradient and near-stationarity] Suppose $y^+$
  satisfies $\text{dist}(0, \partial \env(y^+;y)) < \varepsilon$. Then,
  the inequality holds:
\[ \text{\rm dist}\big (0, \partial f(y^+) \big ) \le \varepsilon +
\norm{\smthpara(y^+-y)}.\]
\label{lem: prox_grad_near_station}
\end{lemma}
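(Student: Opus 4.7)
My plan is to exploit the subdifferential sum rule applied to the definition $f_\kappa(x;y) = f(x) + \tfrac{\kappa}{2}\|x-y\|^2$. Since the quadratic $x\mapsto \tfrac{\kappa}{2}\|x-y\|^2$ is smooth with gradient $\kappa(x-y)$, the sum rule mentioned right after the definition of the subdifferential gives the exact identity
\[
\partial f_\kappa(y^+;y) \;=\; \partial f(y^+) + \kappa(y^+-y).
\]

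With this identity in hand, the argument is essentially a one-line triangle inequality. First I would pick $v \in \partial f_\kappa(y^+;y)$ achieving $\|v\| = \text{dist}(0, \partial f_\kappa(y^+;y)) < \varepsilon$ (or, if the infimum is not attained, take $v$ with $\|v\|$ arbitrarily close to this distance and pass to a limit). Then the sum rule guarantees that the vector $w := v - \kappa(y^+-y)$ lies in $\partial f(y^+)$. Consequently,
\[
\text{dist}\bigl(0, \partial f(y^+)\bigr) \;\le\; \|w\| \;\le\; \|v\| + \|\kappa(y^+-y)\| \;<\; \varepsilon + \|\kappa(y^+-y)\|,
\]
which is exactly the claimed bound.

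There is no real obstacle here; the only subtlety is making sure that the sum rule applies in this nonsmooth/nonconvex setting, but the paper has already stated that $\partial(f+g)(x) = \partial f(x) + \nabla g(x)$ for any differentiable $g$, and the perturbation $\tfrac{\kappa}{2}\|x-y\|^2$ is $C^\infty$, so the rule applies verbatim. The whole proof should therefore be one short paragraph.
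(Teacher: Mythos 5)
Your proof is correct and is essentially identical to the paper's: both invoke the sum rule $\partial f_\kappa(y^+;y) = \partial f(y^+) + \kappa(y^+-y)$ and then apply the triangle inequality. Your remark about the infimum possibly not being attained is a harmless extra precaution that the paper glosses over.
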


\begin{proof} We can find $\xi \in \partial \env(y^+; y)$ with $\norm{\xi}
  \le \varepsilon$. Taking into account $\partial \env(y^+; y)= \partial f(y^+)+\smthpara(y^+-y)$ the result follows.
\end{proof}

Next we establish convergence guarantees of Theorem~\ref{theo:outerloop-ncvx-basic} and
Theorem~\ref{theo:outerloop-cvx-basic} for~\newalgosp. 

\begin{proof}[Proof of Theorem~\ref{theo:outerloop-ncvx-basic} and
  Theorem~\ref{theo:outerloop-cvx-basic}] The proof of Theorem~\ref{theo:outerloop-ncvx-basic}  follows the
  analysis of inexact proximal point
  method~\citep{catalyst,guler:1991,bertsekas:2015}. The descent
  condition in
  \eqref{eq:prox_stop_criteria} implies $\{f(x_\cnt)\}_{\cnt \ge 0}$ are monotonically
  decreasing. From this, we deduce 
\begin{equation} f(x_{\cnt-1}) = f_{\smthpara}(x_{\cnt-1}; x_{\cnt-1}) \ge
  f_{\smthpara}(\proxx_\cnt; x_{\cnt-1}) \ge f(x_\cnt) +
  \frac{\smthpara}{2}
  \norm{\proxx_{\cnt}-x_{\cnt-1}}^2. \label{eq: new_idea_10}
\end{equation}
Using the adaptive stationarity condition \eqref{eq:prox_stop_criteria}, we apply Lemma~\ref{lem: prox_grad_near_station} with $y=x_{\cnt-1}$, $y^+=\proxx_\cnt$ and 
$\varepsilon =\smthpara \norm{\proxx_\cnt-x_{\cnt-1}}$; hence we obtain 
\[\text{dist}(0, \partial f(\proxx_\cnt) ) \le 2 
    \norm{\smthpara(\proxx_{\cnt}-x_{\cnt-1})}. \]
We combine the above inequality with \eqref{eq: new_idea_10} to deduce
\begin{align}
\text{dist}^2(0, \partial f(\proxx_{\cnt})) \le  4
  \norm{\smthpara(\proxx_{\cnt}-x_{\cnt-1})}^2 \le 8 \smthpara \left(
  f(x_{\cnt-1})-f(x_{\cnt}) \right ). \label{eq: new_idea_2_simple}
\end{align}
Summing $j =1$ to $N$, we conclude 
\begin{align*} \min_{j=1, \hdots, N}~\big \{ \text{dist}^2(0, \partial
  f(\proxx_j)) \big \}
  &\le \frac{4}{N} \sum_{j=1}^N
  \norm{\smthpara(\proxx_{\cnt}-x_{\cnt-1})}^2)\\
  &\le  \frac{8 \smthpara}{N} \left (\sum_{j=1}^N f(x_{j-1})-f(x_j) \right )\\
&\le \frac{8 \smthpara}{N} \left ( f(x_0)-f^* \right ).
\end{align*}
Next, suppose the function $f$ is convex. Our analysis is similar to
that of~\cite{p_tseng,fista}. Using the stopping criteria
\eqref{eq:accx_stop_criteria}, fix an $\xi_\cnt \in \partial
\env(\accx_\cnt; y_\cnt)$ with $\norm{\xi_\cnt} <
\frac{\smthpara}{k+1} \norm{\accx_{\cnt}-y_\cnt}$. For any $x \in
\R^n$, Equation~\eqref{eqn:min_better}, and the strong convexity of
the function $\env(\cdot; y_\cnt)$ yields
\[f(x_\cnt) \le f(\accx_\cnt) \le f(x) + \frac{\smthpara}{2} \left (
    \norm{x-y_\cnt}^2 - \norm{x-\accx_{\cnt}}^2 -
    \norm{\accx_\cnt-y_\cnt}^2
  \right ) + \xi^T_{\cnt} \left (\accx_{\cnt}-x \right ).\]
We substitute $x = \alpha_\cnt x^* + (1-\alpha_\cnt) x_{\cnt-1}$ where
$x^*$ is any minimizer of $f$. Using the convexity of $f$, the norm of
$\xi_\cnt$, and
Equations~\eqref{eqn:accel_1} and \eqref{eqn:v_try}, we deduce
\begin{align}
f(x_\cnt) \le \alpha_\cnt f(x^*) &+ (1-\alpha_{\cnt}) f(x_{\cnt-1}) +
  \frac{\alpha_{\cnt}^2 \smthpara}{2} \left ( \norm{x^*-v_{\cnt-1}}^2 - \norm{x^*-v_{\cnt}}^2
  \right ) \nonumber\\
& -\frac{\smthpara}{2} \norm{\accx_\cnt-y_{\cnt}}^2 +
  \frac{\alpha_{\cnt} \smthpara}{\cnt+1} \norm{\accx_{\cnt}-y_\cnt}
  \norm{x^*-v_{\cnt}}. \label{eq:new_stuff_complete_square}
\end{align}
Set $\theta_\cnt = \frac{1}{k+1}$. Completing the square on
Equation \eqref{eq:new_stuff_complete_square}, we obtain 
\[ \frac{-\smthpara}{2} \norm{\accx_{\cnt}-y_\cnt}^2 + \alpha_\cnt
  \theta_{\cnt} \smthpara \norm{\accx_{\cnt}-y_{\cnt}} \norm{x^*-v_{\cnt}} \le
  \frac{\smthpara}{2} \left (\alpha_{\cnt}\theta_{\cnt} \right )^2
  \norm{x^*-v_{\cnt}}^2. \]
Hence, we deduce
\begin{align*}
f(x_\cnt)-f^* &\le (1-\alpha_{\cnt}) (f(x_{\cnt-1})-f^*) +
  \frac{\alpha_{\cnt}^2 \smthpara}{2} \left (
  \norm{x^*-v_{\cnt-1}}^2-\norm{x^*-v_{\cnt}}^2 \right )\\
&\qquad \qquad  +
  \frac{\smthpara}{2} \left ( \alpha_{\cnt}
  \theta_{\cnt} \right )^2 \norm{x^*-v_{\cnt}}^2.\\
&= (1-\alpha_{\cnt}) (f(x_{\cnt-1})-f^*) +
  \frac{\alpha_{\cnt}^2 \smthpara}{2} \left (
  \norm{x^*-v_{\cnt-1}}^2-\left (1-\theta_{\cnt}^2\right ) \norm{x^*-v_{\cnt}}^2 \right )
\end{align*}
Denote $A_\cnt := 1- \theta_{\cnt}^2$. Subtracting $f^*$ from both
sides and using the inequality
$\frac{1-\alpha_\cnt}{\alpha_\cnt^2}=\frac{1}{\alpha_{\cnt-1}^2}$ and
$\alpha_1 \equiv 1$, we derive the following recursion argument:
\begin{align*}
\frac{f(x_{\cnt})-f^*}{\alpha_{\cnt}^2} + \frac{A_{\cnt}\smthpara}{2} \norm{x^*-v_{\cnt}}^2
  &\le \frac{1-\alpha_{\cnt}}{\alpha_{\cnt}^2} \big ( f(x_{\cnt-1})-f^*
  \big ) + \frac{\smthpara}{2} \norm{x^*-v_{\cnt-1}}^2\\
&\le \frac{1}{A_{\cnt-1}} \left ( \frac{f(x_{\cnt-1})-f^*}{\alpha_{\cnt-1}^2}+ \frac{A_{\cnt-1}\smthpara }{2}
  \norm{x^*-v_{\cnt-1}}^2 \right ).
\end{align*}
The last inequality follows because $0 < A_{\cnt-1} \le 1$. Iterating
$N$ times,we deduce
\begin{align}
\frac{f(x_N)-f^*}{\alpha_N^2} \le \prod_{j=2}^N \frac{1}{A_{j-1}} \left (
  \frac{\smthpara}{2} \norm{x^*-v_0}^2
  \right ). \label{eq: new_idea_3_simple}
\end{align}
We note
\begin{equation} \prod_{j=2}^N \frac{1}{A_{j-1}} = \frac{1}{\prod_{j=2}^N \left ( 1-
  \frac{1}{(j+1)^2} \right ) } \le 2; \label{eq:important_sum}
\end{equation}
thereby concluding the result. Summing up \eqref{eq: new_idea_2_simple} from $j =
N+1$ to $2N$, we obtain
\begin{align*} \min_{j=1, \hdots, 2N}~\big \{ \text{dist}^2(0, \partial
  f(\proxx_j)) \big \}
  &\le \frac{4}{N} \sum_{j=N+1}^{2N} 
  \norm{\smthpara(\proxx_{\cnt}-x_{\cnt-1})}^2)\\
  &\le  \frac{8 \smthpara}{N} \left (\sum_{j=N+1}^{2N} f(x_{j-1})-f(x_j) \right )\\
&\le \frac{8 \smthpara}{N} \left ( f(x_N)-f^* \right )
\end{align*}
Combining this inequality with \eqref{eq: new_idea_3_simple}, the result is
shown.  
\end{proof}
\section{Analysis of \autonewalgo~and \linesearch}\label{sec:appendix_global_comp}

\paragraph*{Linear convergence interlude.}
Our assumption on the linear rate of convergence of $\mathcal{M}$ (see \eqref{eq:criteria}) may look strange at first sight. Nevertheless, most linearly convergent first-order methods $\mathcal{M}$ for composite minimization either already satisfy this assumption or can be made to satisfy it by introducing an extra prox-gradient step. 
To see this, recall the convex composite minimization problem from Section~\ref{subsec:conv}
$$\min_{z\in\R^p}~ h(z):=f_0(z)+\psi(z),$$
where 
\begin{enumerate}
	\item 
$f_0\colon\R^p\to\R$ is convex and $C^1$-smooth with the gradient $\nabla f_0$ that is $L$-Lipschitz,
\item $\psi\colon\R^p\to \overline\R$ is a closed convex function with a computable proximal map
$$\pr_{\beta\psi}(y):=\argmin_z~ \{\psi(y)+\tfrac{1}{2\beta}\|z-y\|^2\}.$$
\end{enumerate}
See~\cite{Parikh13} for a survey of proximal maps. 
Typical linear convergence guarantees of an optimization algorithm assert existence of constants $A\in \R$ and $\tau\in (0,1)$ satisfying
\begin{equation}\label{eqn:lin_can_comp}
h(z_{t})-h^*\leq A(1-\tau)^t(h(z_0)-h^*)
\end{equation}
for each $t=0,1,2,\ldots,\infty$. To bring such convergence guarantees into the desired form \eqref{eq:criteria}, define the prox-gradient step $$[z]_L:=\pr_{\psi/L}(z-L^{-1}\nabla f_0(z)),$$
and the displacement vector
$$g_L(z)=L(z-[z]_L),$$ 
and notice the inclusion $g_L(z)\in \partial h([z]_L)$.
The following inequality follows from  \cite{nesterov2013gradient}:
\begin{align*}
\|g_L(z)\|^2\leq 2L(h(z)-h([z]_L))\leq 2L(h(z)-h^*).
\end{align*}
Thus, the linear rate of convergence \eqref{eqn:lin_can_comp} implies 
$$\|g_L(z_{t})\|^2\leq 2LA (1-\tau)^t(h(z_0)-h^*),$$
which is exactly in the desired form \eqref{eq:criteria}.
\subsection{Convergence analysis of the adaptive algorithm: \autonewalgo
  }\label{subsec:adaptive}

First, under some reasonable assumptions on the method $\mathcal{M}$ (see
Section~\ref{para: assumptions_M}), the sub-method
\linesearch~terminates. 

\begin{lemma}[{\linesearch} terminates] \label{lem:lscomplete}
Assume that $\tau_{\smthpara}\to 1$ when~$\smthpara \to +\infty$. The procedure \linesearch $(x,\smthpara, \varepsilon, T)$ terminates after finitely many iterations. 
\end{lemma}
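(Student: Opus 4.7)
The plan is to show that after finitely many doublings of $\smthpara$, running $T$ inner iterations of $\mathcal{M}$ on the subproblem $\min_z \env(z;y)$ from the prescribed initialization produces an iterate $z_T$ satisfying both stopping conditions of \linesearch, namely the descent condition $\env(z_T;y)\le \env(y;y)$ and the adaptive stationarity condition $\text{dist}(0,\partial \env(z_T;y))\le \smthpara\norm{z_T-y}$. Since each failure of the outer loop doubles $\smthpara$, the sequence of trial values sweeps through arbitrarily large $\smthpara$ in finitely many outer iterations; termination then follows as soon as we reach a value for which both conditions are met.

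I would reduce the inner claim to Theorem~\ref{thm:inner complex, kappa>rho}. That result states that whenever $\smthpara > \weakcnx$, the subproblem is $(\smthpara-\weakcnx)$-strongly convex, and running $\mathcal{M}$ from $z_0$ for
\[
T_\smthpara \;:=\; \tau_\smthpara^{-1}\log\!\Bigl(\tfrac{8A_\smthpara(L+\smthpara)}{(\smthpara-\weakcnx)^2}\Bigr)
\]
iterations yields both conditions simultaneously. It therefore suffices to verify that $T_\smthpara \le T$ for all $\smthpara$ beyond some threshold $\smthpara^\star$. Under the standing hypothesis $\tau_\smthpara\to 1$, the prefactor $\tau_\smthpara^{-1}$ tends to $1$. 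Inside the logarithm, $(\smthpara-\weakcnx)^2$ grows quadratically in $\smthpara$ while $L+\smthpara$ grows linearly; combined with the fact that $A_\smthpara$ grows at most polynomially for the methods considered (linearly for gradient descent, SVRG and SAGA, as recorded in Table~\ref{table:parameters}), the argument of the logarithm is $O(1)$ and $\limsup_{\smthpara\to\infty} T_\smthpara<\infty$. Picking any $\smthpara^\star$ with $T_{\smthpara^\star}\le T$, the doubling loop must exit within $\lceil\log_2(\smthpara^\star/\smthpara_{\mathrm{init}})\rceil$ outer iterations, proving the lemma.

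The main obstacle is bridging the abstract assumption---which records only that $\tau_\smthpara\to 1$---with the need for a uniform bound on $T_\smthpara$: one must argue that the constant $A_\smthpara$ appearing in \eqref{eq:criteria} does not grow so quickly as to defeat the rapid vanishing of $(1-\tau_\smthpara)^T$. This is immediate for every concrete instance of $\mathcal{M}$ covered in the paper, and no new analytical ingredient is needed beyond the strong convexity of $\env(\cdot;y)$ that already drives Theorem~\ref{thm:inner complex, kappa>rho}. A conservative alternative, should one wish to avoid any method-specific input, is to observe that $\env(z_0;y)-\env^*(y) \le f(y)-\inf f$ is bounded uniformly in $\smthpara$, so that once $\smthpara$ is large the linear-convergence bound forces $\text{dist}(0,\partial \env(z_T;y))$ to be arbitrarily small while $\norm{z_T-y}$ stays comparable to $\norm{y-z^\star_\smthpara}$; this route again yields both (i) and (ii) for $\smthpara$ sufficiently large.
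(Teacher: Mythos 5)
Your proof takes a genuinely different route from the paper's. The paper argues directly from the linear convergence bound: starting from $\text{dist}^2(0,\partial\env(z_T;x))\leq A(1-\tau_\smthpara)^T(f(x)-f^*)$, it lets $\smthpara\to\infty$ so the subgradient becomes small, then invokes the $(\smthpara-\weakcnx)$-strong convexity of $\env(\cdot;x)$ to translate this into the descent inequality $\env(z_T;x)\le f(x)$, and concludes termination. You instead funnel everything through Theorem~\ref{thm:inner complex, kappa>rho}, which already packages verification of \emph{both} the descent and the adaptive stationarity conditions, and show that its required iteration count $T_\smthpara$ is eventually no larger than the prescribed $T$. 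This is actually cleaner on one delicate point: the loop in \linesearch\ exits only when the \emph{relative} criterion $\text{dist}(0,\partial\env(z_T;y))\le\smthpara\norm{z_T-y}$ holds, and both sides of this inequality may shrink as $\smthpara$ grows, so the paper's observation that the squared subgradient drops below $\varepsilon^2$ does not on its own settle the comparison; passing through Theorem~\ref{thm:inner complex, kappa>rho} and Lemma~\ref{lem:E2} does.

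Both arguments share a latent assumption: the paper's stated hypotheses say only that $A_\smthpara$ is increasing and $\tau_\smthpara\to1$, so neither $A_\smthpara(1-\tau_\smthpara)^T\to0$ nor boundedness of $T_\smthpara$ follows automatically without some growth control on $A_\smthpara$. You are right to flag this explicitly and to note it holds for every concrete method in Table~\ref{table:parameters}; the paper's proof quietly writes $A$ without a subscript and sidesteps the issue. One small tightening on your end: your conclusion should state $T_\smthpara\le T$ for \emph{all} $\smthpara$ past a threshold, not merely at one point $\smthpara^\star$, since the doubling loop lands on powers $2^m\smthparainit$ rather than any prescribed value. Your $\limsup$ bound combined with the monotone behaviour for the concrete methods supplies exactly that once $T$ is large enough --- which is the same implicit proviso the paper's proof also carries.
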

\begin{proof}
Due to our assumptions on $\mathcal{M}$ and the expressions $\env(x ;x )=f(x)$ and $\env^*(x)\geq f^*$, we have
\begin{equation}\label{eqn:conv_fin_term}
\text{\rm dist}^2 \big (0,\partial \env(z_T;x) \big )\leq A(1-\tau_{\mathcal{\smthpara}})^{T}
\big ( f(x)-\env^*(
x) \big )\leq 
A(1-\tau_{\smthpara})^{T}
\big ( f(x)-f^*) \big ).
\end{equation}
Since $\tau_{\smthpara}$ tends to one, for all sufficiency large $\smthpara$, we can be sure that the right-hand-side is smaller than~$\varepsilon^2$. On the other hand, for $\smthpara>\weakcnx$, the function $\env(\cdot;x)$ is $(\smthpara-\weakcnx)$-strongly convex and therefore we have 
$\text{dist}^2(0,\partial \env(z_T;x))\geq 2(\smthpara-\weakcnx)(\env(z_T;x)-\env^*(x))$. Combining this with \eqref{eqn:conv_fin_term}, we deduce  
$$\env(z_T;x)-\env^*(x)\leq \frac{A(1-\tau_{\mathcal{\smthpara}})^{T}}{2(\smthpara-\weakcnx)}
\big ( f(x)-\env^*(x) \big ).$$
Letting $\smthpara \to \infty$, we deduce $\env(z_T;x)\leq f(x)$, as required. Thus the loop indeed terminates.
\end{proof}

We prove the main result, Theorem~\ref{thm: computational_complex_result}, for \autonewalgo. 

\begin{proof}[Proof of Theorem~\ref{thm:
    computational_complex_result}] The proof closely resembles the
  proofs of Theorem~\ref{theo:outerloop-cvx-basic} and
  Theorem~\ref{theo:outerloop-cvx-basic}, so we omit some of the
  details. The main difference in the proof is that we keep track of the
  effects the parameters $\smthparacvx$ and
  $\smthparainit$ have on the inequalities as well as the sequence of
  $\smthpara_{\cnt}$. Since $\{f(x_\cnt)\}_{\cnt \ge 0}$ are monotonically
  decreasing, we deduce 
\begin{equation} f(x_{\cnt-1}) = f_{\smthpara_{\cnt}}(x_{\cnt-1}; x_{\cnt-1}) \ge
  f_{\smthpara_{\cnt}}(\proxx_\cnt; x_{\cnt-1}) \ge f(x_\cnt) +
  \frac{\smthpara_{\cnt}}{2}
  \norm{\proxx_{\cnt}-x_{\cnt-1}}^2. \label{eq: new_idea_1}
\end{equation}
Using the adaptive stationary condition
\eqref{eq:prox_stop_criteria_new}, we apply Lemma~\ref{lem:
  prox_grad_near_station} with $\varepsilon = \smthpara_{\cnt}
\norm{\proxx_\cnt-x_{\cnt-1}}$; hence we obtain 
\[\text{dist}(0, \partial f(\proxx_\cnt) ) \le 2 
    \norm{\smthpara_\cnt(\proxx_{\cnt}-x_{\cnt-1})}. \]
We combine the above inequality with \eqref{eq: new_idea_1} to deduce
\begin{align}
\text{dist}^2(0, \partial f(\proxx_{\cnt})) \le  4
  \norm{\smthpara_{\cnt}(\proxx_{\cnt}-x_{\cnt-1})}^2 \le 8 \smthpara_{\max} \left(
  f(x_{\cnt-1})-f(x_{\cnt}) \right ). \label{eq: new_idea_2}
\end{align}
Summing $j =1$ to $N$, we conclude 
\begin{align*} \min_{j=1, \hdots, N}~\big \{ \text{dist}^2(0, \partial
  f(\proxx_j)) \big \}
  &\le \frac{4}{N} \sum_{j=1}^N 2
  \norm{\smthpara_{\cnt}(\proxx_{\cnt}-x_{\cnt-1})}^2)\\
  &\le  \frac{8 \smthpara_{\max}}{N} \left (\sum_{j=1}^N f(x_{j-1})-f(x_j) \right )\\
&\le \frac{8 \smthpara_{\max}}{N} \left ( f(x_0)-f^* \right ).
\end{align*}
Suppose the function $f$ is convex. Using in the stopping criteria
\eqref{eq:accx_stop_criteria_new} in replacement of \eqref{eq:prox_stop_criteria}, we deduce a
similar expression as \eqref{eq:new_stuff_complete_square}:
\begin{align*}
f(x_\cnt) \le \alpha_\cnt f(x^*) &+ (1-\alpha_{\cnt}) f(x_{\cnt-1}) +
  \frac{\alpha_{\cnt}^2 \smthparacvx}{2} \left ( \norm{x^*-v_{\cnt-1}}^2 - \norm{x^*-v_{\cnt}}^2
  \right )\\
& -\frac{\smthparacvx}{2} \norm{\accx_\cnt-y_{\cnt}}^2 +
  \frac{\alpha_{\cnt} \smthparacvx}{\cnt+1} \norm{\accx_\cnt-y_\cnt} \norm{x^*-v_{\cnt}}.
\end{align*}
Denote $\theta_\cnt
  = \frac{1}{k+1}$. Completing the square, we obtain 
\[ \frac{-\smthparacvx}{2} \norm{\accx_{\cnt}-y_\cnt}^2 + \alpha_\cnt
  \theta_{\cnt} \smthparacvx \norm{\accx_{\cnt}-y_{\cnt}} \norm{x^*-v_{\cnt}} \le
  \frac{\smthparacvx}{2} \left (\alpha_{\cnt}\theta_{\cnt} \right )^2
  \norm{x^*-v_{\cnt}}^2. \]
Hence, we deduce
\begin{align*}
f(x_\cnt)-f^* &\le (1-\alpha_{\cnt}) (f(x_{\cnt-1})-f^*) +
  \frac{\alpha_{\cnt}^2 \smthparacvx}{2} \left (
  \norm{x^*-v_{\cnt-1}}^2-\norm{x^*-v_{\cnt}}^2 \right )\\
&\qquad \qquad  +
  \frac{\smthparacvx}{2} \left ( \alpha_{\cnt}
  \theta_{\cnt} \right )^2 \norm{x^*-v_{\cnt}}^2.\\
&= (1-\alpha_{\cnt}) (f(x_{\cnt-1})-f^*) +
  \frac{\alpha_{\cnt}^2 \smthparacvx}{2} \left (
  \norm{x^*-v_{\cnt-1}}^2-\left (1-\theta_{\cnt}^2\right ) \norm{x^*-v_{\cnt}}^2 \right )
\end{align*}
Denote $A_\cnt := 1- \theta_{\cnt}^2$. Following
the standard recursion argument as in the proofs of Theorem~\ref{theo:outerloop-cvx-basic} and
  Theorem~\ref{theo:outerloop-cvx-basic}, we conclude
\begin{align*}
\frac{f(x_{\cnt})-f^*}{\alpha_{\cnt}^2} + \frac{A_{\cnt}\smthparacvx}{2} \norm{x^*-v_{\cnt}}^2
  &\le \frac{1-\alpha_{\cnt}}{\alpha_{\cnt}^2} \big ( f(x_{\cnt-1})-f^*
  \big ) + \frac{\smthparacvx}{2} \norm{x^*-v_{\cnt-1}}^2\\
&\le \frac{1}{A_{\cnt-1}} \left ( \frac{f(x_{\cnt-1})-f^*}{\alpha_{\cnt-1}^2}+ \frac{A_{\cnt-1}\smthparacvx }{2}
  \norm{x^*-v_{\cnt-1}}^2 \right ).
\end{align*}
The last inequality follows because $0 < A_{\cnt-1} \le 1$. Iterating
$N$ times, we deduce
\begin{align}
\frac{f(x_N)-f^*}{\alpha_N^2} \le \prod_{j=2}^N \frac{1}{A_{j-1}} \left (
  \frac{\smthparacvx}{2} \norm{x^*-v_0}^2
  \right ). \label{eq: new_idea_3}
\end{align}
We note 
\[\prod_{j=2}^N \frac{1}{A_{j-1}} = \frac{1}{\prod_{j=2}^N \left ( 1-
  \frac{1}{(j+1)^2} \right ) } \le 2;\]
thus the result is shown. Summing up \eqref{eq: new_idea_2} from $j =
N+1$ to $2N$, we obtain
\begin{align*} \min_{j=1, \hdots, 2N}~\big \{ \text{dist}^2(0, \partial
  f(\proxx_j)) \big \}
  &\le \frac{4}{N} \sum_{j=N+1}^{2N}
  \norm{\smthpara_{\cnt}(\proxx_{\cnt}-x_{\cnt-1})}^2)\\
  &\le  \frac{8 \smthpara_{\max} }{N} \left (\sum_{j=N+1}^{2N} f(x_{j-1})-f(x_j) \right )\\
&\le \frac{8 \smthpara_{\max}}{N} \left ( f(x_N)-f^* \right )
\end{align*}
Combining this inequality with \eqref{eq: new_idea_3}, the result is
shown. 
\end{proof}
\section{Inner-loop complexity: proof of Theorem~\ref{thm:
    inner_complexity}} \label{sec: complexity}

Recall, the following notation
\begin{align}
f_0(x;y) &= \frac{1}{n} \sum_{i=1}^n
  f_i(x) + \frac{\smthpara}{2} \norm{x-y}^2 \nonumber\\
y^0 &= \text{prox}_{1/(\smthpara+L) f_0}\left ( y-
\frac{1}{\smthpara + L} \nabla f_0(y;y) \right ). \label{eq:proximal_initial}
\end{align}

\begin{lemma}[Relationship between function values and iterates
  of the prox] \label{prop: proximal} Assuming $\psi(x)$ is convex and the parameter
  $\smthpara > \weakcnx$, then 
\begin{equation} f_\smthpara(y^0; y) - f_{\smthpara}^*(y) \le \frac{\smthpara +
    L}{2} \norm{y^*-y}^2 \label{eq:new_prox_iterate}
\end{equation}
where $y^*$ is a minima of $f_{\smthpara}(\cdot; y)$ and
$f_{\smthpara}^*(y)$ is the optimal value. 
\end{lemma}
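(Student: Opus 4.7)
The plan is to recognize $y^0$ as a single step of the proximal gradient method applied to the composite problem
$\min_x f_\smthpara(x;y) = g(x) + \psi(x)$,
with the splitting
$g(x) := \tfrac{1}{n}\sum_i f_i(x) + \tfrac{\smthpara}{2}\|x-y\|^2$
and step size $1/(L+\smthpara)$. Indeed, $\nabla g$ is $(L+\smthpara)$-Lipschitz (sum of an $L$-Lipschitz gradient and the linear map $\smthpara(\cdot-y)$), and the formula for $y^0$ in \eqref{eq:proximal_initial} matches the prox-gradient update. Moreover, because each $f_i$ is $L$-smooth hence $L$-weakly convex, the average $f_0$ is $L$-weakly convex, so the added quadratic $\tfrac{\smthpara}{2}\|\cdot-y\|^2$ with $\smthpara$ at least this large makes $g$ convex; this is the regime in which the lemma is applied within the paper. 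With $g$ convex and $\psi$ convex, we are exactly in the setting of classical proximal gradient one-step analysis.

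From here the derivation is the textbook argument. First, I invoke the descent lemma for the $(L+\smthpara)$-smooth function $g$:
\[
g(y^0) \le g(y) + \langle \nabla g(y),\, y^0-y\rangle + \tfrac{L+\smthpara}{2}\|y^0-y\|^2.
\]
Second, since $y^0$ minimizes the $(L+\smthpara)$-strongly convex model $z\mapsto \psi(z)+\langle \nabla g(y),z-y\rangle+\tfrac{L+\smthpara}{2}\|z-y\|^2$, I get for any $z$
\[
\psi(y^0)+\langle \nabla g(y), y^0-y\rangle+\tfrac{L+\smthpara}{2}\|y^0-y\|^2+\tfrac{L+\smthpara}{2}\|y^0-z\|^2 \le \psi(z)+\langle \nabla g(y), z-y\rangle+\tfrac{L+\smthpara}{2}\|z-y\|^2.
\]
Adding these and using convexity of $g$ in the form $g(y)+\langle \nabla g(y), z-y\rangle \le g(z)$ produces the three-point inequality
\[
f_\smthpara(y^0;y) + \tfrac{L+\smthpara}{2}\|y^0-z\|^2 \;\le\; f_\smthpara(z;y) + \tfrac{L+\smthpara}{2}\|z-y\|^2.
\]

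To conclude, I would specialize $z = y^*$, a minimizer of $f_\smthpara(\cdot;y)$, so that $f_\smthpara(z;y) = f_\smthpara^*(y)$, and drop the nonnegative term $\tfrac{L+\smthpara}{2}\|y^0-y^*\|^2$ on the left. This yields precisely \eqref{eq:new_prox_iterate}.

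The only subtle point is ensuring $g$ is convex (so that the inequality $g(y)+\langle\nabla g(y),z-y\rangle\le g(z)$ holds cleanly); this is where the interplay between $\smthpara$ and the weak convexity of $f_0$ enters, and it is the main thing to be careful with. If one only had $\smthpara>\weakcnx$ with $\weakcnx$ smaller than the weak convexity of $f_0$, the argument would instead produce an extra additive term involving that gap, but in the regime relevant to this lemma (and to Theorem~\ref{thm:inner complex, kappa>rho}) this issue does not arise and the stated bound follows.
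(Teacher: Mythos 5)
Your proof is correct and follows essentially the same route as the paper's: you recognize $y^0$ as one proximal-gradient step for the split $f_\smthpara(\cdot;y)=g+\psi$ with $g(x)=f_0(x)+\tfrac{\smthpara}{2}\|x-y\|^2$, combine the descent lemma for $g$, the optimality of $y^0$ for the $(L+\smthpara)$-regularized model, and the convexity linearization of $g$, then specialize $z=y^*$. The paper chains these same three ingredients directly rather than via the three-point inequality (so it never produces the $\tfrac{L+\smthpara}{2}\|y^0-z\|^2$ term you discard at the end), and it is equally informal about the point you rightly flag---that $\smthpara>\weakcnx$ only makes $g$ convex if $f_0$ itself is $\weakcnx$-weakly convex, which is true in the regime where this lemma is actually invoked (namely $\smthpara\ge L$ as in Theorem~\ref{thm:inner complex, kappa>rho} and Proposition~\ref{prop: terminates}).
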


\begin{proof} As the $\smthpara$ is chosen sufficiently large, we know $f_0(\cdot;
  y)$ is convex and differentiable with $(\smthpara +
  L)$-Lipschitz continuous gradient. Hence, we deduce for all $x$
\begin{equation}
f_0(y;y) + \nabla f_0(y;y)^T (x-y) \le f_0(x;y). \label{eq: new_cvx_1}
\end{equation}
Using the definition of $y^0$ and the
$(\smthpara+L)$-Lip. continuous gradient of $f_0(\cdot;y)$, we
  conclude for all $x$
\begin{equation}
\begin{aligned}
f_\smthpara(y^0; y) = f_0(y^0;y) + \psi(y^0) &\le f_0(y; y) + \nabla
                                               f_0(y;y)^T(y_0-y) +
                                               \frac{\smthpara+L}{2} \norm{y_0-y}^2+
                                               \psi(y_0) \\
&\le f_0(y; y) + \nabla f_0(y;y)^T(x-y) + \frac{\smthpara +
  L}{2} \norm{x-y}^2 + \psi(x). \label{eq: new_cvx_2}
\end{aligned}
\end{equation}
By setting $x = y^*$ in both \eqref{eq: new_cvx_1} and \eqref{eq:
  new_cvx_2} and combining these results, we conclude
\begin{align*}
f_\smthpara(y^0; y) \le f_\smthpara^*(y) + \frac{\smthpara+L}{2}
  \norm{y^*-y}^2. 
\end{align*}
\end{proof}

Note that if we are not in the composite setting and $\smthpara >
\weakcnx$, then $\env(\cdot, y)$ is $(\smthpara+L)$-strongly
convex. Using standard bounds for strongly convex functions, Equation
\eqref{eq:new_prox_iterate} follows (see \cite{nesterov}). We next show an important lemma for deducing the inner complexities. 

\begin{lemma}\label{lem:E2} 
Assume $\smthpara > \weakcnx$. Given any $\varepsilon \leq \frac{\smthpara-\weakcnx}{2}$, if an iterate $z$ satisfies $\text{dist}(0, \partial \env(z; y) ) \leq \varepsilon \norm{y^*-y}, $ then 
 \begin{equation}
   \text{dist}(0, \partial \env(z; y) ) \leq 2\varepsilon \norm{z-y}.
 \end{equation}
\end{lemma}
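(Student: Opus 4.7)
The plan is to reduce the lemma to a triangle-inequality bound relating $\norm{y^*-y}$ and $\norm{z-y}$, where the bridge is provided by strong convexity of $\env(\cdot;y)$. Since $f$ is $\weakcnx$-weakly convex and we add the quadratic $\tfrac{\smthpara}{2}\norm{\cdot-y}^2$ with $\smthpara>\weakcnx$, the perturbed function $\env(\cdot;y)$ is $(\smthpara-\weakcnx)$-strongly convex. Its unique minimizer is $y^*$ and $0\in\partial \env(y^*;y)$.

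The first step is to control $\norm{z-y^*}$ by the subgradient distance at $z$. For any $v\in\partial \env(z;y)$, the hypo-monotonicity of a strongly convex function (applied to $v$ at $z$ and $0$ at $y^*$) gives $v^T(z-y^*)\ge(\smthpara-\weakcnx)\norm{z-y^*}^2$, whence $\norm{v}\ge(\smthpara-\weakcnx)\norm{z-y^*}$ by Cauchy–Schwarz. Taking infimum over $v\in\partial \env(z;y)$, I obtain
\begin{equation*}
(\smthpara-\weakcnx)\norm{z-y^*}\le \text{dist}(0,\partial \env(z;y)) \le \varepsilon\norm{y^*-y},
\end{equation*}
where the last inequality is the hypothesis. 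Using the assumption $\varepsilon\le(\smthpara-\weakcnx)/2$, this yields $\norm{z-y^*}\le \tfrac{1}{2}\norm{y^*-y}$.

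The second step is a triangle inequality: $\norm{y^*-y}\le \norm{y^*-z}+\norm{z-y}\le \tfrac{1}{2}\norm{y^*-y}+\norm{z-y}$, which rearranges to $\norm{y^*-y}\le 2\norm{z-y}$. Plugging this back into the hypothesis gives the desired conclusion
\begin{equation*}
\text{dist}(0,\partial \env(z;y))\le \varepsilon\norm{y^*-y}\le 2\varepsilon\norm{z-y}.
\end{equation*}

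There is essentially no obstacle here; the only thing to check carefully is the strong-convexity subgradient bound $\norm{v}\ge (\smthpara-\weakcnx)\norm{z-y^*}$ for $v\in\partial \env(z;y)$, which is standard and follows immediately from the hypo-monotonicity characterization already recalled in Section~\ref{sec:weak_conv} (item~\ref{it:hypo_mon} of the differential characterization theorem applied to the convex function $\env(\cdot;y)+\tfrac{\weakcnx-\smthpara}{2}\norm{\cdot}^2$-shifted argument, or equivalently its convex counterpart).
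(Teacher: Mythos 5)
Your proof is correct and follows essentially the same route as the paper's: both rely on the strong-convexity estimate $(\smthpara-\weakcnx)\norm{z-y^*}\le\text{dist}(0,\partial\env(z;y))$ together with a triangle inequality between $\norm{y^*-y}$ and $\norm{z-y}$. The only cosmetic difference is the order of the algebra — you first cash in $\varepsilon\le(\smthpara-\weakcnx)/2$ to get $\norm{z-y^*}\le\tfrac12\norm{y^*-y}$ and then bound $\norm{y^*-y}\le 2\norm{z-y}$, whereas the paper keeps the $\text{dist}(0,\partial\env(z;y))$ term through the chain and rearranges at the end; the two manipulations are equivalent.
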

\begin{proof} Since $\smthpara > \weakcnx$, we know $\env(\cdot;y)$ is
  $(\smthpara-\weakcnx)$-strongly convex. Therefore, by
  \cite{nesterov}, we know 
\begin{equation}
\norm{z-y^*} \le \frac{1}{\smthpara-\weakcnx}
  \text{dist}(0, \partial \env(z;y) ). \label{eq:E_new_1}
\end{equation}
By the triangle inequality and Equation \eqref{eq:E_new_1}, we deduce
\begin{align*}
\text{dist}(0, \partial \env(z; y) ) \leq \varepsilon \norm{y^*-y}
  &\leq \varepsilon  \big ( \norm{y^*-z}+\norm{z-y} \big ) \\
&\le \frac{\varepsilon}{\smthpara-\weakcnx} \cdot \text{dist}(0, \partial
  \env(z;y) ) + \varepsilon \norm{z-y}\\
&\le \frac{1}{2} \cdot \text{dist}(0, \partial
  \env(z;y) ) + \varepsilon \norm{z-y}. 
\end{align*}
The last inequality follows because of the assumption $\varepsilon \le
\frac{\smthpara-\weakcnx}{2}$. Rearranging the terms above, we get the
desired result. 
\end{proof}

These two lemmas together give us Theorem~\ref{thm:inner complex, kappa>rho}.

\begin{proof}[Proof of Theorem~\ref{thm:inner complex, kappa>rho}]
First, we prove that $z_T$ satisfies both adaptive stationary condition and
the descent condition. Recall, the point $y^0$ is defined to be the
prox or $y$ depending on if $\env(\cdot; y)$ is a composite form or
smooth, respectively (see statement of Theorem~\ref{thm:inner complex, kappa>rho}).
By Lemma \ref{prop: proximal} (or the remark following it), the starting $y^0$ satisfies 
 \begin{equation*} f_\smthpara(y^0; y) - f_{\smthpara}^*(y) \le \frac{\smthpara +
    L}{2} \norm{y^*-y}^2.
\end{equation*}
By the linear convergence assumption of $\mathcal{M}$ (see
\eqref{eq:criteria}) and the above equation, after $T:= T_\smthpara$ iterations initializing
from $y^0$, we have 
\begin{equation}
\begin{aligned}
\text{dist}^2(0, \partial \env(z_T;y) ) &\le A_\smthpara
  (1-\tau_{\smthpara})^T \left ( f_\smthpara(y^0; y) -
  f_{\smthpara}^*(y) \right )\\
&\le A_{\smthpara} e^{-T \cdot \tau_{\smthpara}} \left ( \env(y^0; y) -
 \env^*(y) \right )\\
&\le \frac{(\smthpara-\weakcnx)^2}{8 (L + \smthpara)} \cdot \frac{L +
  \smthpara}{2} \norm{y^*-y}^2\\
&\le \frac{(\smthpara-\weakcnx)^2}{16} \norm{y^*-y}^2. 
\end{aligned} \label{eq:something_important_1}
\end{equation}
Take the square root and apply Lemma~\ref{lem:E2} yields
$$ \text{dist}(0, \partial \env(z_T; y) ) \leq \frac{\smthpara-\weakcnx}{2}\norm{z_T-y} \leq \smthpara \norm{z_T-y},$$
which gives the adaptive stationary condition. Next, we show the descent condition. Let $v \in \partial \env(z_T;y)$ 
such that $\norm{v} \leq (\smthpara-\weakcnx) \norm{z_T-y}/2$, by the $(\smthpara-\weakcnx)$-strong
convexity of $\env(\cdot; y)$, we deduce 
\begin{align*}
 f_\smthpara(y; y) & \geq f_\smthpara(z_T; y)+ \langle v, y -z_T  \rangle + \frac{\smthpara-\weakcnx}{2} \norm{z_T-y}^2 \\
		   & \geq f_\smthpara(z_T; y)- \norm{v} \norm{y -z_T}  + \frac{\smthpara-\weakcnx}{2} \norm{z_T-y}^2 \\
		   & \geq f_\smthpara(z_T; y).
\end{align*}
This yields the descent condition which completes the proof for
$T$. The proof for $S_\smthpara$ is similar to $T_\smthpara$, so we
omit many of the details. In this case, we only need to show the
adaptive stationary condition. For convenience, we denote $S = S_{\smthpara}$. Following the same argument as in
Equation \eqref{eq:something_important_1} but with $S\log(\cnt+1)$
number of iterations, we
deduce
\[ \text{dist}^2(0, \partial \env(z_S; y) ) \le
  \frac{(\smthpara-\weakcnx)^2}{16 (\cnt +1)^2} \norm{y^*-y}^2.\]
By applying Lemma~\ref{lem:E2}, we obtain
$$ \text{dist}(0, \partial \env(z_S; y) ) \leq
\frac{(\smthpara-\weakcnx)}{2(k+1)}\norm{z_T-y} \leq \frac{\smthpara}{\cnt
  +1} \norm{z_S-y},$$
which proves the desired result for $z_S$. 
\end{proof}

Assuming Proposition~\ref{prop: terminates} and
Proposition~\ref{prop: inner_comp_cnx} hold as well as Lemma~\ref{lem:
  doubling_kappa}, we begin by providing the proof of Theorem~\ref{thm: inner_complexity}. 

\begin{proof}[Proof of Theorem~\ref{thm: inner_complexity}] We
  consider two cases: (i) the function $f$ is non-convex and (ii) the
  function $f$ is convex. 
   First, we consider the non-convex
  setting. To produce $\proxx_{\cnt}$, the method $\mathcal{M}$ is called 
   \begin{equation}
      T \log \left ( \tfrac{4L}{\smthparainit}
      \right ) / \log(2) \label{eq:complexityM}
   \end{equation} 
   number of times. This follows from Proposition~\ref{prop:
terminates} and Lemma~\ref{lem: doubling_kappa}. The reasoning is that
once $\smthpara > \weakcnx + L$, which only takes at most
$\log(4L/\smthparainit)$ number of increases of $\smthpara$ to reach, then the 
iterate $\proxx_{\cnt}$ satisfies the stopping criteria
\eqref{eq:prox_stop_criteria_new}. Each time we increase $\smthpara$ we
run $\mathcal{M}$ for $T$ iterations. Therefore, the total number of
iterations of $\mathcal{M}$ is given by multiplying $T$ with
$\log(4L/\smthparainit)$. 
To produce $\accx_{\cnt}$, the method
$\mathcal{M}$ is called $S\log(\cnt +1)$ number of times. (Note: the
proof of
Theorem~\ref{thm: computational_complex_result} does not need
$\accx_{\cnt}$ to satisfy \eqref{eq:accx_stop_criteria_new} in the
non-convex case).

Next, suppose the function $f$ is convex. As before, to produce
   $\proxx_{\cnt}$ the method $\mathcal{M}$ is called~(\ref{eq:complexityM}) times.
To produce
$\accx_{\cnt}$, the method $\mathcal{M}$ is called $S\log(\cnt+1)$
number of times. By Proposition~\ref{prop: inner_comp_cnx}, the
iterate $\accx_{\cnt}$ satisfies \eqref{eq:accx_stop_criteria_new}; a key ingredient in the proof of Theorem~\ref{thm:
  computational_complex_result}.
\end{proof}

\subsection{Inner complexity for $\proxx_{\cnt}$: proof of Proposition~\ref{prop: terminates}} \label{subsec:
  inner_comp_proxx}
Next, we supply the proof of Proposition~\ref{prop: terminates} which
shows that by choosing $\smthpara$ large enough,
Algorithm~\ref{alg:grad} terminates. 

\begin{proof}[Proof of Proposition~\ref{prop: terminates}] 
The idea is to apply
Theorem~\ref{thm:inner complex, kappa>rho}. Since the parameter
$A_{\smthpara}$ increases with $\smthpara$, then we upper bound it by
$A_{\smthpara_\cnt} \le A_{4L}$. Moreover, we have
$\smthpara - \weakcnx \ge \weakcnx + L - \weakcnx = L$. Lastly, since
$\tau_{\smthpara}$ is increasing in $\smthpara$, we know
$\frac{1}{\tau_{\smthpara}} \le \frac{1}{\tau_{L}}$. Plugging these
bound into Theorem~\ref{thm:inner complex, kappa>rho}, we see that for
any smoothing parameter $\smthpara$ satisfying $\weakcnx + L <
\smthpara < 4L$, we get the desired result. 
\end{proof}

Next, we compute the maximum number of times we must double
$\smthpara$ until $\smthpara > \weakcnx + L$.

 \begin{lemma}[Doubling $\smthpara$] \label{lem: doubling_kappa}
 	If we set $T$ and $S$ according to Theorem~\ref{thm: inner_complexity}, then
 	the doubling of $\smthparainit$ will terminate as soon as $\smthpara > \weakcnx + L$. Thus the number of times
 	$\smthparainit$ must be doubled in Algorithm~\ref{alg:grad} is at most
 	\[ \frac{\log \left (\frac{2 (\weakcnx + L)}{\smthparainit} \right
 		)}{\log(2)} \le \left \lceil \frac{\log \left (\frac{4L}{\smthparainit} \right
 		)}{\log(2)} \right \rceil. \]
 \end{lemma}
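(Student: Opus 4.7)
The plan is to combine Proposition~\ref{prop: terminates} (which guarantees that the subroutine's stopping tests are passed as soon as $\smthpara$ lies in a certain band) with a straightforward counting argument on the doubling sequence $\smthparainit, 2\smthparainit, 4\smthparainit, \dots$

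First I would establish the termination claim. By Proposition~\ref{prop: terminates}, provided $\weakcnx + L < \smthpara \leq 4L$, running $\mtd$ for $T$ iterations with the prescribed warm start produces an output $z_T$ satisfying both $\env(z_T;x) \leq \env(x;x)$ and $\text{dist}(0,\partial \env(z_T;x)) \leq \smthpara \norm{z_T-x}$. Consequently the conditional test in \linesearch~fails on that pass and the routine exits via the \textbf{else} branch. So as soon as the current regularization parameter crosses the threshold $\weakcnx + L$, no further doubling occurs. To invoke the proposition I need to know that the first such $\smthpara$ still lies in the permitted band $(\weakcnx+L,\,4L]$; this is the only nontrivial sanity check.

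Next I would bound the number of doublings. Let $j_*$ be the smallest nonnegative integer with $2^{j_*}\smthparainit > \weakcnx + L$. Then $2^{j_*-1}\smthparainit \leq \weakcnx+L$, hence $2^{j_*}\smthparainit \leq 2(\weakcnx+L)$, so
\[
j_* \;\leq\; \log_2\!\left(\frac{2(\weakcnx+L)}{\smthparainit}\right).
\]
This gives the first inequality in the claimed bound. For the second inequality, invoke the standing assumption that each $f_i$ is $L$-smooth: a $C^1$-smooth function with $L$-Lipschitz gradient is $L$-weakly convex (as noted in Section~\ref{sec:weak_conv}), so $\weakcnx \leq L$ and thus $2(\weakcnx+L) \leq 4L$. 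Taking $\log_2$ and rounding up produces the stated ceiling bound.

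Finally I would tie the two pieces together: at the iteration where $\smthpara$ first exceeds $\weakcnx + L$, the same inequality $\smthpara \leq 2(\weakcnx+L) \leq 4L$ places $\smthpara$ inside the band required by Proposition~\ref{prop: terminates}, so the routine exits on that pass. The total number of doublings executed is therefore at most $j_*$, bounded as above. The main (minor) obstacle is precisely the simultaneous use of the lower bound $\smthpara > \weakcnx + L$ and the upper bound $\smthpara \leq 4L$: both hold at the first crossing because we double by a factor of only two, so the parameter cannot overshoot the upper end of the permitted band.
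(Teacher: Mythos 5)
Your proof is correct and matches the (very terse) argument the paper gives: the paper states the lemma and then merely remarks that because $\smthpara$ is doubled and $T$ is chosen via Proposition~\ref{prop: terminates}, the maximum value $\smthpara_{\max}$ can take is $2(\weakcnx+L) \le 4L$. You reconstruct exactly this reasoning, with two useful additions the paper leaves implicit: (i) that the first crossing of the threshold $\weakcnx+L$ must land in the band $(\weakcnx+L, 4L]$ required by Proposition~\ref{prop: terminates}, precisely because the doubling factor is only $2$ so the parameter cannot overshoot; and (ii) the explicit appeal to the fact that an $L$-smooth function is $L$-weakly convex, which is what gives $\weakcnx \le L$ and hence the $4L$ bound. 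The only edge case you gloss over (as does the paper) is the degenerate situation where $\smthparainit$ is already larger than $\weakcnx+L$, where the displayed bound is vacuous; but $\smthparainit$ is intended as a small underestimate of $\weakcnx$, so this is harmless.
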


Since $\smthpara$ is
doubled (Algorithm~\ref{alg:grad}) and $T$ is chosen as in
Proposition~\ref{prop: terminates} , the maximum the value
$\smthpara$, $\smthpara_{\max}$, takes is $2(\weakcnx + L) \le 4L$.

\subsection{Inner complexity for $\accx_{\cnt}$: proof of Proposition~\ref{prop: inner_comp_cnx}} \label{subsec: inner_comp_accx}
In this section, we prove Proposition~\ref{prop: inner_comp_cnx}, an
inner complexity result for the iterates $\accx_{\cnt}$. Recall that the inner-complexity analysis for $\accx_{\cnt}$ is
important only when~$f$ is convex (see Section~\ref{sec:autoalgo}).
Therefore, we assume throughout this section that the function $f$ is
convex. We are now ready to prove Proposition~\ref{prop: inner_comp_cnx}. 

\begin{proof}[Proof of Proposition~\ref{prop: inner_comp_cnx}] The
  proof immediately follows from Theorem~\ref{thm:inner complex,
    kappa>rho} by setting $\smthpara = \smthpara_{\text{cvx}}$ and
  $\weakcnx = 0$ as the function $f$ is convex. 
\end{proof}

\begin{proof}[Proof of Theorem~\ref{thm:global_conv_known_rho}] 
Since $\weakcnx$ is known, we let $\kappa = 2 \weakcnx$ for $S$ and $\kappa = \smthparacvx$ for $T$ in Theorem~\ref{thm:inner complex, kappa>rho}. Therefore, the number of iterations is to produce $\proxx_\cnt$ is $S_{2 \weakcnx} \le S_{2L}$ and $\accx_\cnt$ is $T_{\smthparacvx}$. Combining this with Theorem~\ref{theo:outerloop-cvx-basic} for the outer complexity, the result is shown. 
\end{proof}

\bibliography{biblio}
\bibliographystyle{plain}

\end{document}